\documentclass[twoside,11pt]{article}

% Any additional packages needed should be included after jmlr2e.
% Note that jmlr2e.sty includes epsfig, amssymb, natbib and graphicx,
% and defines many common macros, such as 'proof' and 'example'.
%
% It also sets the bibliographystyle to plainnat; for more information on
% natbib citation styles, see the natbib documentation, a copy of which
% is archived at http://www.jmlr.org/format/natbib.pdf

\usepackage{jmlr2e}

\newtheorem{mytheo}{Theorem}

\sloppy

\usepackage[show]{ed}
\usepackage{amsmath}
\usepackage{amsfonts}
\usepackage{amssymb}
\usepackage{graphicx}
\usepackage{verbatim}
\usepackage{mdwlist}
\usepackage{topcapt}
%\usepackage[numbers,sort&compress]{natbib}

% Definitions of handy macros can go here
\usepackage{color}
\usepackage{xcolor,colortbl}
\usepackage{booktabs}
\usepackage{url}
\usepackage{caption}
\usepackage{subfig}
\usepackage{enumerate}
\usepackage{comment}

\usepackage{algorithm}
\usepackage[noend]{algpseudocode}

\newtheorem{mycol}{Corollary}

\algnewcommand\algorithmicinput{\textbf{INPUT:}}
\algnewcommand\INPUT{\item[\algorithmicinput]}

\newcommand{\lsize}{n}
\newcommand{\osize}{m}
\newcommand{\qsize}{q}
\newcommand{\odsize}{d}
\newcommand{\qdsize}{r}
\newcommand{\ispace}{\mathcal{X}}
\newcommand{\taskspace}{\mathcal{T}}
\newcommand{\objspace}{\mathcal{D}}
\newcommand{\anyspace}{\mathcal{X}}

\newcommand{\kernelf}{k}
\newcommand{\gkernelf}{g}
\newcommand{\kkernelf}{\Gamma}
\newcommand{\dkernelm}{\bm{K}}
\newcommand{\tkernelm}{\bm{G}}
\newcommand{\kkernelm}{\bm{\Gamma}}
\newcommand{\regparam}{\lambda}
\newcommand{\idmatrix}{\bm{I}}
\newcommand{\trace}{\textnormal{tr}}
\newcommand{\transpose}{^\top}
\newcommand{\bm}[1]{\mathbf{#1}}
\newcommand{\ve}{\textnormal{vec}}
\newcommand{\mat}{\textnormal{mat}}
\newcommand{\diagv}{\textnormal{diag}_v}
\newcommand{\diagm}{\textnormal{diag}_m}

\newcommand{\predfun}{f}
\newcommand{\filterfun}{\varphi}
\newcommand{\objset}{D}
\newcommand{\taskset}{T}
\newcommand{\labelvec}{\bm{y}}

\newcommand{\eq}[1]{\begin{align*}#1\end{align*}}
\newcommand{\eqn}[1]{\begin{align}#1\end{align}}

\DeclareMathOperator*{\argmin}{argmin}

% Heading arguments are {volume}{year}{pages}{submitted}{published}{author-full-names}

%\jmlrheading{1}{2000}{1-48}{4/00}{10/00}{Marina Meil\u{a} and Michael I. Jordan}

% Short headings should be running head and authors last names

\firstpageno{1}

\begin{document}

\title{Efficient Pairwise Learning Using Kernel Ridge Regression:\\an Exact Two-Step Method}

\author{\name Michiel Stock \email michiel.stock@ugent.be \\
       \addr KERMIT, Department of Mathematical Modelling, Statistics and Bioinformatics\\ Ghent University\\
Coupure links 653, B-9000 Ghent, Belgium 
       \AND
       \name Tapio Pahikkala \email aatapa@utu.fi \\
       %\addr University of Turku and Turku Centre for Computer Science,\\ Joukahaisenkatu 3-5 B, FIN-20520, Turku, Finland
       %\AND
       \name Antti Airola \email ajairo@utu.fi \\
       \addr University of Turku and Turku Centre for Computer Science,\\ Joukahaisenkatu 3-5 B, FIN-20520, Turku, Finland    
       \AND  
       \name Bernard De Baets \email bernard.debaets@ugent.be \\
       %\addr Department of Mathematical Modelling, Statistics and Bioinformatics\\ Ghent University\\
%Coupure links 653, B-9000 Ghent, Belgium \\
\name Willem Waegeman \email willem.waegeman@ugent.be \\
       \addr KERMIT, Department of Mathematical Modelling, Statistics and Bioinformatics\\ Ghent University\\
Coupure links 653, B-9000 Ghent, Belgium         }

\editor{}

\maketitle

\begin{abstract}%   <- trailing '%' for backward compatibility of .sty file
Pairwise learning or dyadic prediction concerns the prediction of properties for pairs of objects. It can be seen as an umbrella covering various machine learning problems such as matrix completion, collaborative filtering, multi-task learning, transfer learning, network prediction and zero-shot learning. In this work we analyze kernel-based methods for pairwise learning, with a particular focus on a recently-suggested two-step method. 
We show that this method offers an appealing alternative for commonly-applied Kronecker-based methods that model dyads by means of pairwise feature representations and pairwise kernels. In a series of theoretical results, we establish correspondences between the two types of methods in terms of linear algebra and spectral filtering, and we analyze their statistical consistency. In addition, the two-step method allows us to establish novel algorithmic shortcuts for efficient training and validation on very large datasets. Putting those properties together, we believe that this simple, yet powerful method can become a standard tool for many problems. Extensive experimental results for a range of practical settings are reported. 

\end{abstract}

\begin{keywords}
dyadic prediction, pairwise learning, transfer learning, kernel ridge regression, kernel methods, zero-shot learning
\end{keywords}

\section{Introduction to pairwise prediction}

Many real-world machine learning problems can be naturally represented as pairwise learning or dyadic prediction problems. In contrast to more traditional learning settings, the goal here consists of making predictions for pairs of objects, each of them being characterized by a feature representation. Amongst others,  applications of that kind emerge in biology (e.g.\ predicting mRNA-miRNA interactions), medicine (e.g.\ design of personalized drugs), chemistry (e.g. prediction of binding between two types of molecules), ecology (e.g.\ prediction of host-parasite interactions), social network analysis (e.g.\ finding links between persons) and recommender systems (e.g.\ recommending personalized products to users). 

Pairwise learning has strong connections with many other machine learning settings. Especially a link with multi-task learning can be advocated, by calling the first object of the dyad an `instance' and the second object a `task'. As a typical multi-task learning example, consider ten schools providing the grades for a selection of their students. Suppose we want to predict the scores for new students in a school. The most straightforward path would be to fit a different model for each school, which uses features of the students. Intuitively, we might do better by building one model which takes all information into account. This can be done by learning a general model which has both the student and the school as input. The prediction function is thus pairwise: it takes both an instance and a task as input. In multi-task learning, the underlying idea for making the distinction between instances and tasks is that the feature description of the instances is often considered as more informative for making a prediction, while the feature description for the tasks is mainly used to steer learning into the right direction. In the majority of multi-task learning methods, a feature description for tasks is even not given, though often the idea that the tasks can be clustered or are located on a low-dimensional manifold is exploited.

%In many problem settings however, a good representation is available for both objects. For example, during link prediction the same type of feature description is used for both objects in the dyad. In those cases, referring to instances or tasks is only a matter of convention.

In this work we adopt a multi-task learning terminology for pairwise learning. Formally, the training set is assumed to consist of a set $S=\{(\bm{d}_h, \bm{t}_h, y_h) \mid h=1,\ldots, \lsize\}$ of labeled instance-task pairs. As such, each training input is a labeled dyad $(\bm{d}_h, \bm{t}_h)$, where $\bm{d}_h\in\objspace$ and $\bm{t}_h\in\taskspace$ are the feature representations of the instances and the tasks, respectively, and $y_h$ is the corresponding label that is either continuous or discrete. Similarly, $\objspace$ and $\taskspace$ are the corresponding spaces of instances and tasks. In pairwise learning a model $\predfun(\bm{d},\bm{t})$ is trained to make predictions for (possibly new) instances and tasks. 

%As a non-trivial relation between the tasks is often assumed, non-linear methods such as random forests are sometimes used, e.g.~\cite{Geurts2007}. 
Various types of learning methods can be used to solve prediction problems of that kind. Kernel methods are very popular in bioinformatics, as indicated by the large number of applications in bioinformatics. They can be easily employed for pairwise learning by defining so-called pairwise kernels $\Gamma((\bm{d}, \bm{t}), (\bar{\bm{d}}, \bar{\bm{t}}))$, which measure the similarity between two dyads $(\bm{d}, \bm{t})$ and $(\bar{\bm{d}}, \bar{\bm{t}})$. Kernels of that kind can be used in tandem with any conventional kernelized learning algorithm, such as support vector machines, kernel ridge regression and kernel Fisher discriminant analysis. In case of regression or classification tasks, one obtains prediction functions of the following form:
\eq{
\predfun(\bm{d},\bm{t}) = \sum_{h=1}^\lsize \alpha_k\Gamma((\bm{d}, \bm{t}), ({\bm{d}}_h, {\bm{t}}_h))\,,
}
with $\alpha_1,\ldots,\alpha_\lsize$ the dual parameters which are obtained by the learning algorithm -- see subsequent section for references and details. 
%A popular pairwise kernel is the Kronecker kernel, where $\Gamma((\bm{d}, \bm{t}), (\bar{\bm{d}}, , \bar{\bm{t}})) = \kernelf(\bm{d}, \bar{\bm{d}})\gkernelf(\bm{t}, \bar{\bm{t}})$, with $\kernelf(\cdot,\cdot)$ and $\gkernelf(\cdot,\cdot)$ the kernels defined over the instances and tasks respectively. 

In this work we analyze a simple, yet elegant two-step method as an alternative for models based on pairwise kernels. In the first step, a kernel ridge regression model is trained on auxiliary tasks, and adopted to predict labels for the related target task. Then, in a second step, a second model is constructed, by employing kernel ridge regression on the target data, augmented with the predictions of the first phase. We first presented this method in a conference paper~\citep{Pahikkala2014}. One year later, the same method was independently proposed as a tool to solve zero-shot learning problems~\citep{Romera-paredes2015}. From a different but related perspective, \cite{Schrynemackers2015} recently also proposed a similar method for biological network inference, but here tree-based methods instead of kernel methods were used as base learners. Those three papers have confirmed that the two-step method can obtain state-of-the-art performance on standard benchmarks in various pairwise learning settings. In this work, we will not put the emphasis on demonstrating the performance of this method. In contrast, our main focus is rather on a theoretical, computational and experimental justification of the two-step method, since we believe that it has a lot of potential. It will become clear that our two-step method is much simpler to implement than efficient Kronecker models, while manifesting more flexible model selection capabilities. It will also be shown to be applicable to more heterogeneous transfer learning settings.

This work can be subdivided in several parts that describe different aspects of the two-step method. Before going into mathematical details, Section~\ref{related_settings} gives an overview of related settings in pairwise learning. We identify four different prediction settings, for which it is crucial to make a subdivision further in this paper. In Section~\ref{dyadicprediciton} different kernel ridge regression-based methods that can be used for pairwise learning are described, including the two-step method. Subsequently, we show in Section~\ref{theoreticalconsiderations} via linear algebra and spectral filtering that this approach is closely related to other kernel methods, but with slightly different regularization mechanisms. We also formally prove  universal approximation properties for two-step kernel ridge regression. In Section~\ref{computationalaspects} we use a specific decomposition to derive novel algorithmic shortcuts for leave-one-out cross-validation and for updating existing models with new training instances or tasks. A very important merit of the two-step method will be that there are closed-form shortcuts for cross-validation for any of the prediction settings discussed in Section~\ref{related_settings}, in contrast to the other methods. In the experiments (Section~\ref{experimentalsection}) we consider several dyadic prediction problems, studying generalization for the different settings as well as efficient implementations for training and testing the models. Our results show that the two-step method can be highly beneficial when there is no labeled data at all, or only a small amount of labeled data available for the target task, while in settings where there is a significant amount of labeled data available for the target task, an independent-task model suffices. Furthermore, we showcase the tools for performing cross-validation for the different settings and updating the model with extra instances or tasks. 

\section{The different prediction settings in pairwise learning}\label{related_settings}

In this section we make an important distinction between several types of pairwise learning settings. This will allow us to give a brief overview of related methods, and to perform a more detailed analysis in the upcoming sections. In pairwise learning it is extremely important to implement appropriate training and evaluation procedures. For example, in a large-scale meta-study about biological network identification it was found that these concepts are vital to correctly evaluate pairwise learning models~\citep{Park2012}. Given $\bm{d}$ and $\bm{t}$ as the feature representations of the instances and tasks, respectively, four settings can be distinguished:
\begin{itemize}
\item {\bf Setting A}: Both $\bm{d}$ and $\bm{t}$ are observed during training, as parts of different dyads, but the label of the dyad $(\bm{d},\bm{t})$ must be predicted. %\\
\item {\bf Setting B}: Only $\bm{t}$ is known during training, while $\bm{d}$ is not observed in any dyad, and the label of the dyad $(\bm{d},\bm{t})$ must be predicted.  %\\
\item {\bf Setting C}: Only $\bm{d}$ is known during training, while $\bm{t}$ is not observed in any dyad, and the label of the dyad $(\bm{d},\bm{t})$ must be predicted.  %\\
\item {\bf Setting D}: Neither $\bm{d}$ nor $\bm{t}$ occur in any training dyad, but the label of the dyad $(\bm{d},\bm{t})$ must be predicted. %\\
\end{itemize}

\begin{figure}[t]
   \centering
   \includegraphics[width=0.8\textwidth]{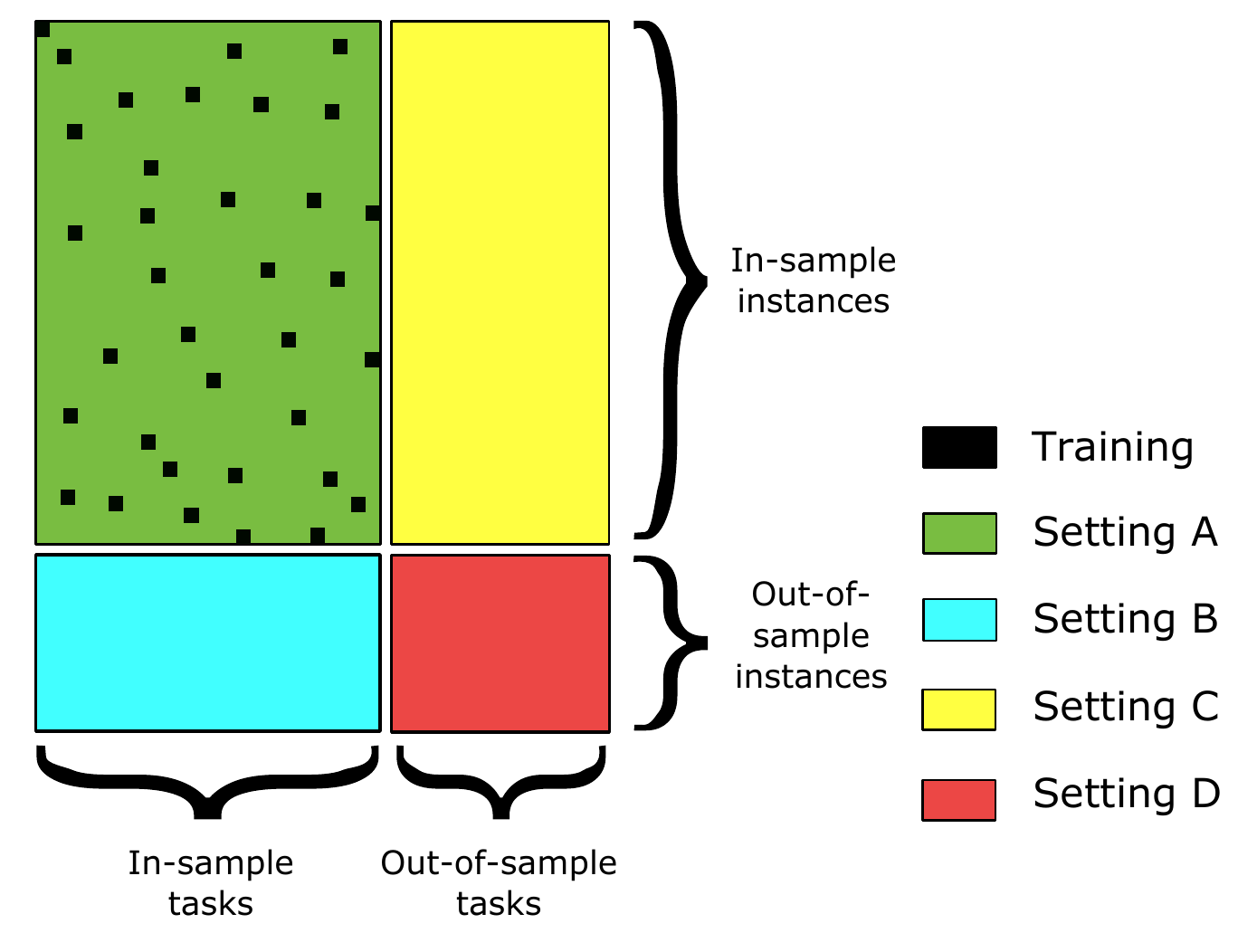} % requires the graphicx package
   \caption{An overview of different prediction settings in pairwise learning. We will mainly deal with complete datasets, where for each combination of train instance and task we have exactly one observation in the training data. See main text for links with different machine learning settings.}
   \label{conceptrelations}
\end{figure}

These settings are represented graphically in Figure~\ref{conceptrelations}. Setting A, the matrix completion problem, is of all four settings by far the most studied setting in the machine learning literature. Motivated by applications in collaborative filtering and link prediction, matrix factorization and related techniques are often applied to complete partially observed matrices. Missing values represent $(\bm{d},\bm{t})$ combinations that are not observed during training - see e.g. \cite{Larochelle2008} for a review. Many of these matrix completion algorithms do not incorporate side features (features of the instances and tasks) and make assumptions on the structure of the true label matrix by, for example, assuming that the completed matrix is low rank or has a low nuclear norm~\citep{Candes2008,Mazumder2010}. Recently, a framework based on bipartite graphs was proposed, which exploits the network structure for transductive link prediction~\citep{Liu2015a}. If one uses the notion of `latent features' (which can be implemented by means of a delta kernel), one can exploit both the structure of the label matrix as well as the side-features. Some interesting work has been done to unify these two approaches, using both the structure of the matrix, as well as features, e.g.~\cite{Basilico2004,Abernethy2008,Menon2011}. 

Settings B and C are very similar, and a variety of machine learning methods can be applied to these settings. From a recommender systems viewpoint, those settings resemble the cold start problem (new user or new item), for which hybrid and content-based filtering techniques are often applied -- see e.g.~\cite{Adams2010,Fang2011, Menon2010, Shan2010,Zhou2012} for a non-exhaustive list. From a bioinformatics viewpoint, Settings B and C are often analyzed using graph-based methods that take the structure of a biological network into account -- see e.g.~\cite{Schrynemackers2013} for a recent review. When the features of $\bm{t}$ are negligible or unavailable, while those of $\bm{d}$ are informative, Setting B can be interpreted as a multi-label classification problem (in case of binary labels), a multi-output regression problem (in case of continuous labels) or, more generally, as a multi-task learning problem. Here, most techniques encode dependency in the tasks by means of a suitable loss function or by jointly regularizing the different tasks~ \citep{Dembczynski2012}. Prior knowledge on the tasks can be incorporated by using a feature description of the tasks that captures the relations between the tasks. Setting C is closer to transfer learning, in which one wants to generalize to new tasks. Here as well, a large number of applicable methods is available in the literature~\citep{Pan2010a}. 

Matrix factorization and hybrid filtering strategies are not applicable to Setting D. We will refer to this setting as the \emph{zero-shot learning} problem. This setting finds important applications in domains such as bioinformatics and chemistry -- see experiments. Compared to the other three settings, Setting D has received less attention in the literature (but it is gaining rapidly in popularity, see e.g. \cite{Larochelle2008,park2009pairwise,Menon2010,Palatucci2009,pahikkala2013conditional,Rohrbach2011a}). In the experimental section, we will investigate the transition phase between Settings C and D, when $\bm{t}$ occurs very few times in the training dataset, while $\bm{d}$ of the dyad $(\bm{d},\bm{t})$ is only observed in the prediction phase. We refer to this setting as an \emph{almost zero-shot learning} problem. 

Full and almost zero-shot learning problems can only be solved by considering feature representations of dyads. For Setting D Kronecker-based kernel methods are often employed~\citep{Vert2007,Brunner2012}. They have been successfully applied in order to solve problems such as product recommendation~\citep{Basilico2004, park2009pairwise}, enzyme annotation~\citep{Stock2014}, prediction of protein-protein~\citep{Benhur2005,Kashima2009} or protein-nucleic acid~\citep{Pelossof2015} interactions, drug design \citep{Jacob2008}, prediction of game outcomes \citep{pahikkala2010} and document retrieval \citep{pahikkala2013conditional}. For classification and regression problems, a standard recipe consists of plugging pairwise kernels in support vector machines, kernel ridge regression (KRR), or any other kernel method. Efficient optimization approaches based on gradient descent~\citep{park2009pairwise,Kashima2009} and closed-form solutions \citep{pahikkala2013conditional} have been proposed. In the next section, we will review those methods more in detail. 

%\begin{comment}
%\subsection{Formulation as a transfer learning problem}
%As discussed above, dyadic prediction is closely related to several subfields of machine learning. Further on in this article we decide to adopt a multi-task learning or transfer learning terminology, using $\bm{d}$ and $\bm{t}$ to denote the feature representations of instances and tasks, respectively. From this viewpoint, Setting C corresponds to a specific instantiation of a traditional transfer learning scenario, in which the aim is to transfer knowledge obtained from already learned \emph{auxiliary tasks} to the \emph{target task} of interest \citep{pan2010surveytransfer}. Stretching the concept of transfer learning even further, in the case of so-called \emph{zero-data learning}, one arrives at Setting D, which is characterized by no available labeled training data for the target task \citep{Larochelle2008zerodata}. If the target task is unknown during the training time, the learning method must be able to generalize to it ``on the fly'' at prediction time. The only available data here is coming from auxiliary training tasks.   \ednote{add somewhere discussion about ALS}
%\end{comment}

\section{Pairwise learning using ridge-regression-based methods}\label{dyadicprediciton}

%Adopting a multi-task learning methodology, the training set is assumed to consist of a set $S=\{(\bm{x}_h,y_h) \mid h=1,\ldots, \lsize\}$ of labeled instance-task pairs. Here, $\bm{y}\in\mathbb{R}^\lsize$ is the vector of the real-valued labels. We assume that each training input can be represented as $\bm{x}=(\bm{d},\bm{t})$, where $\bm{d}\in\objspace$ and $\bm{t}\in\taskspace$ are the instances and tasks, respectively, and $\objspace$ and $\taskspace$ are the corresponding spaces of instances and tasks. 

In this section an overview of kernel methods for pairwise learning is given, with a particular focus on the two-step method that will be further analyzed in upcoming sections. To this end, let us further extend the mathematical notation that was introduced before. Let $\objset=\{\bm{d}_i \mid i=1,\dots,\osize\}$ and $\taskset=\{\bm{t}_j \mid j=1,\ldots, \qsize\}$ denote, respectively, the sets of distinct instances and tasks encountered in the training set with $\osize=\arrowvert\objset\arrowvert$ and $\qsize=\arrowvert\taskset\arrowvert$. We say that the training set is \emph{complete} if it contains every instance-task pair with instance in $\objset$ and task in $\taskset$ exactly once. For complete training sets, we introduce a further notation for the matrix of labels $\bm{Y}\in\mathbb{R}^{\osize\times\qsize}$, so that its rows are indexed by the instances in $\objset$ and the columns by the tasks in $\taskset$.

The prediction function will be denoted as $\predfun(\bm{d},\bm{t})$, with $\predfun_j(\bm{d})$ the model for task $\bm{t}_j$. In some cases we will denote the learning algorithm in superscript, when this is omitted the model should be clear from the context or the statement is generally valid. For notational simplicity and generality, we will mostly use the dual form in this work. The matrix of parameters for the different models will be denoted as $\bm{A}=[a_{ij}]\in\mathbb{R}^{\osize\times\qsize}$, where each column corresponds to a parameter set for a different task. $\bm{A}\transpose$ denotes the transpose of $\bm{A}$. In some cases we need to work with the parameters transformed to a column vector: $\ve (\mathbf{A}) =\boldsymbol{\alpha}=[\alpha_k]\in\mathbb{R}^{\osize\qsize}$. Finally, let $\bm{B}_{i.}$ denote the $i$-th row and $\bm{B}_{.j}$ the $j$-th column of the matrix $\bm{B}$; we also use $\bm{B}_{ij}$ to refer to the element at the $i$-th row and $j$-th column of $\bm{B}$. %$\bm{B}_{-i.}$, $\bm{B}_{-.j}$ and $\bm{B}_{-ij}$ refers to submatrices of $\bm{B}$ with respectively the $i$-th row, $j$-th column or both removed.\ednote{check if needed to be removed}

%In full and almost full cold start prediction problems, this matrix will not contain any target task information.

\subsection{Independent-task kernel ridge regression}

Suppose that the training set is complete and that for each task we have $\osize$ labeled dyads $\objset=\{\bm{d}_i\}_{i=1}^{\osize}$. Let $\bm{Y}_{.j}\in\mathbb{R}^\osize$ be the labels of task $\bm{t}_j$ and $\kernelf(\cdot,\cdot)$ be a suitable kernel function which quantifies the similarity between the different instances. Since a separate and independent model is trained for each task, we will denote this setting as independent task (IT) kernel ridge regression. For each task $\bm{t}_j$, one would like to learn a function of the form
\eq{
\predfun_j^{\mathrm{IT}}(\bm{d}) = \sum_{i=1}^{\osize}
a_{ij}^\mathrm{IT} \kernelf(\bm{d},\bm{d}_i) \,,
}
with $a_{ij}^\mathrm{IT}$ parameters that minimize a suitable objective function. In the case of kernel ridge regression (KRR), this objective function is the squared loss with an $L_2$-complexity penalty. The parameters for the individual tasks using KRR can be found jointly by minimizing the following objective function~\citep{Bishop2006}:
\eqn{
J(\bm{A}^\mathrm{IT}) = \trace [{(\dkernelm\bm{A}^\mathrm{IT} - \bm{Y})\transpose(\dkernelm\bm{A}^\mathrm{IT} - \bm{Y})}] + \regparam_d \trace[{\bm{A}^\mathrm{IT}}\transpose\dkernelm\bm{A}^\mathrm{IT}]\, , \label{KRRpenalty}
}
with $\bm{A}^\mathrm{IT}=[a_{ij}^\mathrm{IT}]\in\mathbb{R}^{\osize\times\qsize}$ and $\dkernelm \in\mathbb{R}^{\osize\times\osize}$ the Gram matrix associated with the kernel function $\kernelf(\cdot,\cdot)$ for the instances. For simplicity, we assume the same regularization parameter $\regparam_d$ for each task, though extensions to different penalties for different tasks are straightforward. This basic setting assumes no crosstalk between the tasks as each model is fitted independently. The optimal coefficients that minimize Eq.\ (\ref{KRRpenalty}) can be found by solving the following linear system:
\eqn{\label{systemKRR}
\left(\dkernelm+\regparam_d\idmatrix\right)\bm{A}^\mathrm{IT}=\bm{Y}\,.
}
Using the singular value decomposition of the Gram matrix, this system can be solved for any value of $\regparam_d$ with a time complexity of $\mathcal{O}(\osize^3 +\osize^2 \qsize)$.

\subsection{Pairwise and Kronecker kernel ridge regression}

Suppose one has prior knowledge about which tasks are more similar, quantified by a kernel function $\gkernelf(\cdot, \cdot)$ defined over the tasks. Several authors (see \citet{Alvarez2012,Baldassarre2012} and references therein) have extended KRR to incorporate task correlations via matrix-valued kernels. However, most of this literature concerns kernels for which the tasks are fixed at training time. An alternative approach, allowing for the generalization to new tasks more straightforwardly by means of such a task kernel, is to use a pairwise kernel $\kkernelf\left(\left(\bm{d},\bm{t}\right),\left(\overline{\bm{d}},\overline{\bm{t}}\right)\right)$. Pairwise kernels provide a prediction function of type
\eqn{\label{tensorprediciton}
\predfun(\bm{d},\bm{t}) &= \sum_{h=1}^{\lsize}
\alpha_h\kkernelf\left(\left(\bm{d},\bm{t}\right),\left({\bm{d}}_h,{\bm{t}}_h\right)\right) \,,
}
where $\boldsymbol{\alpha}= [\alpha_h]$ are parameters that minimize the following objective function similar to the one used for independent task KRR:
\eqn{\label{tikhonov}
J(\bm{\boldsymbol\alpha})=(\bm{\kkernelm}\bm{\boldsymbol\alpha} -\bm{y})\transpose(\bm{\kkernelm}\bm{\boldsymbol\alpha} -\bm{y})+\regparam\bm{\boldsymbol\alpha }\transpose\kkernelm\bm{\boldsymbol\alpha} \,.
}
The minimizer can also be found by solving a system of linear equations:
\eqn{\label{kronsystem}
\left(\kkernelm+\regparam\idmatrix\right)\bm{\boldsymbol\alpha} =\bm{y}\,,
}
%A special case of Kronecker kernel ridge regression emerges when the objective function (\ref{tikhonov}) has no regularization term:
%\eqn{\label{kkols}
%J(\bm{\boldsymbol\alpha}^\mathrm{KKOLS} )=(\bm{\kkernelm}\bm{\boldsymbol\alpha}^\mathrm{KKOLS} -\bm{y})\transpose(\bm{\kkernelm}\bm{\boldsymbol\alpha}^\mathrm{KKOLS} -\bm{y})\,.
%}
%We will refer to this as ordinary Kronecker kernel least-squares. A minimizer of Eq.~(\ref{kkols}) is given by 
%\eq{
%\kkernelm\bm{\boldsymbol\alpha}^\mathrm{KKOLS} =\bm{y}\,.
%}
with $\kkernelm$ the Gram matrix. The most commonly used pairwise kernel is the Kronecker product pairwise kernel \citep{Basilico2004,oyama2004using,Benhur2005,park2009pairwise,Hayashi2012,Bonilla2007,pahikkala2013conditional}. This kernel is defined as
\eqn{\label{pairwisekernel}
\kkernelf^\mathrm{KK}\left(\left(\bm{d},\bm{t}\right),\left(\overline{\bm{d}},\overline{\bm{t}}\right)\right)=\kernelf\left(\bm{d},\overline{\bm{d}}\right)\gkernelf\left(\bm{t},\overline{\bm{t}}\right)
}
as a product of the data kernel $\kernelf(\cdot,\cdot)$ and the task kernel $\gkernelf(\cdot,\cdot)$. Many other variations of pairwise kernels have been considered to incorporate prior knowledge on the nature of the relations (e.g.~\cite{Vert2007,pahikkala2010,Waegeman2012,pahikkala2013conditional}) or for more efficient calculations in certain settings, e.g.~\citep{Kashima2010}.

Let $\bm{\tkernelm}\in\mathbb{R}^{\qsize\times\qsize}$ be the Gram matrix for the tasks. Then, for a complete training set, the Gram matrix for the instance-task pairs is the Kronecker product $\kkernelm=\bm{\tkernelm}\otimes\bm{\dkernelm}$. Often it is infeasible to use this kernel directly due to its large size. When the dataset is complete, the prediction function (Eq.~(\ref{tensorprediciton})) can be written as
\eqn{
\predfun^\mathrm{KK}(\bm{d}, \bm{t})=\sum_{i=1}^\osize \sum_{j=1}^\qsize a_{ij}^\mathrm{KK} \kernelf(\bm{d},\bm{d}_i) \gkernelf(\bm{t},\bm{t}_j)\,.\label{pairwisedual}
}
%\eqn{\label{tensorprediciton}
%\predfun^\mathrm{KK}(\bm{d},\bm{t}) &= \sum_{i=1}^{\lsize}
%\alpha_i^\mathrm{KK} \kkernelf\left(\left(\bm{d},\bm{t}\right),\left({\bm{d}}_i,{\bm{t}}_i\right)\right) \,,%\\
%%\predfun^\mathrm{KK}(\bm{x}) &= \kkernelm \boldsymbol{\alpha} \,,
%}
%where $\boldsymbol{\alpha}^\mathrm{KK} = [\alpha_i^\mathrm{KK}]$ are parameters that minimize the following objective function similar as used for independent task KRR:
%\begin{align}\label{tikhonov}
%J(\bm{\boldsymbol\alpha}^\mathrm{KK} )=(\bm{\kkernelm}\bm{\boldsymbol\alpha}^\mathrm{KK} -\bm{y})\transpose(\bm{\kkernelm}\bm{\boldsymbol\alpha}^\mathrm{KK} -\bm{y})+\regparam\bm{\boldsymbol\alpha^\mathrm{KK} }\transpose\kkernelm\bm{\boldsymbol\alpha}^\mathrm{KK} \,.
%\end{align}
%The minimizer can also be found by solving a system of linear equations:
%\eqn{\label{kronsystem}
%\left(\kkernelm+\regparam\idmatrix\right)\bm{\boldsymbol\alpha}^\mathrm{KK} =\bm{y}\,.
%}
The matrix $\mathbf{F}$ containing the predictions for the training data using a pairwise kernel can be obtained by a linear transformation of the training labels:
\eqn{
\ve(\mathbf{F}) &= \kkernelm\left(\kkernelm+\regparam\idmatrix\right)^{-1}\ve(\bm{Y})\label{PAIRWISEREEST}\\
&=\bm{H}^\kkernelf\ve(\bm{Y})\label{hatk} \,,
}
with $\ve$ the vectorization operator which stacks the columns of a matrix into a vector. In the statistical literature $\bm{H}^\kkernelf=\kkernelm\left(\kkernelm+\regparam\idmatrix\right)^{-1}$ is denoted as the so-called hat matrix~\citep{hastie01statisticallearning}, which transforms the measurements into estimates.
As a special case of the Kronecker KRR, we also retrieve ordinary Kronecker kernel least-squares (OKKLS), when the objective function of Eq.~(\ref{tikhonov}) has no regularization term (i.e.~$\lambda=0$).

Several authors have pointed out that, while the size of the system in Eq.~(\ref{kronsystem}) is considerably large, its solutions for the Kronecker product kernel can be found efficiently via tensor algebraic optimization \citep{VanLoan2000,martin2006shiftedkron,Kashima2009,Raymond2010scalable,pahikkala2013conditional,Alvarez2012}. This is because the eigenvalue decomposition of a Kronecker product of two matrices can easily be computed from the eigenvalue decomposition of the individual matrices. The time complexity scales roughly with $O(\osize^3+\qsize^3)$, which is required for computing the singular value decomposition of both the instance and task kernel matrices, but the complexities can be scaled down even further by using sparse kernel matrix approximation.

However, these computational short-cuts only concern the case in which the training set is complete. If some of the instance-task pairs in the training set are missing or if there are several occurrences of certain pairs, one has to resort, for example, to gradient-descent-based training approaches~\citep{park2009pairwise,Kashima2009,pahikkala2013conditional}. While the training can be accelerated via tensor algebraic optimization, those techniques still remain considerably slower than the approach based on eigenvalue decomposition. %A serious shortcoming of the method is that when generalizing to new tasks, the whole training procedure needs to be redone with the new training set that contains the union of the auxiliary data and the target data. If the amount of auxiliary data is large, as one would hope in order to expect any positive transfer to happen, this makes generalization to new tasks on-the-fly computationally impractical.

\subsection{Two-step kernel ridge regression}
%\begin{comment}
%\begin{algorithm}[t]
%  \begin{algorithmic}[1]
%  \INPUT $\tkernelm$, $\dkernelm$, $\bm{Y}$, $\bm{g}$, $\lambda_d$, $\lambda_t$
%    \State $\bm{C} \gets \textnormal{argmin}_{\bm{C}\in\mathbb{R}^{\osize\times\qsize}} \left\{\Arrowvert\bm{C}\tkernelm-\bm{Y}\Arrowvert_F^2+\regparam_t\trace(\bm{C}\tkernelm\bm{C}\transpose)\right\}$
%    \State $\bm{z}\gets\left(\bm{z}_\mathcal{L}\transpose,(\bm{C}_\mathcal{U}\bm{g})\transpose\right)\transpose$
%    \State $\bm{a}\gets\argmin_{\bm{a}\in\mathbb{R}^{\osize}} \left\{ (\dkernelm\bm{a}-\bm{z})\transpose(\dkernelm\bm{a}-\bm{z})+\regparam_d\bm{a}\transpose\dkernelm\bm{a} \right\}$
%    \State \textbf{return} $\predfun_\bm{t}(\cdot)=\sum_{i=1}^\osize {a}_i\kernelf(\bm{d}_i,\cdot)$
%  \caption{Two-step kernel ridge regression}
%  \label{algtwostep}
%  \end{algorithmic}
%\end{algorithm}
%\end{comment}

Clearly, independent-task ridge regression can generalize to new instances, but not to new tasks as no dependence between these tasks is encoded in the model. Kronecker KRR on the other hand can be used for all four prediction settings depicted in Figure~\ref{conceptrelations}. But since our definition of `instances' and `tasks' is purely conventional, nothing is stopping us from building a model using the kernel function $\gkernelf(\cdot,\cdot)$ to generalize to new tasks for the same instances. By combining two ordinary kernel ridge regressions, one for generalizing to new instances and one that generalizes for new tasks, one can indirectly predict for new dyads.

More formally, suppose one wants to make a prediction for the dyad $(\bm{d},\bm{t})$. Let $\bm{k}\in\mathbb{R}^{\osize}$ denote the vector of instance kernel evaluations between the instances in the training set and an instance in the test set, i.e.\ $\bm{k}(\bm{d})=\left(\kernelf(\bm{d},\bm{d}_1),\ldots,\kernelf(\bm{d},\bm{d}_\osize)\right)\transpose$. Likewise, $\bm{g}\in\mathbb{R}^{\qsize}$ represents the vector of task kernel evaluations between the target task and the auxiliary tasks, i.e.
$\bm{g}(\bm{t})=\left(\gkernelf(\bm{t},\bm{t}_1),\ldots,\gkernelf(\bm{t},\bm{t}_\qsize)\right)\transpose$. Based on the parameters found by solving Eq.~(\ref{systemKRR}), we can make a prediction for the new instance $\bm{d}$ for all the auxiliary tasks:

\eqn{\label{secondKRR}
\mathbf{\predfun}_\taskset (\bm{d}) = \bm{k}\transpose \left(\dkernelm+\regparam_d \idmatrix\right)^{-1} \bm{Y}  \,,
}
with $\regparam_d$ the specific regularization parameter for the instances. This vector of predictions $\mathbf{\predfun}_\taskset (\bm{d})$ can be used as a set of labels in an intermediate step to train a second model for generalizing to new tasks for the same instance. Thus, using the task kernel and a regularization parameter for the tasks $\regparam_t$, one obtains:
\eq{
\predfun^\mathrm{TS}(\bm{d}, \bm{t}) =  \bm{g}\transpose \left(\tkernelm+\regparam_t \idmatrix\right)^{-1}  \mathbf{\predfun}_\taskset (\bm{d})\transpose \,,
}
or, by making use of Eq.~(\ref{secondKRR}), the prediction is given by
\eqn{\label{2SRLS}
\predfun^\mathrm{TS}(\bm{d}, \bm{t}) &= \bm{k}\transpose \left(\dkernelm+\regparam_d \idmatrix\right)^{-1} \bm{Y}  \left(\tkernelm+\regparam_t \idmatrix\right)^{-1} \bm{g} \\
&= \bm{k}\transpose \bm{A}^\mathrm{TS} \bm{g}\,,
}
with $\bm{A}^\mathrm{TS}$ the dual parameters. We call this method two-step (TS) kernel ridge regression and it was independently proposed as embarrassingly simple zero-shot learning by~\cite{Romera-paredes2015}. It is represented in Figure~\ref{concept2SRLS}. Superficially, this approach resembles alternating least-squares~\citep{Zachariah2012a}, though the latter is an iteratively trained model mainly used for Setting A to obtain a low-rank representation. Our method on the other hand has a closed-form solution for the model parameters and allows for some computational techniques discussed later in this work. Two-step KRR can be used for any of the settings discussed in Section~\ref{related_settings}. Note that in practice there is no need to explicitly calculate $\mathbf{\predfun}_\taskset$, nor does it matter if in the first step one uses a model for new tasks and in the second step for instances, or the other way around. 

This model can be cast in a similar form as the pairwise prediction function of Eq.~(\ref{pairwisedual}) by making use of the identity $\ve(\bm{M}\bm{X}\bm{N}) = (\bm{N}\transpose\otimes\bm{M})\ve(\bm{X})$. Thus for two-step kernel ridge regression the parameters are given by

\eqn{
\bm{A}^\mathrm{TS} &=\left(\dkernelm+\regparam_d \idmatrix\right)^{-1} \bm{Y}  \left(\tkernelm+\regparam_t \idmatrix\right)^{-1}\,. \label{partwostepmatrix}
%\boldsymbol{\alpha} &= \ve (A)\,.
}

The time complexity for two-step kernel ridge regression is the same as for Kronecker KRR: $O(\osize^3+\qsize^3)$. The parameters can also be found by computing the eigenvalue decomposition of the two Gram matrices. Starting from these eigenvalue decompositions, it is possible to directly obtain the dual parameters for any values of the regularization hyperparameters $\lambda_d$ and $\lambda_t$. Because of its conceptual simplicity, it is quite straightforward to use two-step KRR for certain cases when the label matrix is not complete, in contrast to the Kronecker KRR, see experimental section. The computational advantages of this method will be discussed in Section~\ref{computationalaspects}. Table~\ref{pairwisemodels} gives an overview of the different learning methods considered in this section.

\begin{figure}[t]
   \begin{center}
   \includegraphics[width=0.7\textwidth]{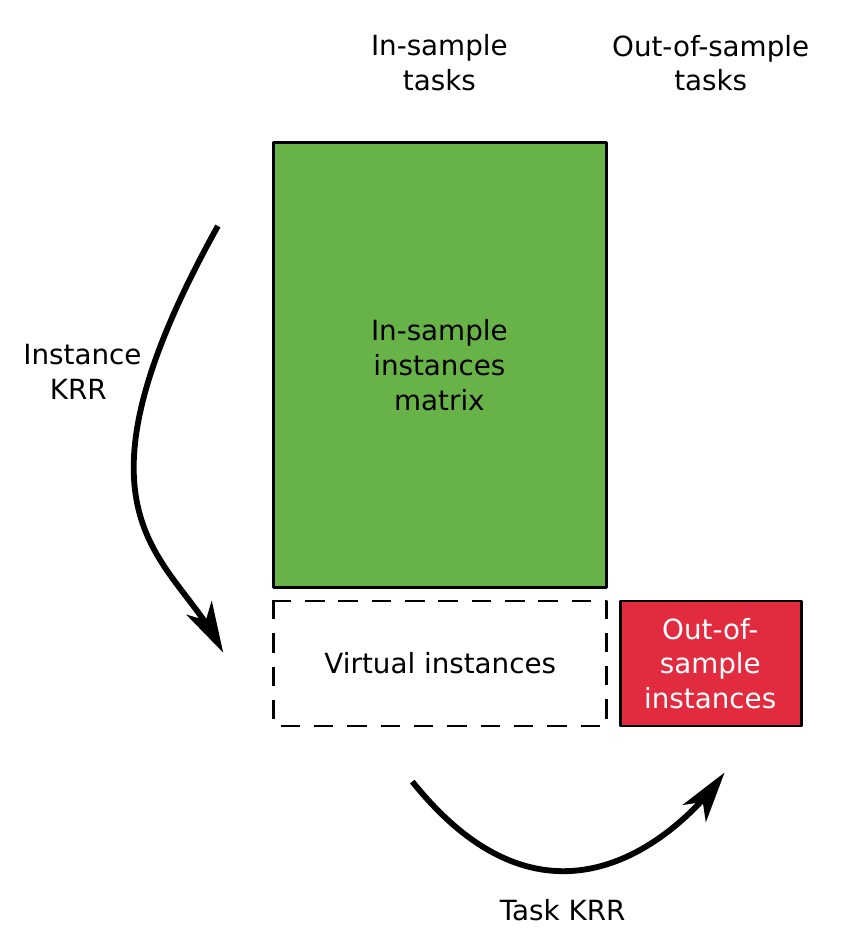} % requires the graphicx package
   \end{center}
   \caption{Principle of two-step kernel ridge regression. In a first step, a virtual prediction is made for the auxiliary tasks for new instances using a first KRR model. A second KRR model is trained using this data and this model is used to make predictions for new tasks.}
   \label{concept2SRLS}
\end{figure}

\begin{table}[t]
  \begin{center}
   \topcaption{Overview of the different learning methods discussed in this work. For each method, the dual form of the prediction function and the linear system for estimating the parameters is given. Here, the data is assumed to be complete.} 
   \label{pairwisemodels} % requires the topcapt package
   \begin{tabular}{ l  ll} % Column formatting, @{} suppresses leading/trailing space
\hline\hline
KRR method & Dual prediction function & System for dual parameters \\
\hline
Independent-task & $\displaystyle\predfun_j^{\mathrm{IT}}(\bm{d}) = \sum_{i=1}^{\osize}
a_{ij}^\mathrm{IT} \kernelf(\bm{d},\bm{d}_i)$ & $\displaystyle\left(\dkernelm+\regparam_d\idmatrix\right)\bm{A}^\mathrm{IT}=\bm{Y}$\\
Kronecker kernel & $\displaystyle\predfun^\mathrm{KK}(\bm{d}, \bm{t})=\sum_{i=1}^\osize \sum_{j=1}^\qsize a_{ij}^\mathrm{KK} \kernelf(\bm{d},\bm{d}_i) \gkernelf(\bm{t},\bm{t}_j)$ & $\displaystyle\left(\bm{\tkernelm}\otimes\bm{\dkernelm}+\regparam\idmatrix\right)\ve(\bm{A}^\mathrm{KK})=\ve(\bm{Y})$\\
\begin{tabular}[x]{@{}c@{}}Ordinary Kronecker \\kernel least-squares \end{tabular}& $\displaystyle\begin{array} {r@{}l@{}} \predfun^\mathrm{OKKLS} & {}  =\\ & \displaystyle{} \sum_{i=1}^\osize \sum_{j=1}^\qsize a_{ij}^\mathrm{OKKLS} \kernelf(\bm{d},\bm{d}_i) \gkernelf(\bm{t},\bm{t}_j) \end{array}$  & $\displaystyle(\bm{\tkernelm}\otimes\bm{\dkernelm})\ve(\bm{A}^\mathrm{OKKLS})=\ve(\bm{Y})$\\
Two-step & $\displaystyle\predfun^\mathrm{TS}(\bm{d}, \bm{t})=\sum_{i=1}^\osize \sum_{j=1}^\qsize a_{ij}^\mathrm{TS} \kernelf(\bm{d},\bm{d}_i) \gkernelf(\bm{t},\bm{t}_j)$ & $\displaystyle\left(\dkernelm+\regparam_d \idmatrix\right)\bm{A}^\mathrm{TS}\left(\tkernelm+\regparam_t \idmatrix\right) =\bm{Y}$\\
\hline
\hline
   \end{tabular}
   \end{center}
   \end{table}

\section{Theoretical considerations}\label{theoreticalconsiderations}
In this section we will show that two-step kernel ridge regression can be seen as using kernel ridge regression with special kinds of pairwise kernel matrices, depending on the prediction setting. We will show that Setting A is a transductive setting while Setting D is merely a special case of (Kronecker) kernel ridge regression. We will also study the different learning algorithms from a spectral filtering point of view, showing that two-step kernel ridge regression uses a special decomposable filter. From these observations we will prove the universality and admissibility of the methods. 

\subsection{Equivalence between two-step and other kernel ridge regression methods}

The relation between two-step ridge regression and independent-task ridge regression is given in the following theorem. 

\begin{mytheo}[Setting B]\label{ITKRRTSeqiuivalence}
When the Gram matrix of the tasks $\bm{\tkernelm}$ is full rank and $\lambda_t$ is set to zero, independent-task KRR and two-step KRR return the same predictions for any given training task:
\eq{
\predfun^\mathrm{IT}_j(\cdot)\equiv\predfun^\mathrm{TS}(\cdot, \bm{t}_j)\,.
}
\end{mytheo}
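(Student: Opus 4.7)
The plan is to expand the two-step prediction function in its closed form (Eq.~(\ref{2SRLS})) and evaluate it at a training task $\bm{t}_j$, then show that the resulting expression reduces to exactly the independent-task KRR prediction given in Eq.~(\ref{systemKRR}).

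First I would write
\eq{
\predfun^\mathrm{TS}(\bm{d}, \bm{t}) = \bm{k}(\bm{d})\transpose \left(\dkernelm+\regparam_d \idmatrix\right)^{-1} \bm{Y}\left(\tkernelm+\regparam_t \idmatrix\right)^{-1} \bm{g}(\bm{t})\,.
}
Setting $\regparam_t = 0$ makes sense because $\tkernelm$ is full rank by assumption, so $\tkernelm^{-1}$ exists. The prediction becomes
\eq{
\predfun^\mathrm{TS}(\bm{d}, \bm{t}) = \bm{k}(\bm{d})\transpose \left(\dkernelm+\regparam_d \idmatrix\right)^{-1} \bm{Y}\, \tkernelm^{-1} \bm{g}(\bm{t})\,.
}

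Second, I would evaluate this at a training task $\bm{t}_j \in \taskset$. The key observation is that $\bm{g}(\bm{t}_j) = \left(\gkernelf(\bm{t}_j,\bm{t}_1),\ldots,\gkernelf(\bm{t}_j,\bm{t}_\qsize)\right)\transpose$ is precisely the $j$-th column of $\tkernelm$, hence $\tkernelm^{-1}\bm{g}(\bm{t}_j) = \bm{e}_j$, the $j$-th standard basis vector. Consequently $\bm{Y}\,\tkernelm^{-1}\bm{g}(\bm{t}_j) = \bm{Y}_{.j}$, the $j$-th column of the label matrix, and
\eq{
\predfun^\mathrm{TS}(\bm{d}, \bm{t}_j) = \bm{k}(\bm{d})\transpose \left(\dkernelm+\regparam_d \idmatrix\right)^{-1} \bm{Y}_{.j}\,.
}

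Third, I would compare this with the independent-task KRR model: from Eq.~(\ref{systemKRR}) the $j$-th column of $\bm{A}^\mathrm{IT}$ is $\bm{A}^\mathrm{IT}_{.j} = (\dkernelm+\regparam_d\idmatrix)^{-1}\bm{Y}_{.j}$, and plugging into the dual form $\predfun_j^\mathrm{IT}(\bm{d}) = \bm{k}(\bm{d})\transpose \bm{A}^\mathrm{IT}_{.j}$ gives the same expression. This establishes the identity $\predfun^\mathrm{IT}_j(\cdot) \equiv \predfun^\mathrm{TS}(\cdot,\bm{t}_j)$.

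There is no real obstacle here; the only subtlety is recognizing that $\tkernelm^{-1}\bm{g}(\bm{t}_j) = \bm{e}_j$ for training tasks, which is what makes the $\tkernelm$-dependent factor in two-step KRR collapse to an indicator that selects the $j$-th task's labels, thereby decoupling the tasks exactly as in independent-task training. The full rank assumption on $\tkernelm$ is what permits setting $\regparam_t = 0$ in the first place; without it, the inverse in Eq.~(\ref{2SRLS}) would not exist.
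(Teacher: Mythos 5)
Your proposal is correct and follows essentially the same route as the paper: both set $\regparam_t=0$, use the full-rank assumption to form $\tkernelm^{-1}$, and exploit the fact that $\bm{g}(\bm{t}_j)$ is the $j$-th column of $\tkernelm$ so that the task-side factor collapses to selecting $\bm{Y}_{.j}$ (the paper writes this for all training tasks at once as $\bm{Y}\tkernelm^{-1}\tkernelm$). Your version merely spells out the $\tkernelm^{-1}\bm{g}(\bm{t}_j)=\bm{e}_j$ step more explicitly.
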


\begin{proof}
The prediction for the independent-task KRR is given by:
\eq{
\predfun^\mathrm{IT}_j(\bm{d}) = [\bm{k}\transpose(\dkernelm+\regparam_d\idmatrix)^{-1} \bm{Y}]_j \, .
}
For two-step KKR, it follows from Eq.~(\ref{2SRLS}) that
\eq{
\predfun^\mathrm{TS}_j(\bm{d}) &= [\bm{k}\transpose(\dkernelm+\regparam_d\idmatrix)^{-1} \bm{Y}\tkernelm^{-1}\tkernelm]_j \\
 &=  [\bm{k}\transpose(\dkernelm+\regparam_d\idmatrix)^{-1} \bm{Y}]_j\,.
}\end{proof}
When $\tkernelm$ is singular, the $\qsize$ outputs for the different tasks are projected on a lower-dimensional subspace by two-step KRR. This means that a dependence between the tasks is enforced, even with $\lambda_t=0$.

The connection between two-step and Kronecker kernel ridge regression is established by the following results. 
\begin{mytheo}[Setting A]\label{KKTSequivalenceA}
Consider the following pairwise kernel matrix:
\eq{
\bm{\Xi} &= \bm{G}\otimes\bm{K}\left(\lambda_d\lambda_t\bm{I}\otimes\bm{I}+\lambda_t\bm{I}\otimes\bm{K}+\lambda_d\bm{G}\otimes\bm{I}\right)^{-1}\,.}
The predictions for the training data $\bm{F}$ using pairwise KRR (Eq.\ (\ref{PAIRWISEREEST})) with the above pairwise kernel and regularization parameter $\lambda=1$ correspond to those obtained with two-step KRR using the kernel matrices $\bm{K}$, $\bm{G}$ with respective regularization parameters $\lambda_d$ and $\lambda_t$.
\end{mytheo}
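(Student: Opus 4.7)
The plan is to show equality of the two hat matrices that map $\ve(\bm Y)$ to the training predictions $\ve(\bm F)$. By Eq.~(\ref{PAIRWISEREEST}) with $\kkernelm=\bm\Xi$ and $\lambda=1$, the pairwise-KRR hat matrix is $\bm H^{\kkernelf}=\bm\Xi(\bm\Xi+\bm I)^{-1}$. On the two-step side, the training predictions are $\bm F^{\mathrm{TS}}=\bm K\bm A^{\mathrm{TS}}\bm G=\bm K(\bm K+\lambda_d\bm I)^{-1}\bm Y(\bm G+\lambda_t\bm I)^{-1}\bm G$, so after vectorizing via $\ve(\bm M\bm X\bm N)=(\bm N\transpose\otimes\bm M)\ve(\bm X)$, the two-step hat matrix factors as $\bm H^{\mathrm{TS}}=\bm G(\bm G+\lambda_t\bm I)^{-1}\otimes\bm K(\bm K+\lambda_d\bm I)^{-1}$. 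The goal is therefore to verify $\bm\Xi(\bm\Xi+\bm I)^{-1}=\bm H^{\mathrm{TS}}$.

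The key algebraic observation, which I would exhibit first, is that
\[
(\bm G+\lambda_t\bm I)\otimes(\bm K+\lambda_d\bm I)=\bm G\otimes\bm K+\lambda_d\bm G\otimes\bm I+\lambda_t\bm I\otimes\bm K+\lambda_d\lambda_t\bm I\otimes\bm I\,,
\]
which precisely equals $\bm G\otimes\bm K+\bm M$, where $\bm M$ denotes the inverted factor in the definition of $\bm\Xi$. This identity is the only non-mechanical step; the rest is bookkeeping.

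Using this, I would write $\bm\Xi=(\bm G\otimes\bm K)\bm M^{-1}$ and compute
\[
\bm\Xi+\bm I=\bigl[(\bm G\otimes\bm K)+\bm M\bigr]\bm M^{-1}=\bigl[(\bm G+\lambda_t\bm I)\otimes(\bm K+\lambda_d\bm I)\bigr]\bm M^{-1}\,,
\]
so that $(\bm\Xi+\bm I)^{-1}=\bm M\bigl[(\bm G+\lambda_t\bm I)^{-1}\otimes(\bm K+\lambda_d\bm I)^{-1}\bigr]$ using the mixed-product property of the Kronecker product. Substituting into $\bm\Xi(\bm\Xi+\bm I)^{-1}$ causes the $\bm M^{-1}\bm M$ to cancel, leaving $(\bm G\otimes\bm K)\bigl[(\bm G+\lambda_t\bm I)^{-1}\otimes(\bm K+\lambda_d\bm I)^{-1}\bigr]=\bm G(\bm G+\lambda_t\bm I)^{-1}\otimes\bm K(\bm K+\lambda_d\bm I)^{-1}=\bm H^{\mathrm{TS}}$, which is the desired equality.

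The main obstacle is spotting the factorization $\bm G\otimes\bm K+\bm M=(\bm G+\lambda_t\bm I)\otimes(\bm K+\lambda_d\bm I)$; one sanity check that makes it unavoidable is to do the computation in the joint eigenbasis of $\bm K$ and $\bm G$, where $\bm\Xi$ has eigenvalues $\mu_j\sigma_i/(\lambda_d\lambda_t+\lambda_t\sigma_i+\lambda_d\mu_j)$ and the eigenvalues of $\bm\Xi(\bm\Xi+\bm I)^{-1}$ simplify to $\tfrac{\sigma_i}{\sigma_i+\lambda_d}\cdot\tfrac{\mu_j}{\mu_j+\lambda_t}$, exactly matching those of $\bm H^{\mathrm{TS}}$; this eigenvalue route is an alternative way to write the proof if one prefers spectral arguments, but I would stick with the algebraic manipulation above because it avoids any invertibility assumption on $\bm K$, $\bm G$, or $\bm M$ beyond the tacit nonsingularity of $(\bm K+\lambda_d\bm I)$ and $(\bm G+\lambda_t\bm I)$ that the two-step method already requires.
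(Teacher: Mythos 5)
Your proof is correct, and it takes a genuinely different route from the paper's. The paper proves this theorem variationally: it writes the pairwise-KRR problem in value-regularization form, $J(\bm{F})=\ve(\bm{F}-\bm{Y})\transpose\ve(\bm{F}-\bm{Y})+\ve(\bm{F})\transpose\bm{\Xi}^{-1}\ve(\bm{F})$, expands $\bm{\Xi}^{-1}$ via Kronecker algebra into a trace of four terms, differentiates with respect to $\bm{F}$, and solves the stationarity condition to recover $\bm{F}=\bm{K}(\bm{K}+\lambda_d\bm{I})^{-1}\bm{Y}(\bm{G}+\lambda_t\bm{I})^{-1}\bm{G}$. You instead work directly with the hat matrix $\bm{\Xi}(\bm{\Xi}+\bm{I})^{-1}$ and rest everything on the factorization $(\bm{G}+\lambda_t\bm{I})\otimes(\bm{K}+\lambda_d\bm{I})=\bm{G}\otimes\bm{K}+\bm{M}$, after which the cancellation is immediate. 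Your version is shorter and, as you note, avoids the invertibility of $\bm{K}$ and $\bm{G}$ that the paper must explicitly assume (it remarks after the proof that it has assumed $\bm{K}$ and $\bm{G}$ invertible); you only need $\bm{M}$, $\bm{K}+\lambda_d\bm{I}$ and $\bm{G}+\lambda_t\bm{I}$ nonsingular, which holds automatically for positive semi-definite kernels and positive regularization parameters. What the paper's longer route buys is the explicit empirical-risk-minimization formulation of two-step KRR in the predicted values $\bm{F}$ (their Eq.~(\ref{eq:TSERM})), which they immediately exploit to argue that the method remains well defined for incomplete training data by restricting the squared loss to observed dyads --- a transductive interpretation that a pure hat-matrix identity does not by itself provide. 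Your spectral sanity check (eigenvalues $\frac{\sigma_i}{\sigma_i+\lambda_d}\cdot\frac{\mu_j}{\mu_j+\lambda_t}$) is also consistent with the filter-function viewpoint the paper develops later in Section~\ref{spectralInterpretation}.
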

\begin{proof}

We will write the corresponding empirical risk minimization of Eq.\ (\ref{tikhonov}) from the perspective of value regularization. Since Setting A is an imputation setting, we directly search for the optimal predicted label matrix $\bm{F}$, rather than the optimal parameter matrix. Starting from the objective function for KRR, the predictions for the training data are obtained via minimizing the following variational problem:
\eqn{
J(\bm{F})&=\ve(\bm{F}-\bm{Y})\transpose\ve(\bm{F}-\bm{Y})+\ve(\bm{F})\transpose\bm{\Xi}^{-1}\ve(\bm{F}) \label{eq:TSERM}\\
&=\ve(\bm{F}-\bm{Y})\transpose\ve(\bm{F}-\bm{Y}) \nonumber \\
&\qquad+\ve(\bm{F})\transpose\left(\bm{G}\otimes\bm{K}\left(\lambda_d\lambda_t\bm{I}\otimes\bm{I}+\lambda_t\bm{I}\otimes\bm{K}+\lambda_d\bm{G}\otimes\bm{I}\right)^{-1}\right)^{-1}\ve(\bm{F})\nonumber \\ 
&=\ve(\bm{F}-\bm{Y})\transpose\ve(\bm{F}-\bm{Y}) \nonumber \\
&\qquad+\ve(\bm{F})\transpose\left(\bm{G}^{-1}\otimes\bm{K}^{-1}\left(\lambda_d\lambda_t\bm{I}\otimes\bm{I}+\lambda_t\bm{I}\otimes\bm{K}+\lambda_d\bm{G}\otimes\bm{I}\right)\right)\ve(\bm{F})\nonumber \\ 
&=\ve(\bm{F}-\bm{Y})\transpose\ve(\bm{F}-\bm{Y}) \nonumber \\
&\qquad+\ve(\bm{F})\transpose\left(\lambda_d\lambda_t\bm{G}^{-1}\otimes\bm{K}^{-1}+\lambda_d\bm{I}\otimes\bm{K}^{-1}+\lambda_t\bm{G}^{-1}\otimes\bm{I}\right)\ve(\bm{F})\nonumber \\ %\label{TRANSDUCTIVESUBS}
&=\trace((\bm{F}-\bm{Y})\transpose(\bm{F}-\bm{Y})
+\lambda_d\lambda_t\bm{F}\transpose\bm{K}^{-1}\bm{F}\bm{G}^{-1}+\lambda_d\bm{F}\transpose\bm{K}^{-1}\bm{F}+\lambda_t\bm{F}\transpose\bm{F}\bm{G}^{-1})\,.\nonumber
}
%In Eq.~(\ref{TRANSDUCTIVESUBS}) we have rewritten the kernel as
%\eq{
%\bm{\Xi} &= \bm{G}\otimes\bm{K}\left(\lambda_d\lambda_t\bm{I}\otimes\bm{I}+\lambda_t\bm{I}\otimes\bm{K}+\lambda_d\bm{G}\otimes\bm{I}\right)^{-1}\\
%&= \left(\lambda_d\lambda_t\bm{G}^{-1}\otimes\bm{K}^{-1}+\lambda_d\bm{I}\otimes\bm{K}^{-1}+\lambda_t\bm{G}^{-1}\otimes\bm{I}\right)^{-1}\;.
%}
The derivative with respect to $\bm{F}$ is:
\eq{
\frac{\partial J}{\partial \bm{F}}&=2(\bm{F}-\bm{Y}+\lambda_d\lambda_t\bm{K}^{-1}\bm{F}\bm{G}^{-1}+\lambda_d\bm{K}^{-1}\bm{F}+\lambda_t\bm{F}\bm{G}^{-1})\\
&=2(\lambda_d\bm{K}^{-1}+\bm{I})\bm{F}(\lambda_t\bm{G}^{-1}+\bm{I})-2\bm{Y}\,.
}
Setting it to zero and solving with respect to $\bm{F}$ yields:
\eq{
\bm{F}&=(\lambda_d\bm{K}^{-1}+\bm{I})^{-1}\bm{Y}(\lambda_t\bm{G}^{-1}+\bm{I})^{-1}\\
&=\bm{K}(\bm{K}+\lambda_d\bm{I})^{-1}\bm{Y}(\bm{G}+\lambda_t\bm{I})^{-1}\bm{G}\,.
}
Comparing with Eq.~(\ref{partwostepmatrix}), we note that $\bm{F}=\bm{K}\bm{A}^\mathrm{TS}\bm{G}$, which proves the theorem.
\end{proof}

Here, we have assumed that $\dkernelm$ and $\tkernelm$ are invertible. The kernel $\bm{\Xi}$ can always be obtained as long as $\dkernelm$ and $\tkernelm$ are positive semi-definite. The relevance of the above theorem is that it formulates two-step KRR as an empirical risk minimization problem for Setting~A (Eq.~(\ref{eq:TSERM})). It is important to note that the pairwise kernel matrix $\bm{\Xi}$ only appears in the regularization term of this variational problem. The loss function is only dependent on the predicted values $\bm{F}$ and the label matrix $\bm{Y}$. Using two-step KRR for Setting A when dealing with incomplete data is thus well defined. The empirical risk minimization problem of Eq.~(\ref{eq:TSERM}) can be modified so that the squared loss only takes the observed dyads into account:  % relatieve verwijzing naar de goede vergelijking? In de gaten te houden
\eq{
J(\bm{F})=\sum_{(\mathbf{d}, \mathbf{t}, y)\in S}(y - \predfun(\mathbf{d}, \mathbf{t})))^2+\ve(\bm{F})\transpose\bm{\Xi}^{-1}\ve(\bm{F})\,,
}
with $S$ the training set of labeled dyads. In this case, one ends up with a transductive setting. This explains why Setting A is the most easy setting to predict for, as will be shown in the experiments. See \cite{Rifkin2007,Johnson2008} for a more in-depth discussion. 

Two-step and Kronecker KRR also coincide in an interesting way for zero-shot learning problems (e.g.\ the special case in which there is no labeled data available for the target task). This, in turn, allows us to show the consistency of two-step KRR via its universal approximation and spectral regularization properties. The theorem below shows the relation between two-step KRR and ordinary Kronecker kernel ridge regression for Setting D.
\begin{mytheo}[Setting D] \label{KKTSequivalence}
Let us consider a zero-shot learning setting with a complete training set. Let $\predfun^\mathrm{TS}(\cdot, \cdot)$ be a model trained with two-step KRR and $\predfun^\mathrm{OKKLS}(\cdot, \cdot)$ be a model trained with ordinary Kronecker kernel least-squares regression (OKKLS) using the following pairwise kernel function on $\objspace\times\taskspace$:
\eqn{\label{twostepkernel}
\Upsilon\left(\left(\bm{d},\bm{t}),(\overline{\bm{d}},\overline{\bm{t}}\right)\right)%\\
=\left(\kernelf\left(\bm{d},\overline{\bm{d}}\right)
+\regparam_d\delta\left(\bm{d},\overline{\bm{d}}\right)\right)
\left(\gkernelf\left(\bm{t},\overline{\bm{t}})
+\regparam_t\delta\left(\bm{t},\overline{\bm{t}}\right)\right)\right)
}
where $\delta$ is the delta kernel whose value is 1 if the arguments are equal and 0 otherwise.
Then for making predictions for instances $\bm{d}\in\objspace\setminus\objset$ and tasks $\bm{t}\in\taskspace\setminus\taskset$ not seen in the training set, it holds that $\predfun^\mathrm{TS}(\bm{t},\bm{d})=\predfun^\mathrm{OKKLS}(\bm{t}, \bm{d})$.
\end{mytheo}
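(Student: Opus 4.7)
The plan is to show that, under the stated pairwise kernel $\Upsilon$, the dual parameters obtained by OKKLS on the complete training set coincide exactly with the two-step KRR parameter matrix $\bm{A}^{\mathrm{TS}}$ from Eq.~(\ref{partwostepmatrix}), and that, for query dyads whose instance and task are both absent from the training data, the delta terms in $\Upsilon$ vanish, so that the OKKLS prediction reduces to $\bm{k}\transpose \bm{A}^{\mathrm{OKKLS}}\bm{g}$, which is exactly the two-step prediction in Eq.~(\ref{2SRLS}).

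First I would compute the pairwise Gram matrix of $\Upsilon$ evaluated on the complete training set. Since every training dyad $(\bm{d}_i,\bm{t}_j)$ has $\delta(\bm{d}_i,\bm{d}_{i'})$ equal to the $(i,i')$ entry of $\bm{I}$ and likewise for the task side, the pairwise kernel factorizes as a product of perturbed instance and task kernel evaluations. Writing this out with the column-stacking convention used earlier in the paper, the Gram matrix is the Kronecker product
\begin{equation*}
\bm{\Upsilon} = (\bm{G}+\regparam_t \bm{I})\otimes(\bm{K}+\regparam_d \bm{I}).
\end{equation*}
This is the crux of the argument: the $\lambda_d$ and $\lambda_t$ terms, which appear as Tikhonov shifts in two-step KRR, are here absorbed into the kernel matrix itself via the delta kernels.

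Next I would solve the OKKLS linear system from Table~\ref{pairwisemodels}, namely $\bm{\Upsilon}\,\ve(\bm{A}^{\mathrm{OKKLS}}) = \ve(\bm{Y})$. Using $(\bm{A}\otimes\bm{B})^{-1}=\bm{A}^{-1}\otimes\bm{B}^{-1}$ and the identity $\ve(\bm{M}\bm{X}\bm{N})=(\bm{N}\transpose\otimes\bm{M})\ve(\bm{X})$, I obtain
\begin{equation*}
\bm{A}^{\mathrm{OKKLS}} = (\bm{K}+\regparam_d\bm{I})^{-1}\,\bm{Y}\,(\bm{G}+\regparam_t\bm{I})^{-1},
\end{equation*}
which by Eq.~(\ref{partwostepmatrix}) equals $\bm{A}^{\mathrm{TS}}$. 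Invertibility of $\bm{K}+\lambda_d\bm{I}$ and $\bm{G}+\lambda_t\bm{I}$, and hence of $\bm{\Upsilon}$, is guaranteed by positive semi-definiteness of the base kernels together with positive regularizers, so this step is routine.

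Finally, I would evaluate both predictors at a query dyad $(\bm{d},\bm{t})$ with $\bm{d}\in\objspace\setminus\objset$ and $\bm{t}\in\taskspace\setminus\taskset$. Since $\bm{d}\neq \bm{d}_i$ and $\bm{t}\neq \bm{t}_j$ for every training instance and task, every delta term in $\Upsilon((\bm{d},\bm{t}),(\bm{d}_i,\bm{t}_j))$ is zero, and the pairwise kernel collapses to $\kernelf(\bm{d},\bm{d}_i)\gkernelf(\bm{t},\bm{t}_j)$. The OKKLS prediction therefore becomes
\begin{equation*}
\predfun^{\mathrm{OKKLS}}(\bm{d},\bm{t}) = \sum_{i=1}^{\osize}\sum_{j=1}^{\qsize} a_{ij}^{\mathrm{OKKLS}}\kernelf(\bm{d},\bm{d}_i)\gkernelf(\bm{t},\bm{t}_j) = \bm{k}\transpose \bm{A}^{\mathrm{OKKLS}}\bm{g} = \bm{k}\transpose\bm{A}^{\mathrm{TS}}\bm{g},
\end{equation*}
which matches Eq.~(\ref{2SRLS}). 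The only delicate point in the whole argument is getting the Kronecker ordering right so that the shift matrices line up with $\bm{K}$ and $\bm{G}$ on the correct tensor factor; once that bookkeeping is pinned down, everything else is a direct application of standard Kronecker identities.
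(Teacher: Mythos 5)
Your proposal is correct and follows essentially the same route as the paper's own proof: identify the training Gram matrix of $\Upsilon$ as $(\bm{G}+\regparam_t\bm{I})\otimes(\bm{K}+\regparam_d\bm{I})$, invert it via the Kronecker and $\ve$ identities to show $\bm{A}^{\mathrm{OKKLS}}=\bm{A}^{\mathrm{TS}}$, and drop the delta terms for unseen instances and tasks so the two prediction functions coincide. The only difference is the order of the two steps (the paper argues about the prediction functions first, then the parameters), which is immaterial.
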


\begin{proof}
From Eq.~(\ref{2SRLS}) we have the following dual model for prediction:
\eq{
\predfun^\mathrm{TS}(\bm{d}, \bm{t})=\sum_{i=1}^\osize \sum_{j=1}^\qsize a_{ij}^\mathrm{TS} \kernelf(\bm{d},\bm{d}_i) \gkernelf(\bm{t},\bm{t}_j)\,,
}
with $\bm{A}^\mathrm{TS}=[a_{ij}^\mathrm{TS}]$ the matrix of parameters. Similarly, the dual representation of the OKKLS (see Eq.~(\ref{pairwisedual})), using a parametrization $\bm{A}^\mathrm{OKKLS}=[a_{ij}^\mathrm{OKKLS}]$, is given by
\eq{
\predfun^\mathrm{OKKLS}(\bm{d}, \bm{t})&=\sum_{i=1}^\osize \sum_{j=1}^\qsize a_{ij}^\mathrm{OKKLS} \Upsilon\left(\left(\bm{d},\bm{t}),(\bm{d}_i,\bm{t}_j\right)\right)\,, \\
&=\sum_{i=1}^\osize \sum_{j=1}^\qsize a_{ij}^\mathrm{OKKLS}\left(\kernelf\left(\bm{d},{\bm{d}}_i\right)
+\regparam_d\delta\left(\bm{d},{\bm{d}}_i\right)\right)
\left(\gkernelf\left(\bm{t},{\bm{t}}_j)
+\regparam_t\delta\left(\bm{t},{\bm{t}}_j\right)\right)\right)\,,\\
&=\sum_{i=1}^\osize \sum_{j=1}^\qsize a_{ij}^\mathrm{OKKLS}\kernelf(\bm{d},\bm{d}_i) \gkernelf(\bm{t},\bm{t}_j)\,.\\
}
In the last step we used the fact that $\bm{d}\neq\bm{d}_i$ and $\bm{t}\neq\bm{t}_j$ to drop the delta kernels. Hence, we need to show that $\bm{A}^\mathrm{TS}=\bm{A}^\mathrm{OKKLS}$.

By Eq.~(\ref{partwostepmatrix}) and denoting $\widetilde{\bm{G}}=\left(\bm{G}+\regparam\idmatrix\right)^{-1}$ and $\widetilde{\bm{K}}=\left(\bm{K}+\regparam\idmatrix\right)^{-1}$, we observe that the model parameters $\bm{A}^\mathrm{TS}$ of two-step model can also be obtained from the following closed form:
\eqn{
\bm{A}^\mathrm{TS}&=\widetilde{\bm{K}}
\bm{Y}\widetilde{\bm{G}}
\,.
}
The kernel matrix of $\Upsilon$ for Setting D can be expressed as:
$
\bm{\Upsilon} = \left(\bm{G}+\regparam_t\idmatrix\right)\otimes\left(\bm{K}+\regparam_d\idmatrix\right)\,.
$
The OKKLS problem with kernel $\Upsilon$ being
\[
\ve(\bm{A}^\mathrm{OKKLS})=\argmin_{\bm{A}^\mathrm{OKKLS}\in\mathbb{R}^{\osize\times\qsize}}\left(\ve(\bm{Y})-\bm{\Upsilon}\ve(\bm{A}^\mathrm{OKKLS})\right)\transpose\left(\ve(\bm{Y})-\bm{\Upsilon}\ve(\bm{A}^\mathrm{OKKLS})\right)\,,
\]
its minimizer can be expressed as
\eqn{\label{OKKLSparameters}
\ve(\bm{A}^\mathrm{OKKLS})&=\bm{\Upsilon}^{-1}\ve(\bm{Y})%\\
=\left(\left(\bm{G}+\regparam_t\idmatrix\right)^{-1}\otimes\left(\bm{K}+\regparam_d\idmatrix\right)^{-1}\right)\ve(\bm{Y})\nonumber\\
&=\ve\left(\left(\bm{K}+\regparam_d\idmatrix\right)^{-1}\bm{Y}\left(\bm{G}+\regparam_t\idmatrix\right)^{-1}\right)%\\
=\ve\left(\widetilde{\bm{K}}\bm{Y}\widetilde{\bm{G}}\right)\,. 
}
Here, we again make use of the identity $\ve(\bm{M}\bm{X}\bm{N}) = (\bm{N}\transpose\otimes\bm{M})\ve(\bm{X})$, which holds for any conformable matrices $\bm{M}$, $\bm{X}$, and $\bm{N}$. From Eq.~(\ref{OKKLSparameters}) it then follows that $\bm{A}^\mathrm{TS}=\bm{A}^\mathrm{OKKLS}$, which proves the theorem.
\end{proof}

\subsection{Universality of the Kronecker product pairwise kernel}
Here we consider the universal approximation properties of two-step kernel ridge regression. This is a necessary step in showing the consistency of this method. We first recall the concept of universal kernel functions.
\begin{definition} \label{kerneluniversalitydef}
{\bf \citep{Steinwart2002consistency}} A continuous kernel $\kernelf(\cdot,\cdot)$ on a compact metric space $\anyspace$ (i.e.~$\anyspace$ is closed and bounded) is called universal if the reproducing kernel Hilbert space (RKHS) induced by $\kernelf(\cdot,\cdot)$ is dense in $C(\anyspace)$, where $C(\anyspace)$ is the space of all continuous functions $\predfun : \anyspace \rightarrow \mathbb{R}$.
\end{definition}
The universality property indicates that the hypothesis space induced by a universal kernel can approximate any continuous function on the input space $\anyspace$ to be learned arbitrarily well, given that the available set of training data is large and representative enough, and the learning algorithm can efficiently find this approximation from the hypothesis space \citep{Steinwart2002consistency}. In other words, the learning algorithm is consistent in the sense that, informally put, the hypothesis learned by it gets closer to the function to be learned while the size of the training set gets larger. The consistency properties of two-step KRR are considered in more detail in Section~\ref{spectralInterpretation}.

\newcommand{\tempfirstfun}{u}
\newcommand{\tempsecondfun}{v}
\newcommand{\temppairfun}{t}

Next, we consider the universality of the Kronecker product pairwise kernel. The following result is a straightforward modification of some of the existing results in the literature (e.g. \citet{Waegeman2012}), but we present it here for self-sufficiency. This theorem is mainly related to Setting D, while it also covers the other settings as special cases.
\begin{mytheo}
The kernel $\kkernelf^\mathrm{KK}((\cdot,\cdot),(\cdot,\cdot))$ on $\objspace\times\taskspace$ defined in Eq.\ (\ref{pairwisekernel}) is universal if the instance kernel $\kernelf(\cdot,\cdot)$ on $\objspace$ and the task kernel $\gkernelf(\cdot,\cdot)$ on $\taskspace$ are both universal.
\end{mytheo}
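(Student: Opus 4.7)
The plan is to prove density of the RKHS $\mathcal{H}_{\Gamma^{\mathrm{KK}}}$ in $C(\objspace\times\taskspace)$ by combining two ingredients: (i) the structural fact that the RKHS of a product kernel contains all finite linear combinations of functions $(\bm{d},\bm{t})\mapsto f(\bm{d})h(\bm{t})$ with $f\in\mathcal{H}_{\kernelf}$ and $h\in\mathcal{H}_{\gkernelf}$, and (ii) a Stone--Weierstrass style density argument on $\objspace\times\taskspace$ which is compact as a product of two compact metric spaces.

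First I would note that $\Gamma^{\mathrm{KK}}$ is continuous on $(\objspace\times\taskspace)^2$ since it is a product of two continuous kernels, so Definition~\ref{kerneluniversalitydef} applies. Next, I would recall the standard construction of the RKHS of a product kernel: if $\mathcal{H}_{\kernelf}$ and $\mathcal{H}_{\gkernelf}$ are the RKHSs of $\kernelf$ and $\gkernelf$, then the tensor product Hilbert space $\mathcal{H}_{\kernelf}\otimes\mathcal{H}_{\gkernelf}$ is precisely $\mathcal{H}_{\Gamma^{\mathrm{KK}}}$, and in particular every function of the form
\[
\sum_{\ell=1}^{L} f_\ell(\bm{d})\,h_\ell(\bm{t}), \qquad f_\ell\in\mathcal{H}_{\kernelf},\ h_\ell\in\mathcal{H}_{\gkernelf},
\]
lies in $\mathcal{H}_{\Gamma^{\mathrm{KK}}}$. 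This is a known fact from the theory of reproducing kernels and follows directly from the reproducing property applied to the factorized kernel.

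Next I would use Stone--Weierstrass to reduce the problem to approximating products of continuous factors. The algebra
\[
\mathcal{A} = \Bigl\{\,(\bm{d},\bm{t})\mapsto \sum_{\ell=1}^L u_\ell(\bm{d})\,v_\ell(\bm{t}) \,\Bigm|\, u_\ell\in C(\objspace),\ v_\ell\in C(\taskspace)\,\Bigr\}
\]
is a unital subalgebra of $C(\objspace\times\taskspace)$ that separates points (since continuous functions separate points on each factor), hence is dense in $C(\objspace\times\taskspace)$ in the sup norm. Thus any target $F\in C(\objspace\times\taskspace)$ can be approximated within $\varepsilon/2$ by some $\sum_\ell u_\ell v_\ell \in \mathcal{A}$.

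Finally, I would invoke universality of $\kernelf$ and $\gkernelf$ separately: for each $\ell$, pick $\tilde{u}_\ell\in\mathcal{H}_{\kernelf}$ and $\tilde{v}_\ell\in\mathcal{H}_{\gkernelf}$ with $\|u_\ell-\tilde{u}_\ell\|_\infty$ and $\|v_\ell-\tilde{v}_\ell\|_\infty$ as small as needed. The telescoping bound $\|u_\ell v_\ell - \tilde{u}_\ell \tilde{v}_\ell\|_\infty \le \|u_\ell\|_\infty\|v_\ell - \tilde{v}_\ell\|_\infty + \|\tilde{v}_\ell\|_\infty\|u_\ell-\tilde{u}_\ell\|_\infty$, combined with compactness giving $\|u_\ell\|_\infty,\|v_\ell\|_\infty<\infty$, lets me control the total error by $\varepsilon/2$. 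Summing over $\ell$ and using step (i) to conclude $\sum_\ell \tilde{u}_\ell \tilde{v}_\ell\in\mathcal{H}_{\Gamma^{\mathrm{KK}}}$ completes the proof.

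The main obstacle is step (i): precisely identifying (or at least embedding) the product-of-products $\tilde{u}_\ell\tilde{v}_\ell$ inside $\mathcal{H}_{\Gamma^{\mathrm{KK}}}$. The cleanest way is to cite the standard tensor-product-of-RKHS theorem (e.g.\ from \citet{Steinwart2002consistency} or the references in \citet{Waegeman2012}), rather than reconstruct it; everything else is routine once that structural fact is in hand.
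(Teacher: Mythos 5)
Your proposal is correct and follows essentially the same route as the paper: both identify the RKHS of the product kernel with the tensor product $\mathcal{H}(\kernelf)\otimes\mathcal{H}(\gkernelf)$, approximate each continuous factor separately via the universality of $\kernelf$ and $\gkernelf$ with a telescoping bound on the product error, and invoke the Stone--Weierstra{\ss} theorem on the compact product space to pass from (sums of) separable functions to all of $C(\objspace\times\taskspace)$. The only difference is the order of the two density steps (you apply Stone--Weierstra{\ss} first, the paper last), which is immaterial; your explicit use of finite sums of products is, if anything, slightly cleaner.
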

\begin{proof}
Let us define
\begin{equation}\label{funckron}
\begin{array}{l}
\mathcal{A}\otimes\mathcal{B}
%:\mathcal{V}\times\mathcal{V}\mapsto\mathbb{R}
=\left\{\temppairfun\mid t(\bm{d},\bm{t})=\tempfirstfun(\bm{d})v(\bm{t}),\tempfirstfun\in \mathcal{A},\tempsecondfun\in \mathcal{B}\right\}
\end{array}
\end{equation}
for compact metric spaces $\mathcal{D}$ and $\mathcal{T}$ and sets of functions $\mathcal{A}\subset C(\mathcal{D})$ and $\mathcal{B}\subset C(\mathcal{T})$. We observe that the RKHS of the kernel $\Gamma$ can be written as $\mathcal{H}(\kernelf)\otimes\mathcal{H}(\gkernelf)$, where $\mathcal{H}(\kernelf)$ and $\mathcal{H}(\gkernelf)$ are the RKHS of the kernels $\kernelf(\cdot,\cdot)$ and $\gkernelf(\cdot,\cdot)$, respectively.

Let $\epsilon>0$ and let $\temppairfun\in C(\mathcal{D})\otimes C(\mathcal{T})$ be an arbitrary function which can, according to Eq.~(\ref{funckron}), be written  as $\temppairfun(\bm{d},\bm{t})=\tempfirstfun(\bm{d})\tempsecondfun(\bm{t})$, where $\tempfirstfun\in C(\mathcal{D})$ and $\tempsecondfun\in C(\mathcal{T})$. By definition of the universality property, $\mathcal{H}(\kernelf)$ and $\mathcal{H}(\gkernelf)$ are dense in $C(\mathcal{D})$ and $C(\mathcal{T})$, respectively. Therefore, there exist functions $\overline{\tempfirstfun}\in\mathcal{H}(\kernelf)$ and $\overline{\tempsecondfun}\in\mathcal{H}(\gkernelf)$ such that
\[
\max_{\bm{d}\in\mathcal{D}}\left\arrowvert \overline{\tempfirstfun}(\bm{d})-\tempfirstfun(\bm{d})\right\arrowvert\leq \overline{\epsilon},\qquad \max_{\bm{t}\in\mathcal{T}}\left\arrowvert \overline{\tempsecondfun}(\bm{t})-\tempsecondfun(\bm{t})\right\arrowvert\leq \overline{\epsilon} \,,
\]
where $\overline{\epsilon}$ is a constant for which it holds that
\[
\max_{\bm{d}\in\mathcal{D},\bm{t}\in\mathcal{T}}\left\{\left\arrowvert \overline{\epsilon} \, \tempfirstfun(\bm{d})\right\arrowvert+\left\arrowvert\overline{\epsilon} \,\tempsecondfun(\bm{t})\right\arrowvert+\overline{\epsilon}^2\right\}\leq \epsilon \,.
\]
Note that, according to the extreme value theorem, the maximum exists due to the compactness of $\mathcal{D}$ and $\mathcal{T}$ and the continuity of the functions $\tempfirstfun(\cdot)$ and $\tempsecondfun(\cdot)$. Now we have
\[
\begin{array}{l}
\displaystyle
\max_{\bm{d}\in\mathcal{D},\bm{t}\in\mathcal{T}}\left\arrowvert t(\bm{d},\bm{t})-\overline{\tempfirstfun}(\bm{d})\overline{\tempsecondfun}(\bm{t})\right\arrowvert\\
\displaystyle
\leq\max_{\bm{d}\in\mathcal{D},\bm{t}\in\mathcal{T}}\left\{\left\arrowvert t(\bm{d},\bm{t})-\tempfirstfun(\bm{d})\tempsecondfun(\bm{t})\right\arrowvert+\left\arrowvert \overline{\epsilon} \,\tempfirstfun(\bm{d})\right\arrowvert+\left\arrowvert\overline{\epsilon}\,\tempsecondfun(\bm{t})\right\arrowvert+\overline{\epsilon}^2\right\}\\
\displaystyle
=\max_{\bm{d}\in\mathcal{D},\bm{t}\in\mathcal{T}}\left\{\left\arrowvert \overline{\epsilon} \,\tempfirstfun(\bm{d})\right\arrowvert+\left\arrowvert\overline{\epsilon} \,\tempsecondfun(\bm{t})\right\arrowvert+\overline{\epsilon}^2\right\}\leq \epsilon,
\end{array}
\]
which confirms the density of $\mathcal{H}(\kernelf)\otimes\mathcal{H}(\gkernelf)$ in $C(\mathcal{D})\otimes C(\mathcal{T})$.

The space $\mathcal{D}\times\mathcal{T}$ is compact if both $\mathcal{D}$ and $\mathcal{T}$ are compact according to Tikhonov's theorem. It is straightforward to see that $C(\mathcal{D})\otimes C(\mathcal{T})$ is a subalgebra of $C(\mathcal{D}\times\mathcal{T})$, it separates points in $\mathcal{D}\times\mathcal{T}$, it vanishes at no point of $C(\mathcal{D}\times\mathcal{T})$, and it is therefore dense in $C(\mathcal{D}\times\mathcal{T})$ due to the Stone-Weierstra{\ss} theorem. Thus, $\mathcal{H}(\kernelf)\otimes\mathcal{H}(\gkernelf)$ is also dense in $C(\mathcal{D}\times\mathcal{T})$, and $\Gamma$ is a universal kernel on $\mathcal{D}\times\mathcal{T}$.
\end{proof}
%Here, we show that two-step KRR with complete training set for Setting D can also be considered as a kernel-based learning algorithm that uses the pairwise Kronecker product kernel together with a specific spectral filtering regularizer. Therefore, the above shown universal approximation properties of this kernel are also connected to the consistency properties of two-step KRR method, as is elaborated below in more detail.

\subsection{Spectral interpretation}\label{spectralInterpretation}
In Theorem~\ref{KKTSequivalence} we have shown the relation between two-step kernel ridge regression and (Kronecker) kernel ridge regression for Setting D. In this section we will study the difference between single-task, Kronecker and two-step kernel ridge regression from the point of view of spectral regularization. The above shown universal approximation properties of this kernel are also connected to the consistency properties of two-step KRR, as is elaborated in more detail in this section.

Learning by spectral regularization originates from the theory of ill-posed problems. This paradigm is well studied in domains such as image analysis~\citep{Bertero1998} and, more recently, in machine learning -- e.g.~\cite{LoGerfo2008}. Here, one wants to find the parameters $\bm{\boldsymbol\alpha}$ of the data-generating process given a set of noisy measurements $\labelvec$ such that
\eqn{\label{illposedproblem}
\kkernelm\bm{\boldsymbol\alpha} \approx \labelvec\,,
}
with $\kkernelm$ a Gram matrix with eigenvalue decomposition $\kkernelm = \bm{W}\bm{\Lambda}\bm{W}\transpose$. At first glance, one can find the parameters $\bm{\boldsymbol\alpha}$ by inverting $\kkernelm$:
\eq{
\bm{\boldsymbol\alpha} & = \kkernelm^{-1}\labelvec \\
 &=  \bm{W}\bm{\Lambda}^{-1}\bm{W}\transpose\labelvec\,.
}
If $\kkernelm$ has small eigenvalues, the inverse becomes highly unstable: small changes in the feature description of the label vector will lead to huge changes in $\bm{\boldsymbol\alpha}$. Spectral regularization deals with this problem by generalizing the inverse by a so-called filter function to make solving Eq.~(\ref{illposedproblem}) well-posed. The following definition of a spectral filter-based regularizer is standard in the machine learning literature (see e.g. \citet{LoGerfo2008} and references therein).
Note that we assume $\kkernelf((\cdot,\cdot),(\cdot,\cdot))$ being bounded with $\kappa>0$ such that $\sup_{\bm{x}\in\ispace}\sqrt{\kkernelf(\bm{x},\bm{x})}\leq\kappa$, assuring that the eigenvalues of the Gram matrix $\kkernelm$ are in $[0,\kappa^2]$.
% To further analyze the above filter functions, we follow \citet{Bauer2007regularization,LoGerfo2008,Baldassarre2012} and say that 
\begin{definition}[Admissible regularizer]\label{filterdef}
A function $\filterfun_\regparam:[0,\kappa^2]\rightarrow \mathbb{R}$, parameterized by $0<\regparam\leq\kappa^2$, is an admissible regularizer if there exist constants $D, B,\gamma\in\mathbb{R}$ and $\bar{\nu},\gamma_\nu > 0$ such that
\[
\sup_{0<\sigma\leq\kappa^2}\arrowvert\sigma\filterfun_\regparam(\sigma)\arrowvert\leq D\textnormal{, }
\sup_{0<\sigma\leq\kappa^2}\arrowvert\filterfun_\regparam(\sigma)\arrowvert\leq\frac{B}{\regparam}\textnormal{, }
\sup_{0<\sigma\leq\kappa^2}\arrowvert 1-\sigma\filterfun_\regparam(\sigma)\arrowvert\leq \gamma\,,
\]
\[
\text{and }\sup_{0<\sigma\leq\kappa^2} \frac{\regparam^\nu}{\sigma^\nu}\arrowvert 1-\sigma\filterfun_\regparam(\sigma)\arrowvert\leq \gamma_\nu,\, \quad\text{for any }\nu\in\,]0,\bar{\nu}]\,,
\]
where the constant $\gamma_\nu$ does not depend on $\regparam$.
\end{definition}
The constant $\bar{\nu}$ is in the literature called the qualification of the regularizer and it is related to the consistency properties of the learning method as described in more detail below.

% \begin{mydef}[spectral filter]
% A function $\filterfun_\regparam(\cdot) : \mathbb{R}^+ \rightarrow \mathbb{R}^+$, parameterized by one or more positive parameters $\lambda$ is a valid pairwise filter function if the following conditions hold: 
% \begin{enumerate}
% \item when all parameters $\lambda\rightarrow 0$, then $\filterfun_\regparam(\sigma) \rightarrow \frac{1}{\sigma}$;
% \item if all parameters $\lambda \geq \lambda'$ then $\filterfun_\regparam(\sigma)\leq\filterfun_{\regparam'}(\sigma)$, for any eigenvalue $\sigma$.
% \end{enumerate}
% \end{mydef}
The spectral filter is a matrix function that acts as a stabilized generalization of a matrix inverse. Hence, Eq.~(\ref{illposedproblem}) can be solved by
\eq{
\bm{\boldsymbol\alpha}&=\filterfun_\regparam(\kkernelm)\labelvec\\
&=\bm{W}\filterfun_\regparam(\bm{\Lambda})\bm{W}\transpose\ve(\bm{Y})\,.
} 
Similarly, the noisy measurements can be filtered to obtain a better estimation of the true labels:
\eq{
\bm{\predfun} &= \kkernelm\bm{\boldsymbol\alpha}\nonumber\\
&= \bm{W}\bm{\Lambda}\bm{W}\transpose\bm{W}\filterfun_\regparam(\bm{\Lambda})\bm{W}\transpose\ve(\bm{Y}) \nonumber \\
&= \bm{W}\bm{\Lambda}\filterfun_\regparam(\bm{\Lambda})\bm{W}\transpose\ve(\bm{Y})\,. \nonumber
}
The spectral interpretation allows for using a more general form of the hat matrix (Eq.~(\ref{hatk})):
\eq{\bm{H}^\kkernelf=\bm{W}\bm{\Lambda}\filterfun_\regparam(\bm{\Lambda})\bm{W}\transpose\,.
}
For example, the filter function corresponding to the Tikhonov regularization, as used for independent-task kernel ridge regression, is given by
\eq{
\filterfun^\mathrm{TIK}_\regparam(\bm{\sigma})=\frac{1}{\bm{\sigma}+\regparam}\,,
}
with the ordinary least-squares approach corresponding to $\regparam=0$. Several other learning approaches, such as spectral cut-off, iterated Tikhonov and $L2$ Boosting, can also be expressed as filter functions, but cannot be expressed as a penalized empirical error minimization problem analogous to Eq.~(\ref{tikhonov})~\citep{LoGerfo2008}. The spectral interpretation can also be used to motivate novel learning algorithms.
 
Many authors have expanded this framework to multi-task settings, e.g.~\cite{Argyriou2007,Argyriou2010,Baldassarre2012}. We will translate the pairwise learning methods from Section~\ref{dyadicprediciton} to this spectral regularization context. Let us denote the eigenvalue decomposition of the instance and task kernel matrices as
\eq{
\bm{\dkernelm} = \bm{U} \bm{\Sigma} \bm{U}\transpose \qquad \bm{\tkernelm} = \bm{V} \bm{S} \bm{V}\transpose\,.
}
Let $\bm{u}_i$ denote the $i$-th eigenvector of $\bm{\dkernelm}$ and $\bm{v}_j$ the $j$-th eigenvector of $\bm{\tkernelm}$. The eigenvalues of the kernel matrix obtained with the Kronecker product kernel on a complete training set can be expressed as the Kronecker product $\bm{\Lambda}=\bm{S}\otimes\bm{\Sigma}$ of the eigenvalues $\bm{\Sigma}$ and $\bm{S}$ of the instance and task kernel matrices. For the models of this work, it is opportune to define a pairwise filter function over the representation of the instances and tasks.

% \begin{mydef}[pairwise spectral filter]
% A pairwise function $\vartheta_\regparam(\cdot, \cdot) : \mathbb{R}^+ \times \mathbb{R}^+\rightarrow \mathbb{R}^+$, parameterized by one or more positive parameters $\lambda$ is a valid pairwise filter function if the following two conditions hold:
% \begin{enumerate}
% \item when all parameters $\lambda\rightarrow 0$, then $\vartheta_\regparam(\sigma, s) \rightarrow \frac{1}{\sigma s}$;
% \item if all parameters $\lambda \geq \lambda'$ then $\vartheta_\regparam(\sigma, s)\leq\vartheta_{\regparam'}(\sigma, s)$, for any eigenvalue pair $(\sigma, s)$.
% \end{enumerate}
% \end{mydef}
\newcommand{\ebound}{a}
\newcommand{\rbound}{b}
We also note that we assume both of the factor kernels to be bounded, and hence we can write that all the eigenvalues $\varsigma$ of the Kronecker product kernel can be factorized as the product of the eigenvalues of the instance and task kernels as follows:
\eqn{\label{evalfactorization}
\varsigma=\sigma s\qquad\textnormal{with }0\leq\sigma, s\leq\ebound\sqrt{\varsigma}\textnormal{ and }1\leq\ebound<\infty\,,
}
where $\sigma,s$ denote the eigenvalues of the factor kernels and $\ebound$ the constant determined as the product of $\sup_{\bm{d}\in\objspace}\sqrt{\kernelf(\bm{d},\bm{d})}$ and  $\sup_{\bm{t}\in\taskspace}\sqrt{\gkernelf(\bm{t},\bm{t})}$.
\begin{definition}[Pairwise spectral filter]
We say that a function $\filterfun_\regparam:[0,\kappa^2]\rightarrow \mathbb{R}$, parameterized by $0<\regparam\leq\kappa^2$, is a pairwise spectral filter if it can be written as
\[
%\filterfun_\regparam(\varsigma)=\filterfun_{\regparam_t,\regparam_d}(\sigma,s)
\filterfun_\regparam(\varsigma)=\vartheta_\regparam(\sigma,s)
\]
for some function $\vartheta_{\regparam}:[0,\ebound\sqrt{\varsigma}]^2\rightarrow \mathbb{R}$ with $1\leq\ebound<\infty$, and it is an admissible regularizer for all possible factorizations of the eigenvalues as in Eq.~(\ref{evalfactorization}).
\end{definition}

Since the eigenvalues of a Kronecker product of two matrices are just the scalar product of the eigenvalues of the matrices, the filter function for Kronecker KRR is given by 
\eqn{\label{kronfilter}
\vartheta_\regparam^\mathrm{KK}(s, \sigma) = \filterfun_\lambda^\mathrm{TIK}(\sigma s)=\frac{1}{(\sigma s+\regparam)} \,,
}
where $\sigma$ and $s$ are the eigenvalues of $\bm{K}$ and $\bm{G}$, respectively. The admissibility of this filter is a well-known result, since it is simply the Tikhonov regularizer for the pairwise Kronecker product kernel. 

Instead of considering two-step kernel ridge regression from the kernel point of view, one can also cast it into the spectral filtering regularization framework. We start from Eq.~(\ref{partwostepmatrix}) in vectorized form:
\eq{
\ve(\bm{A}) &= \left((\bm{\tkernelm} + \lambda_t\idmatrix)^{-1} \otimes (\bm{\dkernelm} + \lambda_d\idmatrix)^{-1}\right) \ve(\bm{Y})\\
 &= \left(( \bm{V} \bm{S} \bm{V}\transpose+ \lambda_t\idmatrix)^{-1} \otimes (\bm{U} \bm{\Sigma} \bm{U}\transpose+ \lambda_d\idmatrix)^{-1}\right) \ve(\bm{Y})\\
& =\left(( \bm{V} \filterfun^\mathrm{TIK}_{\regparam_t}(\bm{S}) \bm{V}\transpose) \otimes (\bm{U} \filterfun^\mathrm{TIK}_{\regparam_d}(\bm{\Sigma})  \bm{U}\transpose)\right) \ve(\bm{Y})\\
& =\left((  \bm{V}\otimes \bm{U}) (\filterfun^\mathrm{TIK}_{\regparam_t}(\bm{S})\otimes \filterfun^\mathrm{TIK}_{\regparam_d}(\bm{\Sigma}) ) (  \bm{V}\otimes \bm{U})\transpose\right) \ve(\bm{Y})\,.
}
Hence, one can interpret two-step KRR with a complete training set for Setting D as a spectral filtering regularization based learning algorithm that uses the pairwise Kronecker product kernel with the following filter function.
\eqn{
\vartheta^\mathrm{TS}_{\regparam}(s,\sigma)&=\filterfun^\mathrm{TIK}_{\regparam_t}(s)\filterfun^\mathrm{TIK}_{\regparam_d}(\sigma)\nonumber\\
& =\frac{1}{(\sigma+\regparam_d)(s+\regparam_t)}\nonumber\\
&=\frac{1}{\sigma s+\regparam_t\sigma+\regparam_ds+\regparam_t\regparam_d}\,.\label{twostepfilter}
}
The validity of this filter is characterized by the following theorem.
\begin{mytheo}
The filter function $\vartheta^\mathrm{TS}_{\regparam}(\cdot,\cdot)$ is admissible with $D=B=\gamma=1$, $\gamma_\nu=2\ebound\rbound$, and has qualification $\bar{\nu}=\frac{1}{2}$ for all factorizations of $\varsigma$ and $\regparam$ as
\eqn{\label{factorizations}
\varsigma=\sigma s\textnormal{ and }\regparam=\regparam_t\regparam_d\quad\text{ with }0\leq\sigma, s\leq\ebound\sqrt{\varsigma}\textnormal{ and }0<\regparam_t,\regparam_d\leq\rbound\sqrt{\regparam}\,,
}
where $1\leq\ebound,\rbound<\infty$ are constants that do not depend on $\regparam$ or $\varsigma$.
\end{mytheo}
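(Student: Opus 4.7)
The plan is to verify each of the four admissibility conditions in Definition~\ref{filterdef} separately for $\vartheta_{\regparam}^{\mathrm{TS}}$, working with an arbitrary factorization satisfying the constraints in Eq.~(\ref{factorizations}). The first three constants $D, B, \gamma$ should fall out essentially for free, since $\varsigma\vartheta_{\regparam}^{\mathrm{TS}}(\varsigma) = \frac{\sigma}{\sigma+\regparam_d}\cdot\frac{s}{s+\regparam_t}$ is a product of two Tikhonov-type fractions, each sitting in $[0,1]$. This immediately bounds $\varsigma\vartheta_{\regparam}^{\mathrm{TS}}(\varsigma)$ by $1$ (so $D=1$), bounds $\vartheta_{\regparam}^{\mathrm{TS}}(\varsigma)$ by $1/(\regparam_d\regparam_t)=1/\regparam$ (so $B=1$), and forces $1-\varsigma\vartheta_{\regparam}^{\mathrm{TS}}(\varsigma)\in[0,1]$ (so $\gamma=1$).

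For the qualification condition the key identity is the telescoping bound $1-AB = (1-A)+A(1-B) \le (1-A)+(1-B)$ applied with $A=\sigma/(\sigma+\regparam_d)$ and $B=s/(s+\regparam_t)$, which factors the error into two single-variable Tikhonov residuals $\regparam_d/(\sigma+\regparam_d)$ and $\regparam_t/(s+\regparam_t)$. After multiplying by $\varsigma^\nu/\regparam^\nu = \sigma^\nu s^\nu/(\regparam_d^\nu\regparam_t^\nu)$, the target expression splits into
\[
\frac{s^\nu}{\regparam_t^\nu}\cdot\frac{\sigma^\nu\regparam_d^{1-\nu}}{\sigma+\regparam_d}
\;+\;
\frac{\sigma^\nu}{\regparam_d^\nu}\cdot\frac{s^\nu\regparam_t^{1-\nu}}{s+\regparam_t}.
\]
The crucial step is to use the balance constraints from Eq.~(\ref{factorizations}): $\sigma,s\le \ebound\sqrt{\varsigma}$ forces $s\le \ebound^2\sigma$, and $\regparam_d,\regparam_t\le\rbound\sqrt{\regparam}$ forces $\regparam_t\ge \regparam_d/\rbound^2$, together yielding $s^\nu/\regparam_t^\nu \le (\ebound\rbound)^{2\nu}\,\sigma^\nu/\regparam_d^\nu$ (and symmetrically for the other term). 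Each summand thus reduces to $(\ebound\rbound)^{2\nu}\sigma^{2\nu}\regparam_d^{1-2\nu}/(\sigma+\regparam_d)$, at which point weighted AM--GM gives $\sigma^{2\nu}\regparam_d^{1-2\nu} \le 2\nu\,\sigma + (1-2\nu)\,\regparam_d \le \sigma+\regparam_d$ for every $\nu\in(0,\tfrac{1}{2}]$, and the prefactor satisfies $(\ebound\rbound)^{2\nu}\le \ebound\rbound$ on the same range because $\ebound,\rbound\ge 1$. Summing the two symmetric terms yields $\gamma_\nu = 2\ebound\rbound$.

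To establish tightness of $\bar\nu = \tfrac{1}{2}$, I would pick the symmetric factorization $\sigma=s=\sqrt{\varsigma}$, $\regparam_d=\regparam_t=\sqrt{\regparam}$, which is always admissible. A direct computation shows $1-\varsigma\vartheta_\regparam^{\mathrm{TS}}(\varsigma)$ behaves like $2\sqrt{\regparam/\varsigma}$ in the regime $\varsigma\gg\regparam$, so $(\varsigma/\regparam)^\nu\bigl(1-\varsigma\vartheta_\regparam^{\mathrm{TS}}(\varsigma)\bigr)$ scales like $(\varsigma/\regparam)^{\nu-1/2}$, which is unbounded as $\regparam\downarrow 0$ whenever $\nu>\tfrac{1}{2}$.

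The main obstacle is the qualification inequality: unlike ordinary Tikhonov, the sup has to be taken jointly over all admissible factorizations $(\sigma,s)$ and $(\regparam_d,\regparam_t)$, so one cannot simply quote the one-variable Tikhonov bound twice—some mixed terms like $s^\nu/\regparam_t^\nu$ are genuinely unbounded without invoking the balance inequalities $\sigma\le\ebound^2 s$ and $\regparam_t\le\rbound^2\regparam_d$. Identifying and deploying those inequalities at the right moment is the one nonroutine move; everything else is AM--GM bookkeeping.
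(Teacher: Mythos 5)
Your proposal is correct, and all four constants come out matching the theorem ($D=B=\gamma=1$, $\gamma_\nu=2\ebound\rbound$, $\bar\nu=\tfrac12$), but the route through the qualification condition differs from the paper's. The paper keeps the residual in its joint rational form $1-\varsigma/(\varsigma+\regparam_t\sigma+\regparam_d s+\regparam)$, observes that the supremum over admissible factorizations is controlled by bounding the cross-term $\regparam_t\sigma+\regparam_d s\leq 2\ebound\rbound\sqrt{\regparam\varsigma}$, and then analyzes the resulting single expression $\bigl(2\ebound\rbound\regparam^{\frac12-\nu}\varsigma^{\nu+\frac12}+\regparam^{1-\nu}\varsigma^{\nu}\bigr)/\bigl(\varsigma+2\ebound\rbound\sqrt{\regparam\varsigma}+\regparam\bigr)$ by ``checking the extreme values \ldots using standard differential calculus,'' reading off both the bound $2\ebound\rbound$ and the blow-up of the $\regparam^{\frac12-\nu}$ term for $\nu>\tfrac12$. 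You instead telescope $1-AB\leq(1-A)+(1-B)$ into two one-dimensional Tikhonov residuals, convert the mixed ratios via the balance inequalities $s\leq\ebound^2\sigma$ and $\regparam_t\geq\regparam_d/\rbound^2$ (which are exactly the ratio form of the same factorization constraints the paper exploits in product form), and close with weighted AM--GM; tightness of $\bar\nu=\tfrac12$ is shown on the explicit symmetric factorization $\sigma=s=\sqrt{\varsigma}$, $\regparam_d=\regparam_t=\sqrt{\regparam}$, which is admissible since $\ebound,\rbound\geq1$. Your version is more elementary and fully explicit where the paper delegates to an unspecified calculus computation; the paper's version buys a single closed-form expression from which both the upper bound and the qualification breakdown are visible at once. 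Both correctly identify that the supremum must be taken jointly over factorizations and that the constraint structure in Eq.~(\ref{factorizations}) is what keeps the mixed terms bounded.
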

\begin{proof}
%It is straightforward to see that the admissibility of the function (\ref{twostepfilter}) is confirmed by $D, B,\gamma=1$ for all positive factorizations $\regparam=\regparam_t\regparam_d$ and $\varsigma=\sigma s$.
Let us recollect the last condition in Definition~\ref{filterdef}:
\[
\sup_{0<\varsigma\leq\kappa^2}\frac{\varsigma^\nu}{\regparam^\nu}\arrowvert 1-\varsigma\filterfun_\regparam(\varsigma)\arrowvert\leq \gamma_\nu,\quad\text{for any }\nu\in\,]0,\bar{\nu}]\,,
\]
where $\gamma_\nu$ does not depend on $\regparam$. In order to show this for all cases covered by Eq.~(\ref{factorizations}), we rewrite the condition by taking the supremum with respect to the factorizations of $\varsigma$ and $\regparam$ given the constants $\ebound$ and $\rbound$:
\eq{
\sup_{
\underset{\underset{ 0<\sigma, s\leq\ebound\sqrt{\varsigma}}{ 0<\regparam_t,\regparam_d\leq\rbound\sqrt{\regparam}}}{0<\varsigma\leq\kappa^2}}\frac{\varsigma^\nu}{\regparam^\nu}\left(1- \frac{\varsigma}{\varsigma+\regparam_t\sigma+\regparam_d s+\regparam}\right)\leq \gamma_\nu,\qquad\text{ for any }\nu\in\,]0,\bar{\nu}]\,.
}
The left-hand side then becomes
\eq{
\sup_{0<\varsigma\leq\kappa^2}\frac{\varsigma^\nu}{\regparam^\nu}\left(1- \frac{\varsigma}{\varsigma+2\ebound\rbound\sqrt{\regparam}\sqrt{\varsigma}+\regparam}\right)
=\sup_{0<\varsigma\leq\kappa^2}\left(\frac{2\ebound\rbound\regparam^{\frac{1}{2}-\nu}\varsigma^{\nu+\frac{1}{2}}+\regparam^{1-\nu}\varsigma^\nu}{\varsigma+2\ebound\rbound\sqrt{\regparam}\sqrt{\varsigma}+\regparam}\right)\,.
}
By checking the extreme values of the latter expression with respect to $(\varsigma,\regparam,\nu)$ using standard differential calculus, we observe that it is bounded by $\gamma_\nu=2\ebound\rbound$ if $\nu\in\,]0,\frac{1}{2}]$. With values of $\bar{\nu}$ larger than $\frac{1}{2}$, the term $2\ebound\rbound\regparam^{\frac{1}{2}-\nu}\varsigma^{\nu+\frac{1}{2}}$ in the numerator grows arbitrarily while $\regparam\rightarrow 0$, and hence the qualification is $\bar{\nu}=\frac{1}{2}$. The other conditions in Definition~\ref{filterdef} can be checked by direct computation.
\end{proof}
Thus, Eq.~(\ref{twostepfilter}) can be positioned within the spectral filtering regularization framework with separate regularization parameter values for instances and tasks. In contrast to Eq.~(\ref{kronfilter}), the filter of two-step KRR can be factorized into a component for the tasks and instances separately:
 \eqn{
\vartheta_\regparam(s, \sigma)=\filterfun_{\regparam_d}(\sigma)\filterfun_{\regparam_t}(s)\label{decompfilter}\,.
}
This decomposition gives rise to some computational shortcuts for performing cross-validation, as will be discussed in Section~\ref{computationalaspects}.

Providing a different regularization for instances and tasks also makes sense from a learning point of view. It is easy to imagine a setting in which the instance has a much larger influence in determining the label compared to the task or vice versa. For example, consider a collaborative filtering setting with the goal of recommending books for customers. Suppose that the sales of a book is for a very large part determined simply by being a bestseller novel or not, and less by individual customers' taste. When building a predictive model, one would give more freedom to the part concerning the books (hence a lower regularization). Less degrees of freedom are given to the inference of the users' personal task, as this is harder to learn and explains less of the variance in the preferences. This can be extended even further, by choosing specific filter functions separately for the instances and tasks tuned to the application at hand. In a pairwise setting, the filter function to perform independent-task KRR arises as a special case with $\lambda_t=0$:
\eq{
\vartheta^\mathrm{IT}_{\regparam_t}(s,\sigma) = \frac{1}{(\sigma+\lambda_d)s}\,,
}
when the task kernel is full rank (see Theorem~\ref{ITKRRTSeqiuivalence}).

Next, we analyze the consistency properties of two-step KRR in setting D, given the above results about the universality of the pairwise Kronecker product kernel and the spectral filtering interpretation of the method.
Let $R$ denote the expected prediction error of a hypothesis $\predfun$ with respect to some unknown probability measure $\rho(\bm{x}, y)$ on the joint space $\ispace\times\mathbb{R}$ of inputs and labels, that is,
\[
R(\predfun) = \int_{\ispace\times\mathbb{R}}(\predfun(\bm{x})- y)^2 d\rho(\bm{x}, y)\,.
\]
Given the input space $\ispace$, the minimizer of the error is the so-called regression function:
\[
\predfun_\rho(\bm{x})=\int_{\mathbb{R}} y\ d\rho(y\mid\bm{x})\,.
\]
%It can be shown that for filter functions satisfying some mild constraints, the
Following \citet{Bauer2007regularization,LoGerfo2008,Baldassarre2012}, we characterize the quality of a learning algorithm via its consistency properties. In particular, by considering whether the learning algorithm is consistent in the following sense:
\begin{definition}%{\bf \citet{Vapnik1998}}
\label{consistencydef}
A learning algorithm is consistent if the following holds with high probability
\[
\lim_{\lsize\rightarrow\infty}
\int_{\ispace}\left(\hat{\predfun}^\regparam_\lsize(\bm{x}) - \predfun_\rho(\bm{x})\right)^2 d\rho(\bm{x})=0\,,
%\int\left(\hat{\predfun}^\regparam(\bm{x}) - \textnormal{inf}_{\predfun\in\hypspace}R(\predfun)\right)^2 d_X\rho(\bm{x})=0
%R\left(\hat{\predfun}^\regparam(\bm{x})\right) - R\left(\textnormal{inf}_{\predfun\in\hypspace}R(\predfun)\right)=0
%R\left(\hat{\predfun}^\regparam(\bm{x})\right) - R\left(\predfun_\rho\right)=0
\]
where $\hat{\predfun}^\regparam_\lsize$ denotes the hypothesis inferred by the learning algorithm from a training set having $\lsize$ independently and identically drawn training examples.
\end{definition}

The following result is assembled from the existing literature concerning spectral filtering based regularization methods and we present it here only in a rather abstract form. For the exact details and further elaboration, we refer to \citet{Bauer2007regularization,LoGerfo2008,Baldassarre2012}. 
\begin{mytheo}
If the filter function is admissible and the kernel function is universal, then the learning algorithm is consistent in the sense of Def.~\ref{consistencydef}. Furthermore, if the regularization parameter is set as $\regparam=\frac{1}{\lsize^{2\bar{\nu}+1}}$, where $\lsize$ denotes the number of independently and identically drawn training examples, then the following holds with high probability:
\eqn{%\label{consistency}
%R(\hat{\predfun}^\regparam)-\textnormal{inf}_{\predfun\in\hypspace}R(\predfun)
R(\hat{\predfun}^\regparam)-R(\predfun_\rho(\bm{x}))
 = \mathcal{O}\left(\lsize^{-\frac{\bar{\nu}}{2\bar{\nu}+1}}\right)\,.
}
\end{mytheo}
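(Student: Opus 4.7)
The plan is to follow the now-standard bias/variance decomposition for spectral filtering based learning algorithms, as developed in \citet{Bauer2007regularization,LoGerfo2008,Baldassarre2012}, and essentially just verify that the hypotheses of this general machinery are met here. Concretely, I would write the excess risk as
\[
R(\hat{\predfun}^\regparam_\lsize)-R(\predfun_\rho) \;\leq\; \underbrace{\bigl\| \hat{\predfun}^\regparam_\lsize - \predfun^\regparam \bigr\|_{L^2(\rho_\ispace)}^2}_{\text{sample error}} \;+\; \underbrace{\bigl\|\predfun^\regparam - \predfun_\rho\bigr\|_{L^2(\rho_\ispace)}^2}_{\text{approximation error}}\,,
\]
where $\predfun^\regparam$ denotes the filtered population minimizer, i.e.\ the hypothesis one would obtain by applying the spectral filter $\filterfun_\regparam$ to the integral operator $L_\kkernelf$ associated with the pairwise Kronecker kernel on the full distribution.

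For the approximation term, universality of the pairwise Kronecker product kernel (the preceding theorem) implies that the associated RKHS $\mathcal{H}(\kernelf)\otimes\mathcal{H}(\gkernelf)$ is dense in $C(\objspace\times\taskspace)$, so $\predfun_\rho$ can be arbitrarily well approximated in $L^2(\rho_\ispace)$. Under a standard source condition of the form $\predfun_\rho = L_\kkernelf^{\bar{\nu}} h$ with $h\in L^2(\rho_\ispace)$, the admissibility bound $\sup_{\sigma}\frac{\regparam^\nu}{\sigma^\nu}|1-\sigma\filterfun_\regparam(\sigma)|\leq \gamma_\nu$ from Definition~\ref{filterdef}, applied to the filter $\vartheta^\mathrm{TS}_\regparam$ with qualification $\bar{\nu}=\tfrac{1}{2}$ established in the previous theorem, yields $\|\predfun^\regparam-\predfun_\rho\|^2 = \mathcal{O}(\regparam^{2\bar{\nu}})$. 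In the absence of a source condition, density alone still drives this term to zero as $\regparam\to 0$, which is enough for qualitative consistency.

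For the sample term, one uses the other three bounds in Definition~\ref{filterdef} ($|\sigma\filterfun_\regparam(\sigma)|\leq D$, $|\filterfun_\regparam|\leq B/\regparam$, $|1-\sigma\filterfun_\regparam|\leq \gamma$), combined with a concentration inequality (typically Pinelis--Bernstein in Hilbert space) for the perturbation $\|L_\kkernelf - \hat L_\kkernelf\|$ between the population and empirical covariance operators; this is exactly the derivation of \citet[Prop.~3]{Bauer2007regularization}. The output is a high-probability bound of order $\mathcal{O}(1/(\regparam\lsize))$. Adding the two contributions gives
\[
R(\hat{\predfun}^\regparam_\lsize)-R(\predfun_\rho) \;=\; \mathcal{O}\!\left(\frac{1}{\regparam\lsize}\right) + \mathcal{O}\!\left(\regparam^{2\bar{\nu}}\right)\,,
\]
and optimizing in $\regparam$ gives the balancing choice $\regparam \asymp \lsize^{-1/(2\bar{\nu}+1)}$, which produces the announced rate $\mathcal{O}\bigl(\lsize^{-\bar{\nu}/(2\bar{\nu}+1)}\bigr)$; consistency follows a fortiori.

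The main obstacle, and the reason the authors present the result in abstract form, is that the sample-error concentration argument is not specific to two-step KRR at all but is an operator-theoretic statement about general spectral filtering; reproducing it in detail would essentially duplicate Sections~3--4 of \citet{Bauer2007regularization}. My plan is therefore to explicitly cite these results, verify hypothesis-by-hypothesis that (i) our filter $\vartheta^\mathrm{TS}_\regparam$ is admissible with $\bar{\nu}=\tfrac{1}{2}$ (already shown), (ii) the pairwise Kronecker kernel is bounded and universal (already shown), and (iii) the training data are i.i.d.\ on $\objspace\times\taskspace\times\mathbb{R}$, and then invoke the theorems of \citet{Bauer2007regularization,LoGerfo2008,Baldassarre2012} to conclude.
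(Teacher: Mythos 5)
Your proposal matches the paper's treatment: the paper gives no proof of this theorem at all, stating explicitly that the result is ``assembled from the existing literature'' and deferring entirely to \citet{Bauer2007regularization,LoGerfo2008,Baldassarre2012}, which is exactly the bias/variance decomposition plus concentration machinery you sketch and then propose to invoke after checking admissibility of the filter, universality of the kernel, and the i.i.d.\ assumption. The only point worth flagging is cosmetic: your balancing choice $\regparam\asymp\lsize^{-1/(2\bar{\nu}+1)}$ is the standard one from those references (the paper's displayed $\regparam=\lsize^{-(2\bar{\nu}+1)}$ reads like a typo for it), and the rate your balancing actually yields, $\mathcal{O}\left(\lsize^{-2\bar{\nu}/(2\bar{\nu}+1)}\right)$ for the squared $L^2$ error under the source condition, is stronger than and hence implies the stated $\mathcal{O}\left(\lsize^{-\bar{\nu}/(2\bar{\nu}+1)}\right)$.
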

Intuitively put, the universality of the kernel ensures that the regression function belongs to the hypothesis space of the learning algorithm
%$\textnormal{inf}_{\predfun\in\hypspace}R(\predfun)$ in (\ref{consistency}) is the error of the underlying regression function to be learned, 
and the admissibility of the regularizer ensures that $R(\hat{\predfun}^\regparam)$ converges to it when the size of the training set approaches infinity and the rate of convergence is reasonable.
%This, in turn, guarantees the consistency of two-step KRR method.

\begin{corollary}
Two-step KRR is consistent and the hypothesis it infers from the training set of size $\lsize=\osize\qsize$ converges to the underlying regression function with a rate at least proportional to
\eqn{%\label{consistency}
R(\hat{\predfun}^\regparam)-R(\predfun_\rho(\bm{d}, \bm{t}))
 = \mathcal{O}\left(\min(\osize,\qsize)^{-\frac{\bar{\nu}}{2\bar{\nu}+1}}\right)\,.
}
\end{corollary}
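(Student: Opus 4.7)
The plan is to invoke the general consistency theorem stated just before the corollary, checking that its two hypotheses (universality of the kernel, admissibility of the filter) are both already in hand for two-step KRR, and then carefully accounting for the fact that the $n=\osize\qsize$ training dyads produced by a complete grid are not $n$ i.i.d.\ draws but rather a Cartesian product of $\osize$ i.i.d.\ instance draws and $\qsize$ i.i.d.\ task draws. The latter point is precisely what turns the generic rate $n^{-\bar{\nu}/(2\bar{\nu}+1)}$ into $\min(\osize,\qsize)^{-\bar{\nu}/(2\bar{\nu}+1)}$.

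First I would note that, by the universality theorem for $\kkernelf^{\mathrm{KK}}$, the Kronecker product kernel is universal whenever $\kernelf$ and $\gkernelf$ are, so the regression function $\predfun_\rho(\bm{d},\bm{t})$ lies in the closure of the hypothesis space used by two-step KRR. Next I would recall, from the spectral-filtering derivation of two-step KRR, that the method is exactly spectral regularization of the Kronecker kernel with the decomposable filter $\vartheta^{\mathrm{TS}}_\regparam(s,\sigma)=\filterfun^{\mathrm{TIK}}_{\regparam_d}(\sigma)\filterfun^{\mathrm{TIK}}_{\regparam_t}(s)$, and that the immediately preceding theorem shows this filter is admissible with qualification $\bar{\nu}=\tfrac{1}{2}$ under the factorization $\regparam=\regparam_d\regparam_t$. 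Both hypotheses of the abstract consistency theorem are therefore satisfied.

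The subtle step is the rate. Because the filter factorizes, I would split the two-step hypothesis as the composition of an instance-side KRR (trained on $\osize$ i.i.d.\ instances) followed by a task-side KRR (trained on $\qsize$ i.i.d.\ tasks), and apply the abstract consistency theorem twice: once with sample size $\osize$ and regularization $\regparam_d\propto \osize^{-1/(2\bar{\nu}+1)}$, once with sample size $\qsize$ and regularization $\regparam_t\propto \qsize^{-1/(2\bar{\nu}+1)}$. Each KRR then approximates its respective target (the conditional regression function along one axis) at rate $\mathcal{O}(\osize^{-\bar{\nu}/(2\bar{\nu}+1)})$ and $\mathcal{O}(\qsize^{-\bar{\nu}/(2\bar{\nu}+1)})$ respectively. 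A triangle-inequality decomposition of $R(\hat{\predfun}^\regparam)-R(\predfun_\rho)$ into (i) the error of the instance step propagated through the bounded task operator and (ii) the error of the task step gives an overall bound dominated by the larger of the two residual terms, namely $\mathcal{O}\bigl(\min(\osize,\qsize)^{-\bar{\nu}/(2\bar{\nu}+1)}\bigr)$. The corresponding $\regparam = \regparam_d\regparam_t$ stays within the admissible factorization regime used in the preceding theorem, so no further conditions need to be checked.

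The hard part, I expect, is this last step: making the reduction to two independent KRR problems fully rigorous. One must justify that the propagation of the instance-step residual through the task-step operator remains bounded (using the uniform bounds $D,B$ from the admissibility definition), and that the i.i.d.\ structure assumed by the abstract theorem really is available along each axis separately even though the full set of $\osize\qsize$ labels is not i.i.d.\ on $\objspace\times\taskspace$. Once these two points are settled, the claimed rate and consistency both follow immediately by combining the two applications of the abstract theorem with the universality of $\kkernelf^{\mathrm{KK}}$ and the admissibility of $\vartheta^{\mathrm{TS}}_\regparam$.
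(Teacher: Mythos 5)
Your proposal assembles the right ingredients --- the universality of $\kkernelf^{\mathrm{KK}}$ and the admissibility of the decomposable filter $\vartheta^{\mathrm{TS}}_\regparam$ --- but then departs from the paper on the step that actually produces the rate, and the route you choose has an unresolved gap. The paper's proof applies the abstract consistency theorem exactly \emph{once}, to two-step KRR viewed as a single spectral-filtering algorithm on the product space $\objspace\times\taskspace$ with the universal Kronecker kernel and the admissible pairwise filter; the factor $\min(\osize,\qsize)$ enters only through the observation that the complete grid of $\osize\qsize$ dyads contains at least $\min(\osize,\qsize)$ independently and identically drawn training examples (a diagonal sub-sample pairing distinct instances with distinct tasks), so the generic rate $\lsize^{-\bar{\nu}/(2\bar{\nu}+1)}$ is instantiated with effective sample size $\min(\osize,\qsize)$. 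This is precisely why the preceding theorem went to the trouble of establishing admissibility of $\vartheta^{\mathrm{TS}}_\regparam$ as a \emph{pairwise} filter: so that no decomposition into two separate learning problems is needed.

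Your alternative --- applying the abstract theorem separately along each axis and composing via a triangle inequality --- is not merely harder to make rigorous, it falls outside the hypotheses of the theorem you are invoking. The first-stage problem is a $\qsize$-output regression, not the scalar setting of the abstract theorem, and more importantly the second-stage KRR is trained on the \emph{predictions} of the first stage rather than on i.i.d.\ draws of true labels, so its ``training set'' does not satisfy the i.i.d.\ assumption at all; controlling the propagated bias requires a perturbation argument that the bounds $D,B$ of the admissibility definition do not by themselves supply. You flag this yourself as the hard part but leave it open, which is exactly where the proof is missing. The fix is to abandon the two-stage reduction and use the single-application argument above.
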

\begin{proof}
The result follows from the admissibility of the pairwise filter function, the universality of the pairwise Kronecker product kernel and the fact that the training set consists of at least $\min(\osize,\qsize)$ independently and identically drawn training examples.
\end{proof}

\section{Efficient algorithms for two-step kernel ridge regression}\label{computationalaspects}
In this section we derive some computational shortcuts for two-step kernel ridge regression. For Kronecker kernel ridge regression, it is well known that the huge system of Eq.~(\ref{kronsystem}) can be solved efficiently because the Gram matrix can be decomposed. Our two-step method takes this decomposition one step further, as the model can be seen as applying two consecutive regression steps. This allows to derive efficient algorithms for cross-validation for each of the four settings depicted in Figure~\ref{conceptrelations}, while the original Kronecker kernel ridge regression only allowed for a shortcut for Setting A. The same linear algebra can also be used to implement a scheme for online updating of the model with new instances or tasks. These shortcuts for cross-validating for Settings B, C and D and online updating cannot be derived for Kronecker ridge regression in general.

\subsection{Efficient hold-out computation}\label{HOOshortcuts}
As indicated in Figure~\ref{conceptrelations}, making predictions in a dyadic setting is much more complex compared to the classical case when there is only one task. This implies that the correct way to assess the performance and to do model selection is also more complicated. To this end we suggest cross-validation settings which take the structure of the label matrix into account. Depending on the prediction setting of interest, one should withhold individual elements, rows, columns or both of the label matrix, as shown in Figure~\ref{CVillustration}. These schemes have been discussed in other works, often in the context of predicting interactions in molecular biology, e.g.~\citep{Park2012,Pahikkala2015}. In this work, we will only focus on deriving shortcuts for leave-one-out cross-validation for these settings. Extensions to general hold-out schemes can be obtained using similar reasonings as the one in this section. 

It is well known that for independent-task kernel ridge regression one can efficiently compute the values for leave-one-out cross-validation, provided one has stored the hat matrix~\citep{rifkin2007notes}. For the instances, using the kernel $\kernelf(\cdot,\cdot)$, the hat matrix is denoted as $\bm{H}^\kernelf = \dkernelm(\dkernelm +\lambda_d\idmatrix)^{-1}$ (see Eq.~(\ref{hatk})). As noted earlier, if this matrix is obtained using an eigenvalue decomposition of $\dkernelm$, $\bm{H}^\kernelf$ can be computed efficiently for any $\lambda_d$:
\eq{
\bm{H}^\kernelf & = \dkernelm(\dkernelm +\lambda_d\idmatrix)^{-1} \\
 & = \bm{U} \bm{\Sigma} \bm{U}\transpose (\bm{U} \bm{\Sigma} \bm{U}\transpose +\lambda_d\idmatrix )^{-1} \\
 & = \bm{U} \bm{\Sigma} \bm{U}\transpose \bm{U} (\bm{\Sigma}  +\lambda_d\idmatrix )^{-1} \bm{U}\transpose \\
 & =  \bm{U}  (\bm{\Sigma} (\bm{\Sigma} + \lambda_d \idmatrix)^{-1} ) \bm{U}\transpose \,.
}
\begin{figure}[t]
   \centering
   \includegraphics[width=0.6\textwidth]{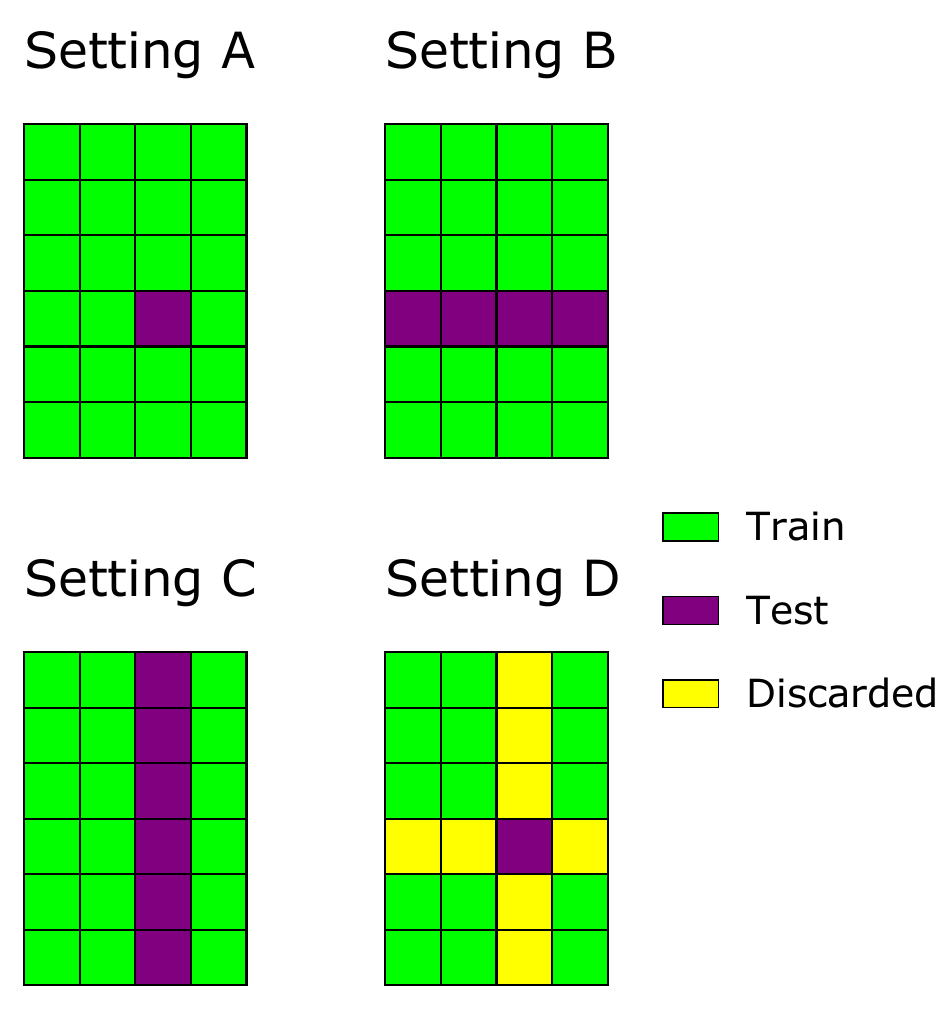} % requires the graphicx package
   \caption{Overview of different versions of leave-one-out cross-validation settings discussed in Section~\ref{related_settings} applied to a problem with six instances and four tasks.}
   \label{CVillustration}
\end{figure}

First, we will derive the well-known shortcut for independent-task kernel ridge regression for calculating the leave-one-out cross-validation values, using our notation. We start with the classical leave-one-out shortcut.
\begin{mytheo}\label{loocvProp}
A single row of the matrix $\mathbf{F}^\mathrm{IT, LOO}$ containing the labels re-estimated by leave-one-out cross-validation using independent-task kernel ridge regression with associated hat matrix $\bm{H}^\kernelf$, can be calculated as
\eq{
\mathbf{F}^{\mathrm{IT, LOO}}_{i.} = \frac{\bm{H}^\kernelf_{i.}\bm{Y} - \bm{H}^\kernelf_{ii}\bm{Y}_{i.} }{1-\bm{H}^\kernelf_{ii}}\,.
}
\end{mytheo}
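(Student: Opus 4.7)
The plan is to reproduce the classical leave-one-out shortcut for linear smoothers, applied row-wise to the matrix formulation used in the paper. Because each task in independent-task KRR is fit separately with the same instance kernel matrix $\dkernelm$, the same hat matrix $\bm{H}^\kernelf$ acts on every column of $\bm{Y}$, so it suffices to derive the identity column-by-column and then stack the columns into the row identity claimed in the theorem.

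First I would fix a task $j$ and consider the scalar KRR problem
\eq{
\widehat{\bm{a}}_{.j} \;=\; \argmin_{\bm{a}\in\mathbb{R}^\osize}\; (\bm{Y}_{.j}-\dkernelm\bm{a})\transpose(\bm{Y}_{.j}-\dkernelm\bm{a}) + \regparam_d\,\bm{a}\transpose \dkernelm\bm{a}\,,
}
with training predictions $\dkernelm\widehat{\bm{a}}_{.j} = \bm{H}^\kernelf \bm{Y}_{.j}$. Let $\widehat{\bm{a}}_{.j}^{(-i)}$ denote the minimizer of the same objective with the $i$-th training pair dropped, and let $f^{\mathrm{LOO}}_{ij}$ denote its prediction at $\bm{d}_i$. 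The key lemma I would establish is the standard \emph{replacement identity}: if one re-solves the full (not left-out) KRR problem after replacing $\bm{Y}_{ij}$ by $f^{\mathrm{LOO}}_{ij}$, the resulting minimizer coincides with $\widehat{\bm{a}}_{.j}^{(-i)}$. This follows from strict convexity of the quadratic objective together with the observation that the gradient contribution of the $i$-th residual vanishes at $\widehat{\bm{a}}_{.j}^{(-i)}$ under this modification, so $\widehat{\bm{a}}_{.j}^{(-i)}$ is also a stationary point — and hence the unique minimizer — of the modified full problem.

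Given the replacement identity, the prediction at $\bm{d}_i$ of the modified full-data fit is, on one hand, $f^{\mathrm{LOO}}_{ij}$, and on the other hand equals $\bm{H}^\kernelf_{i.}\widetilde{\bm{Y}}_{.j}$, where $\widetilde{\bm{Y}}_{.j}$ is $\bm{Y}_{.j}$ with its $i$-th entry replaced by $f^{\mathrm{LOO}}_{ij}$. Expanding the right-hand side and isolating the $i$-th term gives
\eq{
f^{\mathrm{LOO}}_{ij} \;=\; \bm{H}^\kernelf_{i.}\bm{Y}_{.j} \;-\; \bm{H}^\kernelf_{ii}\bm{Y}_{ij} \;+\; \bm{H}^\kernelf_{ii}\,f^{\mathrm{LOO}}_{ij}\,,
}
and solving for $f^{\mathrm{LOO}}_{ij}$ — which is legitimate since the diagonal shrinkage factor $\bm{H}^\kernelf_{ii}\in[0,1)$ whenever $\regparam_d>0$ — yields $f^{\mathrm{LOO}}_{ij} = (\bm{H}^\kernelf_{i.}\bm{Y}_{.j}-\bm{H}^\kernelf_{ii}\bm{Y}_{ij})/(1-\bm{H}^\kernelf_{ii})$. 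Running this argument simultaneously for $j=1,\ldots,\qsize$ (all tasks share the same $\bm{H}^\kernelf$) and assembling the results into a row gives exactly the claim.

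I expect the main obstacle to be a clean justification of the replacement identity, since it is the only non-mechanical step; everything else is linear algebra. I would handle it either by the stationarity argument above or, equivalently, by a direct Sherman–Morrison computation on $(\dkernelm+\regparam_d\idmatrix)^{-1}$ after a rank-one modification corresponding to removing instance $i$, whichever proves cleaner in the paper's notational conventions. Once that identity is in place, the remaining steps are short and entirely algebraic.
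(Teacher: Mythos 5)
Your argument is correct and is precisely the classical leave-one-out shortcut that the paper's one-line proof defers to (it only cites Wahba, Pahikkala, and Rifkin's notes rather than deriving anything), applied column-by-column using the fact that all tasks share the same hat matrix $\bm{H}^\kernelf$. The replacement identity is the standard key lemma in those references — most cleanly justified by noting that the modified full objective equals the left-out objective plus a nonnegative term that vanishes at the left-out minimizer, which is the rigorous form of your stationarity remark — and the rest of your derivation, including the observation that $\bm{H}^\kernelf_{ii}<1$ for $\regparam_d>0$, is exactly as intended.
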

\begin{proof}
This is merely a multivariate version of the leave-one-out shortcut, proven in texts such as~\cite{Wahba1990,Pahikkala2006,rifkin2007notes}.
\end{proof}
The above theorem can be applied to ridge regression and related models. The shortcut is relevant for both Settings A and B, as it is used to estimate how well the model can generalize to new instances, though kernel ridge regression does not use any information on the tasks. Starting from this general shortcut and using the connection between independent-task, two-step and Kronecker KRR, we can derive shortcuts for Setting A.
\begin{mycol}[Setting A]\label{loocvKKA}
A single element of the matrix containing the labels re-estimated by leave-one-out cross-validation for Setting A, i.e.\ leaving out one dyad at a time, using Kronecker kernel ridge regression, can be calculated as
\eq{
\mathbf{F}^\mathrm{KK, LOO, A}_{ij} = \frac{\bm{H}^\mathrm{KK}_{s.}\ve (\bm{Y}) -\bm{H}_{ss}^\mathrm{KK}\bm{Y}_{ij}}{1 - \bm{H}^\mathrm{KK}_{ss}}\,,
}
with $\bm{H}^\mathrm{KK} = (\tkernelm\otimes\dkernelm)(\tkernelm\otimes\dkernelm + \lambda\idmatrix)^{-1}$ and $s =\osize j+i$.
\end{mycol}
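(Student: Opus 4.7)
The plan is to view this as a direct corollary of the classical leave-one-out shortcut (Theorem~\ref{loocvProp}), by recognizing that Kronecker kernel ridge regression for Setting A is structurally identical to an ordinary kernel ridge regression problem whose ``inputs'' are the $\osize\qsize$ dyads and whose kernel matrix is $\tkernelm\otimes\dkernelm$. Concretely, the Kronecker KRR solution in Eq.~(\ref{kronsystem}) is exactly the KRR solution applied to the data set $\{((\bm{d}_i,\bm{t}_j), \bm{Y}_{ij})\}$ with label vector $\ve(\bm{Y})$ and Gram matrix $\kkernelm = \tkernelm\otimes\dkernelm$, regularized by $\regparam\idmatrix$.

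First I would spell out this identification: leaving out a single dyad $(i,j)$ in Setting A corresponds, under the vectorization $\ve$, to leaving out exactly one training example at position $s=\osize j + i$ (since $\ve$ stacks columns of $\bm{Y}$, the $j$-th task's block of $\osize$ entries starts at index $\osize j$ and the $i$-th instance within it sits at offset $i$). Thus the leave-one-out re-estimate for dyad $(i,j)$ in the pairwise problem is the same as the standard scalar leave-one-out re-estimate at the $s$-th training example of the vectorized problem.

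Next I would invoke Theorem~\ref{loocvProp}, applied to this equivalent ``single-task'' ridge regression with hat matrix $\bm{H}^\mathrm{KK}=\kkernelm(\kkernelm+\regparam\idmatrix)^{-1}$ and label vector $\ve(\bm{Y})$. The theorem yields, for the $s$-th training example,
\eq{
\big[\bm{F}^{\mathrm{KK, LOO, A}}\big]_s
= \frac{\bm{H}^\mathrm{KK}_{s.}\,\ve(\bm{Y}) - \bm{H}^\mathrm{KK}_{ss}\,[\ve(\bm{Y})]_s}{1-\bm{H}^\mathrm{KK}_{ss}}.
}
Substituting $[\ve(\bm{Y})]_s = \bm{Y}_{ij}$ and re-indexing the output from $s$ back to $(i,j)$ gives the claimed formula.

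The only thing to be careful about is the indexing convention: I need to verify that my $s = \osize j + i$ is consistent with the column-stacking convention used by $\ve$ in the paper (which it is, per the definition in Section~\ref{dyadicprediciton}), and that Theorem~\ref{loocvProp} is in fact being applied in its scalar form rather than its multivariate form. Beyond that there is no real obstacle—the result is essentially a relabeling of the classical short-cut, with the whole content lying in the observation that Kronecker KRR \emph{is} an ordinary KRR on the vectorized dyads, so that the off-the-shelf LOO identity transfers without modification.
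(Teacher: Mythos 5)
Your proposal is correct and follows essentially the same route as the paper: identifying Kronecker KRR on a complete training set with ordinary KRR on the vectorized dyads (Gram matrix $\tkernelm\otimes\dkernelm$, label vector $\ve(\bm{Y})$) and then invoking Theorem~\ref{loocvProp} with the index $s=\osize j+i$. Your extra care about the column-stacking convention is a welcome addition but does not change the argument.
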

\begin{proof}
We have noted earlier that Kronecker kernel ridge regression is merely kernel ridge regression using the Gram matrix $\tkernelm\otimes\dkernelm$ and $\ve (\bm{Y})$ as the single output label vector. Since $\bm{H}^\mathrm{KK}$ is the corresponding hat matrix, the proof follows directly from rephrasing Theorem~\ref{loocvProp} in this terminology.
\end{proof}
By using the eigenvalue decomposition of the kernel matrices, the diagonal elements of $\bm{H}^\mathrm{KK}$ and $\bm{H}^\mathrm{KK}\ve(\bm{Y})$ can be computed efficiently.

\begin{mycol}[Setting A]\label{loocvTSA}
A single element of the matrix containing the labels re-estimated by leave-one-out cross-validation for Setting A, i.e.\ leaving out one dyad at a time, using two-step kernel ridge regression can be calculated as
\eq{
\mathbf{F}^\mathrm{TS, LOO, A}_{ij} = \frac{\bm{H}^\mathrm{TS}_{s.}\ve (\bm{Y}) -\bm{H}_{ss}^\mathrm{TS}\bm{Y}_{ij}}{1 - \bm{H}^\mathrm{TS}_{ss}}\,,
}
with $\bm{H}^\mathrm{TS} = (\tkernelm+\lambda_t\idmatrix)^{-1}\tkernelm\otimes(\dkernelm+\lambda_d\idmatrix)^{-1}\dkernelm$ and $s =\osize j+i$.
\end{mycol}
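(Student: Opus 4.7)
The plan is to reduce this corollary to an application of Corollary~\ref{loocvKKA} (the LOO shortcut for Kronecker KRR) via the equivalence between two-step KRR and a particular pairwise kernel ridge regression established in Theorem~\ref{KKTSequivalenceA}. The point is that once two-step KRR is recognized as a linear smoother whose smoothing matrix acts on $\ve(\bm{Y})$, the classical one-observation-out identity for ridge-type regressions (Theorem~\ref{loocvProp}) applies verbatim; only the hat matrix needs to be replaced.

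First I would verify that on the complete training set two-step KRR is a linear smoother with smoothing matrix $\bm{H}^\mathrm{TS}$. Starting from the closed-form expression $\bm{F}^\mathrm{TS} = \dkernelm \bm{A}^\mathrm{TS} \tkernelm = \dkernelm(\dkernelm+\lambda_d\idmatrix)^{-1}\bm{Y}(\tkernelm+\lambda_t\idmatrix)^{-1}\tkernelm$ coming from Eq.~(\ref{partwostepmatrix}), the identity $\ve(\bm{M}\bm{X}\bm{N})=(\bm{N}\transpose\otimes\bm{M})\ve(\bm{X})$ combined with the symmetry of the Gram matrices and the fact that $\dkernelm$ commutes with $(\dkernelm+\lambda_d\idmatrix)^{-1}$ (and analogously for $\tkernelm$) yields
\eq{
\ve(\bm{F}^\mathrm{TS}) = \left((\tkernelm+\lambda_t\idmatrix)^{-1}\tkernelm \otimes (\dkernelm+\lambda_d\idmatrix)^{-1}\dkernelm\right)\ve(\bm{Y}) = \bm{H}^\mathrm{TS}\ve(\bm{Y})\,.
}
By Theorem~\ref{KKTSequivalenceA}, this smoother is precisely the hat matrix of a pairwise kernel ridge regression problem with kernel matrix $\bm{\Xi}$, regularization parameter $1$, and vectorized label vector $\ve(\bm{Y})$; equivalently, $\bm{F}^\mathrm{TS}$ solves the ERM problem of Eq.~(\ref{eq:TSERM}) with the label-based regularizer $\ve(\bm{F})\transpose\bm{\Xi}^{-1}\ve(\bm{F})$.

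With that identification, the proof finishes in one step: leaving out a single dyad $(\bm{d}_i,\bm{t}_j)$ corresponds to dropping one term from the squared-loss sum in Eq.~(\ref{eq:TSERM}) while retaining the same regularizer (this is exactly the incomplete-data extension noted after Theorem~\ref{KKTSequivalenceA}), so the LOO refit is nothing more than ordinary kernel ridge regression on $\osize\qsize-1$ observations with hat matrix $\bm{H}^\mathrm{TS}$. Applying Corollary~\ref{loocvKKA} with $\bm{H}^\mathrm{KK}$ replaced by $\bm{H}^\mathrm{TS}$, and keeping the column-major indexing $s=\osize j+i$ inherited from $\ve$, delivers the claimed formula.

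The only real obstacle is the conceptual one of justifying that ``LOO for two-step KRR'' should be interpreted via the equivalent pairwise ERM rather than, say, by literally removing the dyad before running both sequential KRR stages; since the second interpretation is not even well-defined on an incomplete matrix, the pairwise-ERM interpretation is the natural one, and it is precisely what Theorem~\ref{KKTSequivalenceA} makes legitimate. Beyond that, the argument is mechanical, amounting to combining the vectorization identity with the classical LOO shortcut.
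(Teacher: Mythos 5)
Your proposal is correct and follows essentially the same route as the paper: identify two-step KRR as ordinary KRR with kernel matrix $\bm{\Xi}$ and label vector $\ve(\bm{Y})$ via Theorem~\ref{KKTSequivalenceA}, verify through the vectorization identity that $\bm{H}^\mathrm{TS}$ is the corresponding hat matrix, and then invoke the classical leave-one-out shortcut of Theorem~\ref{loocvProp} (routing through Corollary~\ref{loocvKKA} is immaterial, since that corollary is itself just Theorem~\ref{loocvProp} rephrased). Your closing remark justifying why the leave-one-dyad-out refit must be read through the pairwise ERM formulation of Eq.~(\ref{eq:TSERM}) is a point the paper leaves implicit, but it does not change the argument.
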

\begin{proof}
Theorem~\ref{KKTSequivalenceA} states that two-step KRR is merely kernel ridge regression with the Gram matrix $\bm{\Xi}$ and $\ve (\bm{Y})$ as the single output label vector. From Eq.~(\ref{partwostepmatrix}) it follows that $\bm{H}^\mathrm{TS}$ is the hat matrix for two-step kernel ridge regression in vectorized form:
\eq{
\bm{f} &= \ve (\dkernelm\bm{A}^\mathrm{TS}\tkernelm)\\
&= \ve (\dkernelm\left(\dkernelm+\regparam_d \idmatrix\right)^{-1} \bm{Y}  \left(\tkernelm+\regparam_t \idmatrix\right)^{-1}\tkernelm)\\
&=[(\tkernelm+\lambda_t\idmatrix)^{-1}\tkernelm\otimes(\dkernelm+\lambda_d\idmatrix)^{-1}\dkernelm]\ve(\bm{Y})\,.
}
Hence, the proof follows directly from rephrasing Theorem~\ref{loocvProp} in this terminology.
\end{proof}
These two shortcuts are of interest when validating or tuning models for collaborative filtering or network inference. For example, recently a model for recipe completion was validated by iteratively withholding single elements from the label matrix~\citep{DeClercq2015a}. An interesting application would be in supervised biological network inference, e.g. predicting interactions between biomolecules, such as proteins, ligands or DNA or predicting interactions between species, such as plants and pollinators (see e.g.~\cite{Rafferty2013,Hadfield2014} for applications of pairwise learning in such a setting). Such datasets are often plagued with false negatives or false positives, which make them difficult to analyze, see~\cite{Schrynemackers2013} for a discussion. Using the provided holdout tricks, it is possible to re-estimate each interaction extremely efficiently in order to screen for mislabeled observations.

Because two-step kernel ridge regression can be decomposed in two `steps', it is possible to derive shortcuts for leave-one-out cross-validation for Settings B, C and D. Below is a shortcut for Setting B, which allows for cross-validation of one row (instance) at a time.
\begin{mycol}[Setting B]\label{loocvTSB}
A single row of the matrix containing the labels re-estimated by leave-one-out cross-validation for Setting B, i.e.~leaving out one instance at a time, using two-step kernel ridge regression, can be calculated as
\eq{
\mathbf{F}^\mathrm{TS, LOO, B}_{i.} = \frac{\left(\mathbf{H}^\kernelf_{i.}\mathbf{Y}- \mathbf{H}^\kernelf_{ii}\mathbf{Y}_{i.}\right)\mathbf{H}^\gkernelf}{1- \mathbf{H}^\kernelf_{ii}}\,.
}
\end{mycol}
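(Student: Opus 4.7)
The plan is to exploit the fact that two-step KRR decomposes cleanly into an instance-direction regression followed by a task-direction regression, so that leaving out a full row affects only the first step. Concretely, I will argue that the matrix of predictions for all training dyads produced by two-step KRR can be written as $\mathbf{F} = \mathbf{H}^\kernelf\,\mathbf{Y}\,\mathbf{H}^\gkernelf$, where $\mathbf{H}^\kernelf = \dkernelm(\dkernelm+\regparam_d\idmatrix)^{-1}$ and $\mathbf{H}^\gkernelf = \tkernelm(\tkernelm+\regparam_t\idmatrix)^{-1}$ (using the fact that these symmetric hat matrices commute with their defining kernels). This follows directly from Eq.~(\ref{partwostepmatrix}) after left-multiplying by $\dkernelm$ and right-multiplying by $\tkernelm$, and it mirrors the representation already used in Corollary~\ref{loocvTSA}.

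Next I will unpack what Setting~B leave-one-out actually requires. Removing instance $i$ deletes the entire $i$-th row of $\bm{Y}$ from training; the set of tasks is untouched. Retraining two-step KRR on the reduced data then predicting at $\bm{d}_i$ for every training task $\bm{t}_j$ amounts to: (a) running the first-step instance KRR on $\bm{Y}_{\setminus i, :}$ and evaluating it at $\bm{d}_i$, which produces a row vector $\widehat{\bm{f}}_T(\bm{d}_i)\in\mathbb{R}^\qsize$ of virtual predictions; and (b) passing this vector through the second-step task KRR, whose training inputs are the full set of $\qsize$ tasks with labels $\widehat{\bm{f}}_T(\bm{d}_i)$, and whose predictions at all training tasks are obtained by applying $\mathbf{H}^\gkernelf$ from the right. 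The crucial observation is that step (b) depends on $i$ only through $\widehat{\bm{f}}_T(\bm{d}_i)$: the task kernel matrix $\tkernelm$ and regulariser $\regparam_t$ are unchanged, so its contribution is a fixed linear operator acting on the step-one output.

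For step (a), applying Theorem~\ref{loocvProp} (the independent-task LOO shortcut) to each column of $\bm{Y}$ in parallel gives
\[
\widehat{\bm{f}}_T(\bm{d}_i) \;=\; \frac{\mathbf{H}^\kernelf_{i.}\mathbf{Y} - \mathbf{H}^\kernelf_{ii}\mathbf{Y}_{i.}}{1-\mathbf{H}^\kernelf_{ii}}\,,
\]
because the first step is literally independent-task kernel ridge regression with hat matrix $\mathbf{H}^\kernelf$. Post-multiplying by $\mathbf{H}^\gkernelf$ (which delivers the predictions of step (b) at all training tasks, again using that $\tkernelm(\tkernelm+\regparam_t\idmatrix)^{-1}=(\tkernelm+\regparam_t\idmatrix)^{-1}\tkernelm$) and pulling the scalar denominator out of the product yields exactly the claimed expression for $\mathbf{F}^{\mathrm{TS},\mathrm{LOO},\mathrm{B}}_{i.}$.

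The only real obstacle is the justification that the LOO retraining commutes with the two-step decomposition in the way described, i.e.\ that performing LOO only at the first step and then passing the result through an unchanged second step is genuinely the same as retraining the entire two-step pipeline on $\bm{Y}_{\setminus i, :}$. I would address this by writing the retrained two-step prediction row for $\bm{d}_i$ as $\bm{k}_{\setminus i}(\bm{d}_i)^\top(\dkernelm_{\setminus i,\setminus i}+\regparam_d\idmatrix)^{-1}\bm{Y}_{\setminus i, :}\,(\tkernelm+\regparam_t\idmatrix)^{-1}\tkernelm$, recognising the left factor as the ordinary-KRR LOO prediction row (Theorem~\ref{loocvProp}) and the right factor as $\mathbf{H}^\gkernelf$; the identification is then immediate.
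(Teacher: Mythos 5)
Your proof is correct and follows essentially the same route as the paper's: both reduce Setting~B leave-one-out to the independent-task shortcut of Theorem~\ref{loocvProp} by observing that removing an instance leaves the task-side regression untouched, so the second step acts as a fixed linear operator. The only cosmetic difference is ordering --- the paper applies Theorem~\ref{loocvProp} directly to the transformed label matrix $\bm{Y}\mathbf{H}^\gkernelf$, whereas you apply it to $\bm{Y}$ and post-multiply by $\mathbf{H}^\gkernelf$ afterwards; the two coincide by linearity.
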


\begin{proof}
Using two-step KRR for Setting B can be thought of as just using independent-task KRR with $\mathbf{H}^\kernelf$ as the hat matrix and $ \bm{Y}\mathbf{H}^\gkernelf$ as the label matrix. The proof follows directly from rephrasing Theorem~\ref{loocvProp} in this terminology.
%\begin{comment}
%The $i$-th row of $\mathbf{Y}^\mathrm{TS, LOO, B}$ can be written in the primal form as 
%\eq{
%\mathbf{Y}^\mathrm{TS, LOO, B}_{i.} = \boldsymbol{\phi}_i\transpose (\bm{\Phi}\transpose\bm{\Phi} + \lambda_d \idmatrix- \boldsymbol{\phi}_i\boldsymbol{\phi}_i\transpose)^{-1} (\bm{\Phi}\transpose\bm{Y} -\boldsymbol{\phi}_i\bm{Y}_{i.})\mathbf{H}^\gkernelf\,.
%}
%
%Following a similar derivation as for the proof of Theorem~\ref{loocvProp}, the desired result is obtained.  
%\end{comment}
\end{proof}
Similarly, using the following shortcut one can perform cross-validation for Setting C, leaving out one column (task) at a time. 

\begin{mycol}[Setting C]\label{loocvTSC}
A single column of the matrix containing the labels re-estimated by leave-one-out cross-validation for Setting C, i.e.~leaving out one task at a time, using two-step kernel ridge regression, can be calculated as
\eq{
\mathbf{F}^\mathrm{TS, LOO, C}_{.j} = \frac{\mathbf{H}^\kernelf\left(\mathbf{Y}\mathbf{H}^\gkernelf_{.j}- \mathbf{Y}_{.j}\mathbf{H}^\gkernelf_{jj}\right)}{1 - \mathbf{H}^\gkernelf_{jj}}\,.
}
\end{mycol}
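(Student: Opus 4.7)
The plan is to follow exactly the strategy of Corollary~\ref{loocvTSB}, but applied from the task side rather than from the instance side. First, I would write the two-step KRR predictions on the training set as $\mathbf{F} = \mathbf{H}^\kernelf \mathbf{Y} \mathbf{H}^\gkernelf$, which follows directly from Eq.~(\ref{partwostepmatrix}) together with the definitions $\mathbf{H}^\kernelf = \dkernelm(\dkernelm+\lambda_d\idmatrix)^{-1}$ and $\mathbf{H}^\gkernelf = \tkernelm(\tkernelm+\lambda_t\idmatrix)^{-1}$. The decisive observation for Setting C is that the first (instance-side) regression processes each column of $\mathbf{Y}$ independently; hence removing column $j$ leaves the vector $\mathbf{H}^\kernelf \mathbf{Y}_{.k}$ unchanged for every retained task $k \neq j$. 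Consequently, only the second (task-side) step has to be recomputed during LOO.

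Next, I would view this second step column-wise: for each instance $i$, it amounts to running independent-task KRR on the task side, with hat matrix $\mathbf{H}^\gkernelf$ and labels given by the row $(\mathbf{H}^\kernelf \mathbf{Y})_{i.}$. Holding out task $j$ and predicting for it is then an ordinary leave-one-out problem to which Theorem~\ref{loocvProp} applies, after transposition, since the theorem is phrased for rows rather than columns. Applying it in transposed form gives, elementwise,
\[
\mathbf{F}^{\mathrm{TS,LOO,C}}_{ij} = \frac{(\mathbf{H}^\kernelf\mathbf{Y})_{i.}\mathbf{H}^\gkernelf_{.j} - \mathbf{H}^\gkernelf_{jj}\,(\mathbf{H}^\kernelf\mathbf{Y})_{ij}}{1 - \mathbf{H}^\gkernelf_{jj}}\,.
\]

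Finally, stacking this identity over all instances $i$ and factoring $\mathbf{H}^\kernelf$ out on the left yields
\[
\mathbf{F}^{\mathrm{TS,LOO,C}}_{.j} = \frac{\mathbf{H}^\kernelf\left(\mathbf{Y}\mathbf{H}^\gkernelf_{.j} - \mathbf{Y}_{.j}\mathbf{H}^\gkernelf_{jj}\right)}{1 - \mathbf{H}^\gkernelf_{jj}}\,,
\]
which is the claimed expression. There is no serious obstacle here; the step requiring the most care is the bookkeeping of the transposition so that the columns of $\mathbf{H}^\gkernelf$ (rather than its rows) appear in the formula, together with the verification that dropping column $j$ of $\mathbf{Y}$ genuinely leaves $\mathbf{H}^\kernelf\mathbf{Y}_{.k}$ intact for $k\neq j$, so that only the task-side shortcut is needed.
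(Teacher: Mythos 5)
Your proposal is correct and follows essentially the same route as the paper: the paper likewise reduces Setting C to independent-task KRR on the task side with hat matrix $\mathbf{H}^\gkernelf$ and label matrix $\bm{Y}\transpose\mathbf{H}^\kernelf$, applies Theorem~\ref{loocvProp}, and transposes. Your extra remark that deleting column $j$ leaves $\mathbf{H}^\kernelf\bm{Y}_{.k}$ unchanged for $k\neq j$ is exactly the (implicit) justification the paper relies on, so the two arguments coincide.
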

\begin{proof}
Similarly as for Corollary~\ref{loocvTSB} but using a `transposed' dataset, two-step KRR for Setting C can be thought of as just performing independent-task KRR with $\mathbf{H}^\gkernelf$ as the hat matrix and $ \bm{Y}\transpose\mathbf{H}^\kernelf$ as the label matrix. We apply Theorem~\ref{loocvProp} using this terminology and transpose the obtained matrix.
\end{proof}
Finally, the theorem below gives the shortcut for Setting D, the zero-shot learning setting. 
\begin{mycol}[Setting D]\label{loocvTSD}
A single element of the matrix containing the labels re-estimated by leave-one-out cross-validation for Setting D, i.e.\ leaving out one dyad with the corresponding row and column and predicting for that dyad, using two-step kernel ridge regression can be calculated as
\eq{
\mathbf{F}^\mathrm{TS, LOO, D}_{ij} = \frac{\mathbf{H}^\kernelf_{i.}\left(\mathbf{Y}\mathbf{H}^\gkernelf_{.j}- \mathbf{Y}_{.j}\mathbf{H}^\gkernelf_{jj}\right)
- \mathbf{H}^\kernelf_{ii}\left(\mathbf{Y}_{i.}\mathbf{H}^\gkernelf_{.j}- \mathbf{Y}_{ij}\mathbf{H}^\gkernelf_{jj}\right)
}{ 
(1-\bm{H}_{ii}^\kernelf)(1-\bm{H}_{jj}^\gkernelf)}\,.
}
\end{mycol}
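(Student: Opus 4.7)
The plan is to exploit the two-step structure of the method and apply a standard leave-one-out shortcut twice: once to account for the removal of task $j$ (as in Corollary~\ref{loocvTSC}) and once to account for the removal of instance $i$ (as in Theorem~\ref{loocvProp}). The key observation is that since two-step KRR factorizes as $\bm{F}=\bm{H}^{\kernelf}\bm{Y}\bm{H}^{\gkernelf}$, the row-side hat matrix $\bm{H}^{\kernelf}$ acts only on the instance dimension and the column-side hat matrix $\bm{H}^{\gkernelf}$ acts only on the task dimension, so the two leave-one-out operations commute and can be carried out in sequence.

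Concretely, I would first interpret the second stage of the two-step method, run on data without column $j$, as an independent-task KRR on the instance side whose labels are the Setting~C virtual targets for column $j$. That is, from Corollary~\ref{loocvTSC}, define the vector of held-out task predictions
\eq{
\bm{z} = \frac{\bm{Y}\bm{H}^{\gkernelf}_{.j}-\bm{Y}_{.j}\bm{H}^{\gkernelf}_{jj}}{1-\bm{H}^{\gkernelf}_{jj}}\,,
}
which depends only on the other columns of $\bm{Y}$ and not on column $j$; the Setting~C predictions for column $j$ are exactly $\bm{H}^{\kernelf}\bm{z}$. Now removing instance $i$ additionally reduces to a standard leave-one-out problem for ordinary KRR on the instance kernel with label vector $\bm{z}$. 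Applying Theorem~\ref{loocvProp} gives
\eq{
\mathbf{F}^\mathrm{TS, LOO, D}_{ij} = \frac{\bm{H}^{\kernelf}_{i.}\bm{z}-\bm{H}^{\kernelf}_{ii}z_{i}}{1-\bm{H}^{\kernelf}_{ii}}\,,
}
and substituting the definition of $\bm{z}$ and $z_i$ produces the claimed formula after one common denominator.

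The only genuine subtlety, and the step I would be most careful about, is justifying that this iterated shortcut really equals the true Setting~D leave-one-out prediction, i.e.\ the value one obtains by retraining the two-step model on the data matrix with row $i$ and column $j$ removed and then evaluating at the dyad $(\bm{d}_i,\bm{t}_j)$. The argument is that the two-step prediction at $(i,j)$ can be written as $\bm{k}_i^{(-i)\transpose}(\bm{K}^{(-i,-i)}+\lambda_d\bm{I})^{-1}\bm{Y}^{(-i,-j)}(\bm{G}^{(-j,-j)}+\lambda_t\bm{I})^{-1}\bm{g}_j^{(-j)}$; since $\bm{Y}^{(-i,-j)}$ factorizes as removing row $i$ from $\bm{Y}^{(\cdot,-j)}$, one can apply the single-variable LOO shortcut on the task side column by column to collapse the right-hand factor to the Setting~C virtual labels restricted to rows other than $i$, and then apply the single-variable LOO shortcut on the instance side to collapse the left-hand factor. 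This two-stage reduction is permissible precisely because $\bm{H}^{\kernelf}$ and $\bm{H}^{\gkernelf}$ operate on disjoint index sets, ensuring that the virtual labels produced at the task step are identical whether instance $i$ is present in the training data or not. Once this commutativity is established, the arithmetic simplification to the stated fraction is mechanical.
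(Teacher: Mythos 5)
Your proposal is correct and takes essentially the same route as the paper: the paper's proof also applies Theorem~\ref{loocvProp} twice, first on the task side to produce exactly your virtual label vector $\bm{z}$ (denoted $\mathbf{F}^\mathrm{STEP 1}_{.j}$ there) and then on the instance side with $\bm{z}$ as the label vector, before combining denominators. Your extra discussion of why the two leave-one-out reductions commute is a justification the paper leaves implicit, but the underlying argument is identical.
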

\begin{proof}
This can easily be proven by applying Theorem~\ref{loocvProp} twice. First, a column vector that predicts for task $j$ is generated, using a model trained only based on the $\qsize-1$ remaining tasks. Adopting a similar reasoning as in Corollary~\ref{loocvTSC}, we obtain
\eq{
\mathbf{F}^\mathrm{STEP 1}_{.j} = \frac{\mathbf{Y}\mathbf{H}^\gkernelf_{.j}- \mathbf{Y}_{.j}\mathbf{H}^\gkernelf_{jj}}{1 - \mathbf{H}^\gkernelf_{jj}}\,.
} 
This vector is subsequently used as a label vector for an `unseen' task $j$. We apply Theorem~\ref{loocvProp} once more to obtain the leave-one-out value of instance (row) $i$:
\eq{
\mathbf{F}^\mathrm{TS, LOO, D}_{ij} &=  \frac{\bm{H}^\kernelf_{i.}\mathbf{F}^\mathrm{STEP 1}_{.j} - \bm{H}^\kernelf_{ii}\mathbf{F}^\mathrm{STEP 1}_{ij} }{1-\bm{H}^\kernelf_{ii}}\\
& = \frac{\bm{H}^\kernelf_{i.}\left(\frac{\mathbf{Y}\mathbf{H}^\gkernelf_{.j}- \mathbf{Y}_{.j}\mathbf{H}^\gkernelf_{jj}}{1 - \mathbf{H}^\gkernelf_{jj}}\right) - \bm{H}^\kernelf_{ii}\left(\frac{\mathbf{Y}_{i.}\mathbf{H}^\gkernelf_{.j}- \mathbf{Y}_{ij}\mathbf{H}^\gkernelf_{jj}}{1 - \mathbf{H}^\gkernelf_{jj}}\right) }{1-\bm{H}^\kernelf_{ii}}\\
&=\frac{\mathbf{H}^\kernelf_{i.}\left(\mathbf{Y}\mathbf{H}^\gkernelf_{.j}- \mathbf{Y}_{.j}\mathbf{H}^\gkernelf_{jj}\right)
- \mathbf{H}^\kernelf_{ii}\left(\mathbf{Y}_{i.}\mathbf{H}^\gkernelf_{.j}- \mathbf{Y}_{ij}\mathbf{H}^\gkernelf_{jj}\right)
}{ 
(1-\bm{H}_{ii}^\kernelf)(1-\bm{H}_{jj}^\gkernelf)}\,.
}\end{proof}

\begin{table}[t]
  \begin{center}
   \topcaption{Overview of algebraic shortcuts for performing leave-one-out prediction for the different settings using two-step kernel ridge regression. $\mat_{\osize\times\qsize}$ is an operator that reshapes a vector in an $\osize\times\qsize$ matrix, $\diagv$ takes the diagonal elements of a square matrix to a vector and $\diagm$ takes the corresponding diagonal matrix of a given square matrix and $\mathbf{1}_n$ is a vector of ones with a length of $n$.} 
   \label{looshortcuts} % requires the topcapt package
   \begin{tabular}{ c c c} % Column formatting, @{} suppresses leading/trailing space
\hline\hline
model & setting & leave-one-out values \\
\hline
IT & A \& B & $\displaystyle\frac{\bm{H}^\kernelf\bm{Y}-\diagm(\bm{H}^\kernelf)\bm{Y}}{\diagv(\idmatrix-\bm{H}^\kernelf)\bm{1}_\qsize\transpose}$\\[0.2cm]
KK & A & $\displaystyle\mat_{\osize\times\qsize}\left(\frac{\bm{H}^\mathrm{KK}\ve (\bm{Y}) -\diagm(\bm{H}^\mathrm{KK})\ve (\bm{Y})}{\diagv(\idmatrix - \bm{H}^\mathrm{KK})}\right)$\\[0.35cm]
TS & A & $\displaystyle\mat_{\osize\times\qsize}\left(\frac{\bm{H}^\mathrm{TS}\ve (\bm{Y}) -\diagm(\bm{H}^\mathrm{TS})\ve (\bm{Y})}{\diagv(\idmatrix - \bm{H}^\mathrm{TS})}\right)$\\[0.35cm]
TS & B & $\displaystyle\frac{\mathbf{H}^\kernelf\mathbf{Y}\mathbf{H}^\gkernelf - \diagm (\mathbf{H}^\kernelf )\mathbf{Y}\mathbf{H}^\gkernelf}{\diagv (\idmatrix - \mathbf{H}^\kernelf) \bm{1}_\qsize\transpose}$\\[0.35cm]
TS & C & $\displaystyle\frac{\mathbf{H}^\kernelf \mathbf{Y}\mathbf{H}^\gkernelf- \mathbf{H}^\kernelf \mathbf{Y}\diagm (\mathbf{H}^\gkernelf)}{ \bm{1}_\osize\transpose \diagv (\idmatrix - \mathbf{H}^\gkernelf)\transpose}$\\[0.35cm]
TS & D & $\displaystyle\frac{(\mathbf{H}^\kernelf - \diagm (\mathbf{H}^\kernelf )) \mathbf{Y}(\mathbf{H}^\gkernelf- \diagm (\mathbf{H}^\gkernelf))}{ \diagv (\idmatrix - \mathbf{H}^\kernelf) \diagv (\idmatrix - \mathbf{H}^\gkernelf)\transpose}$\\[0.35cm]
\hline
\hline
   \end{tabular}
   \end{center}
   
\end{table}

All the derived shortcuts are summarized in a more compact matrix notation in Table~\ref{looshortcuts}. Each matrix of hold-out values is computed by dividing the predicted labels of the complete training set by an appropriate linear transformation of the diagonal elements of the hat matrices. As noted earlier, using an eigenvalue decomposition with a time complexity of $\mathcal{O}(\osize^3+\qsize^3)$, these hat matrices and predictions can be obtained for any values of $\lambda_d$ and $\lambda_t$ by straightforward matrix manipulations. The shortcuts are in fact valid for all models with a pairwise spectral filter of the form in Eq.~(\ref{decompfilter}). Stated more boldly, after the eigenvalue decomposition, one can tune or validate the model for any of the four settings at essentially no computational cost, compared to the cost of the initial preprocessing. For Kronecker kernel ridge regression, the only computational shortcut of Table~\ref{looshortcuts} possible is for Setting A, as this makes no use of this property. Since many interesting learning problems relate to the other settings, we consider the shortcuts possible for two-step kernel ridge regression a very strong merit of this method.

\subsection{Learning with mini batches}\label{twostepminibatches}

In some cases it is desirable to train a model in an online fashion, as opposed to the standard batch training. For example, when new instances for the different tasks become available, it is more desirable to update the model parameters rather than completely retrain the model. Similarly, new tasks could be added to the model, which also influences the performance of the older tasks. This is prevalent in online applications, such as recommender systems, in which data is added dynamically so that the systems should be updated swiftly. In the context of dealing with large-scale data, the required matrices might simply be too large to fit in main memory as a whole, thus an iterative approach is needed.

In addition to gradient-based methods such as stochastic gradient descent and conjugated gradient descent, closed-from solutions can be derived for updating the model parameters for two-step KRR. Since we assume that this is of particular interest for large-scale data applications, we will derive a shortcut for the primal form. Denote a feature vector for the instances by $\boldsymbol{\phi}\in\mathbb{R}^\odsize$ and for the tasks by $\boldsymbol{\psi}\in\mathbb{R}^\qdsize$. Here we assume that $\odsize\ll\osize$ and $\qdsize\ll \qsize$. These can either be the primal features or be obtained from a decomposition of the kernel matrices, for example by means of the Nystr{\"o}m method~\citep{Drineas2005}. This leads to the associated feature matrices $\bm{\Phi}\in\mathbb{R}^{\osize\times\odsize}$ and $\bm{\Psi}\in\mathbb{R}^{\qsize\times\qdsize}$. Hence, the primal notation boils down to:
\eq{
\dkernelm &= \bm{\Phi}\bm{\Phi}\transpose \qquad \bm{k} = \bm{\Phi}\boldsymbol{\phi}\\
\tkernelm &= \bm{\Psi}\bm{\Psi}\transpose \qquad \bm{g} = \bm{\Psi}\boldsymbol{\psi}\,.
}
Thus the model of Eq.~(\ref{2SRLS}) translates to
\eq{
\predfun^\mathrm{TS}(\bm{x}) &= \boldsymbol{\phi}\transpose \bm{\Phi}\bm{A}^\mathrm{TS} \bm{\Psi}\transpose\boldsymbol{\psi}\\
&= \boldsymbol{\phi} \transpose \bm{W}^\mathrm{TS} \boldsymbol{\psi} \,,
}
with $\bm{W}^\mathrm{TS}$ the primal parameter matrix. Suppose that the dataset is again complete, then, starting from Eq.~(\ref{partwostepmatrix}), the parameters are given by
\eq{
\bm{W}^\mathrm{TS} &= \bm{\Phi}(\bm{\Phi}\bm{\Phi}\transpose + \lambda_d\idmatrix)^{-1}\bm{Y}(\bm{\Psi}\bm{\Psi}\transpose + \lambda_t\idmatrix)^{-1}\bm{\Psi}\transpose\\
& = (\bm{\Phi}\transpose\bm{\Phi} + \lambda_d\idmatrix)^{-1}\bm{\Phi}\transpose\bm{Y}\bm{\Psi}(\bm{\Psi}\transpose\bm{\Psi} + \lambda_t\idmatrix)^{-1} \,,
}
where we made use of the matrix inversion lemma~(\cite{Bishop2006}, Eq.~(C.7)). Suppose that in a first training phase, we use only the first $\osize-l$ instances to train the first model and later update the model with the remaining $l$ instances. Without loss of generality, we divide the labels and instance features as follows in corresponding block matrices
\eq{
\bm{\Phi} = 
\begin{pmatrix}
\bm{\Phi}_1\\
\bm{\Phi}_2
\end{pmatrix}  
\quad \mathrm{and}\quad
\bm{Y} = 
\begin{pmatrix}
\bm{Y}_1\\
\bm{Y}_2
\end{pmatrix}\,,
}
with $\bm{\Phi}_1\in\mathbb{R}^{(\osize-l)\times\odsize}$, $\bm{\Phi}_2\in\mathbb{R}^{l\times\odsize}$, $\bm{Y}_1\in\mathbb{R}^{(\osize-l)\times\qsize}$ and $\bm{Y}_2\in\mathbb{R}^{l\times\qsize}$. The model parameters of the complete dataset can then be calculated as
\eqn{
\bm{W}^\mathrm{TS} &= (\bm{\Phi}_1\transpose\bm{\Phi}_1 + \bm{\Phi}_2\transpose\bm{\Phi}_2 + \lambda_d\idmatrix)^{-1}(\bm{\Phi}_1\transpose\bm{Y}_1 + \bm{\Phi}_2\transpose\bm{Y}_2)\bm{\Psi}(\bm{\Psi}\transpose\bm{\Psi} + \lambda_t\idmatrix)^{-1} \nonumber\\ 
	& = (\bm{M}_1^{-1} + \bm{\Phi}_2\transpose\bm{\Phi}_2 )^{-1}(\bm{\Phi}_1\transpose\bm{Y}_1 + \bm{\Phi}_2\transpose\bm{Y}_2)\bm{\Psi}(\bm{\Psi}\transpose\bm{\Psi} + \lambda_t\idmatrix)^{-1}  \nonumber \\ %\label{defAincr} 
 	& = [\bm{M}_1 - \bm{M}_1\bm{\Phi}_2\transpose(\idmatrix + \bm{\Phi}_2\bm{M}_1\bm{\Phi}_2\transpose )^{-1}\bm{\Phi}_2\bm{M}_1] \nonumber\\
	& \qquad(\bm{\Phi}_1\transpose\bm{Y}_1 + \bm{\Phi}_2\transpose\bm{Y}_2)\bm{\Psi}(\bm{\Psi}\transpose\bm{\Psi} + \lambda_t\idmatrix)^{-1}\label{woodburyIncremental} \,,
	%& = [\idmatrix - \bm{A}^{-1}_1\bm{\Phi}_2\transpose(\idmatrix + \bm{\Phi}_2\bm{A}^{-1}_1\bm{\Phi}_2\transpose )^{-1}\bm{\Phi}_2\bm{A}^{-1}_1]\bm{W}_{\mathrm{old}} + \nonumber \\
	%& \qquad [\bm{A}^{-1}_1 - \bm{A}^{-1}_1\bm{\Phi}_2\transpose(\idmatrix + \bm{\Phi}_2\bm{A}^{-1}_1\bm{\Phi}_2\transpose )^{-1}\bm{\Phi}_2\bm{A}^{-1}_1]\bm{\Phi}_2\transpose\bm{Y}_2\bm{\Psi}(\bm{\Psi}\transpose\bm{\Psi} + \lambda_t\idmatrix)^{-1} \nonumber \,.
}
where $\bm{M}_1=(\bm{\Phi}_1\transpose\bm{\Phi}_1 + \lambda_d\idmatrix)^{-1}$. In Eq.~(\ref{woodburyIncremental}), we have made use of the Woodbury identity. Thus, in order to update the model parameters, the $l\times l$ matrix  $(\idmatrix + \bm{\Phi}_2\transpose\bm{M}_1\bm{\Phi}_2 )$ has to be inverted. A practical implementation for this scheme is given in Algorithm~\ref{twostepminibatchesinstances}. If $l>\odsize$, it is useful to make use of the matrix identity
\eq{
(\idmatrix + \bm{\Phi}_2\bm{M}_1\bm{\Phi}_2\transpose)^{-1}\bm{\Phi}_2\transpose \bm{M}_1= \bm{\Phi}_2\transpose(\bm{\Phi}_2\transpose\bm{\Phi}_2 +\bm{M}^{-1}_1)^{-1}\
}
in line \ref{dbigerl} so that only a $\odsize\times\odsize$ matrix has to be inverted. To update for new tasks, a very similar algorithm can be derived. In this case we assume that the data is divided as follows:
\eq{
\bm{\Psi} = 
\begin{pmatrix}
\bm{\Psi}_1\\
\bm{\Psi}_2
\end{pmatrix}  
\quad \mathrm{and}\quad
\bm{Y} = 
\begin{pmatrix}
\bm{Y}_1\ 
\bm{Y}_2
\end{pmatrix}\,,
}
with $\bm{\Phi}_2\in\mathbb{R}^{l\times\qdsize}$, $\bm{\Psi}_1\in\mathbb{R}^{(\qsize-l)\times\qdsize}$, $\bm{Y}_1\in\mathbb{R}^{\osize\times(\qsize-l)}$ and $\bm{Y}_2\in\mathbb{R}^{\osize\times l}$. Using this notation, Algorithm~\ref{twostepminibatchestasks} updates the parameters for a set of $l$ new tasks.

\begin{algorithm}[t]
  \begin{algorithmic}[1]
  \INPUT $\bm{\Phi}_\mathrm{new}, \bm{Y}_\mathrm{new}, \bm{W}_\mathrm{old}, \bm{M}_\mathrm{old}, \bm{B}^\gkernelf, \bm{\Phi}_\mathrm{old}\transpose\bm{Y}_\mathrm{old}$

\State $\bm{M}_\mathrm{new} = \bm{M}_\mathrm{old} - \bm{M}_\mathrm{old}\bm{\Phi}_\mathrm{new}\transpose(\bm{\Phi}_\mathrm{new}\bm{M}_\mathrm{old}\bm{\Phi}_\mathrm{new}\transpose + \idmatrix)^{-1}\bm{\Phi}_\mathrm{new}\bm{M}_\mathrm{old}$ \Comment{$\mathcal{O}(l^3 + d^2l)$} \label{dbigerl}
\State $\bm{W}_\mathrm{new}^\mathrm{TS} = \bm{M}_\mathrm{new} (\bm{\Phi}_\mathrm{old}\transpose\bm{Y}_\mathrm{old} + \bm{\Phi}_\mathrm{new}\transpose\bm{Y}_\mathrm{new})\bm{B}^\gkernelf$ \Comment{$\mathcal{O}(d^3 + q^3 )$} 
 \caption{Update primal parameters $\bm{W}_\mathrm{old}^\mathrm{TS}$ of two-step KRR using batches of $l$ new instances, assuming $l<d$. In addition to the old weight matrix, new instance features and labels, the algorithm requires precomputed matrices $\bm{M}_\mathrm{old}=(\bm{\Phi}_\mathrm{old}\transpose\bm{\Phi}_\mathrm{old} + \lambda_d\idmatrix)^{-1}$, $\bm{B}^\gkernelf=\bm{\Psi}(\bm{\Psi}\transpose\bm{\Psi} + \lambda_t\idmatrix)^{-1}$ and $\bm{\Phi}_\mathrm{old}\transpose\bm{Y}_\mathrm{old}$. The algorithm also updates these matrices using the new data.}
 \State \textbf{return} $\bm{W}_\mathrm{new}, \bm{M}_\mathrm{new}, (\bm{\Phi}_\mathrm{old}\transpose\bm{Y}_\mathrm{old} + \bm{\Phi}_\mathrm{new}\transpose\bm{Y}_\mathrm{new})$
  \label{twostepminibatchesinstances}
  \end{algorithmic}
\end{algorithm}

\begin{algorithm}[t]
  \begin{algorithmic}[1]
  \INPUT $\bm{\Psi}_\mathrm{new}, \bm{Y}_\mathrm{new}, \bm{W}_\mathrm{old}, \bm{N}_\mathrm{old}, \bm{B}^\kernelf, \bm{Y}_\mathrm{old}\bm{\Psi}_\mathrm{old}$

\State $\bm{N}_\mathrm{new} = \bm{N}_\mathrm{old} - \bm{N}_\mathrm{old}\bm{\Psi}_\mathrm{new}\transpose(\bm{\Psi}_\mathrm{new}\bm{N}_\mathrm{old}\bm{\Psi}_\mathrm{new}\transpose + \idmatrix)^{-1}\bm{\Psi}_\mathrm{new}\bm{N}_\mathrm{old}$ \Comment{$\mathcal{O}(l^3 + r^2l)$}
\State $\bm{W}_\mathrm{new}^\mathrm{TS} =\bm{B}^\kernelf (\bm{Y}_\mathrm{old}\bm{\Psi}_\mathrm{old}+\bm{Y}_\mathrm{new}\bm{\Psi}_\mathrm{new})\bm{N}_\mathrm{new}$ \Comment{$\mathcal{O}(d^3 + r^3 )$} 
 \State \textbf{return} $\bm{W}_\mathrm{new}, \bm{N}_\mathrm{new}, (\bm{Y}_\mathrm{old}\bm{\Psi}_\mathrm{old}+\bm{Y}_\mathrm{new}\bm{\Psi}_\mathrm{new})$

 \caption{Update primal parameters $\bm{W}_\mathrm{old}^\mathrm{TS}$ of two-step KRR using batches of $l$ new tasks, assuming $l<q$. In addition to the old weight matrix, new task features and labels, the algorithm requires precomputed matrices $\bm{N}_\mathrm{old}=(\bm{\Psi}_\mathrm{old}\transpose\bm{\Psi}_\mathrm{old} + \lambda_t\idmatrix)^{-1}$, $\bm{B}^\kernelf=(\bm{\Phi}\transpose\bm{\Phi} + \lambda_d\idmatrix)^{-1}\bm{\Phi}\transpose$ and $\bm{Y}_\mathrm{old}\bm{\Psi}_\mathrm{old}$. The algorithm also updates these matrices using the new data.}
  \label{twostepminibatchestasks}
  \end{algorithmic}
\end{algorithm}

\section{Experiments}\label{experimentalsection}

In the experiments we will demonstrate the learning properties of two-step KRR, compared to independent-task and Kronecker KRR. Furthermore, the experiments illustrate the efficient algorithms for training and evaluating the two-step KRR model, in contrast to the more limited toolkit for the Kronecker KRR model. To be more specific:
\begin{itemize}
\item In Section~\ref{DTreg} we study the performance of two-step and Kronecker KRR for the four different settings on four protein-ligands benchmarks. We illustrate the use of the analytical shortcuts for cross-validation.
\item In Section~\ref{comptransferlearning} we use a case study of protein-ligands interactions to study the differences between independent-task regression, multi-task learning and zero-shot learning. Some of the learning curves can only be made without resorting to slower gradient-based optimization for training the models with two-step KRR.
\item Finally, in Section~\ref{OLHTC} we demonstrate learning in mini-batches on a large-scale hierarchical text classification problem. 
\end{itemize}

We refer to~\cite{Romera-paredes2015} for some experimental results which show that two-step KRR is a competitive method compared to established zero-shot learning methods, including DAP~\citep{Lampert2014} and ZSRwUA~\citep{Jayaraman2014}, for some zero-shot learning benchmark datasets. We refer to that paper for a comparison with the state-of-the-art.

\subsection{Study of regularization for the different settings}\label{DTreg}

In this section we investigate the influence of the regularization parameters $\lambda$, $\lambda_d$ and $\lambda_t$, of two-step and Kronecker KRR for the different Settings A, B, C and D. We will also demonstrate the scalability of the different shortcuts for cross-validation, described in Section~\ref{computationalaspects}. To this end, we use four drug-target classification datasets collected by~\citet{Yamanishi2008}\footnote{Available at \url{http://web.kuicr.kyoto-u.ac.jp/supp/yoshi/drugtarget}}. Each dataset concerns a different class of protein targets: enzymes (e), G protein-coupled receptors (gpcr), ion channels (ic) and nuclear receptors (nc). The properties of these datasets are given in Table~\ref{YamanishiPLtable}. The interactions are given in the form of a binary adjacency matrix. Both the drugs and targets come along with a respective similarity matrix. For the drugs, common substructures are calculated using a graph alignment algorithm. The Jaccard similarity measure is used to obtain a drug similarity based on these substructures. The similarity matrix of the targets is a normalized version of the scores obtained by Smith-Waterman alignment. We rescored the labels, such that positive interactions have a value of $N/N^+$, while negative interactions have a value of $-N/N^-$, with $N$ the number of pairs and $N^+$ and $N^-$ the number of positive and negative interactions, respectively. By using this relabeling, minimizing the squared loss becomes equivalent with Fisher discriminant analysis~\citep{Bishop2006}, making our method more suitable for classification.

% Requires the booktabs if the memoir class is not being used
\begin{table}[t]
  \begin{center}
   \topcaption{Overview of the different drug-target datasets used in Section~\ref{DTreg}.} \label{YamanishiPLtable}% requires the topcapt package
   \begin{tabular}{ l l l l l } % Column formatting, @{} suppresses leading/trailing space
\hline\hline
dataset & e & gpcr & ic & nr \\
\hline 
\# targets & 664 & 95 & 204 & 26 \\
\# drugs & 445 & 223 & 210 & 54 \\
fraction of interactions (\%) & 0.99 & 3.00 & 3.45 & 6.41 \\
median degree targets & 2 &3 & 5 & 3  \\
median degree drugs & 2 & 2 & 3 & 1\\
\hline
\hline
   \end{tabular}
   \end{center}
   
\end{table}

\begin{table}[t]
  \begin{center}
   \topcaption{Overview of the performance and running time using Kronecker and two-step KRR for the different protein-ligand datasets and cross-validation settings. One experiment could not be completed in less than three days of running time. See main text for details.} \label{TSKKCVcomparision}% requires the topcapt package
\begin{tabular}{cc|cccc|rrrr}
\hline \hline
&&\multicolumn{4}{c|}{best performance (AUC)}&\multicolumn{4}{c}{running time}\\
\hline
\multicolumn{2}{c|}{data}&e&gpcr&ic&nr&e&gpcr&ic&nr\\
\hline
setting&method&&&&&&&&\\
\hline
A&KK&0.9640&0.9478&0.9723&0.8662&7.81s&0.22s&0.53s&0.01s\\
A&TS&0.9644&0.9420&0.9705&0.8857&66.42s&3.21s&5.6s&0.2s\\
B&KK&0.9196&0.8280&0.9495&0.7475&5671.1s&13.33s&91.97s&0.37s\\
B&TS&0.9256&0.8702&0.9507&0.7893&104.9s&10.28s&19.35s&2.09s\\
C&KK&0.8641&0.8742&0.8438&0.8250&2442.41s&82.67s&98.59s&1.28s\\
C&TS&0.8695&0.8772&0.8475&0.8515&94.39s&20.82s&22.36s&4.31s\\
D&KK&-&0.8228&0.7691&0.7107&$>$3d&2.14h&5.49h&26.09s\\
D&TS&0.8270&0.8319&0.7706&0.7275&66.6s&3.22s&5.77s&0.21s\\
\hline \hline
\end{tabular}
   \end{center}
   
\end{table}

For each of the models, we perform leave-one-out cross-validation for new pairs, new targets and both, as described in the introduction. The cross-validated predictions are obtained using the computation short-cuts described in Section~\ref{HOOshortcuts}. For the four different cross-validation settings we use AUC as a performance measure:

\begin{itemize}
\item For Settings A and D, i.e.\ imputation of pairs and new targets and ligands, we use the micro AUC. Here, the AUC is calculated over all the pairs and the model is evaluated for its ability to give a higher score to an arbitrary interacting dyad compared to an arbitrary non-interacting dyad.
\item For the setting where we leave out one target with all the drugs at a time (Setting~B), the AUC is calculated over all the drugs for each target and this then averaged over the drugs.
\item Likewise, for the setting with new drugs with all the ligands for validation (Setting~C), the AUC is calculated over all the ligands for each drug and then averaged over the targets.
\end{itemize}

\begin{figure}[t]
   \begin{center}
   \includegraphics[width=0.95\textwidth]{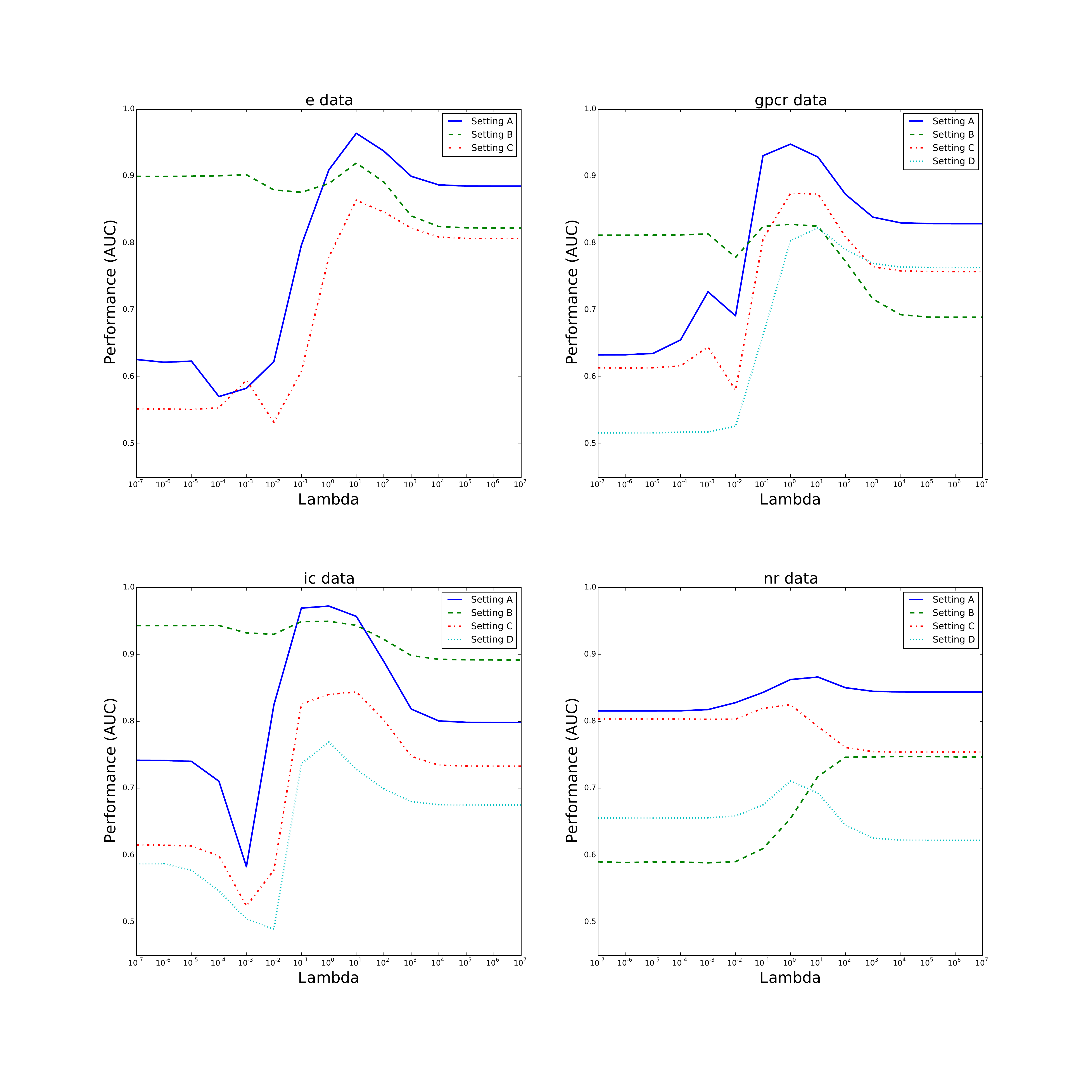} % requires the graphicx package
   \caption{Performance for different evaluation schemes for the four drug-target datasets for different values of the regularization parameter using Kronecker KRR. The optimal regularization is heavily dependent on the hyperparameter and validation setting. See online version for color.}
   \label{KKreg}
   \end{center}
\end{figure}

\begin{figure}[t]
   \begin{center}
   \includegraphics[width=0.95\textwidth]{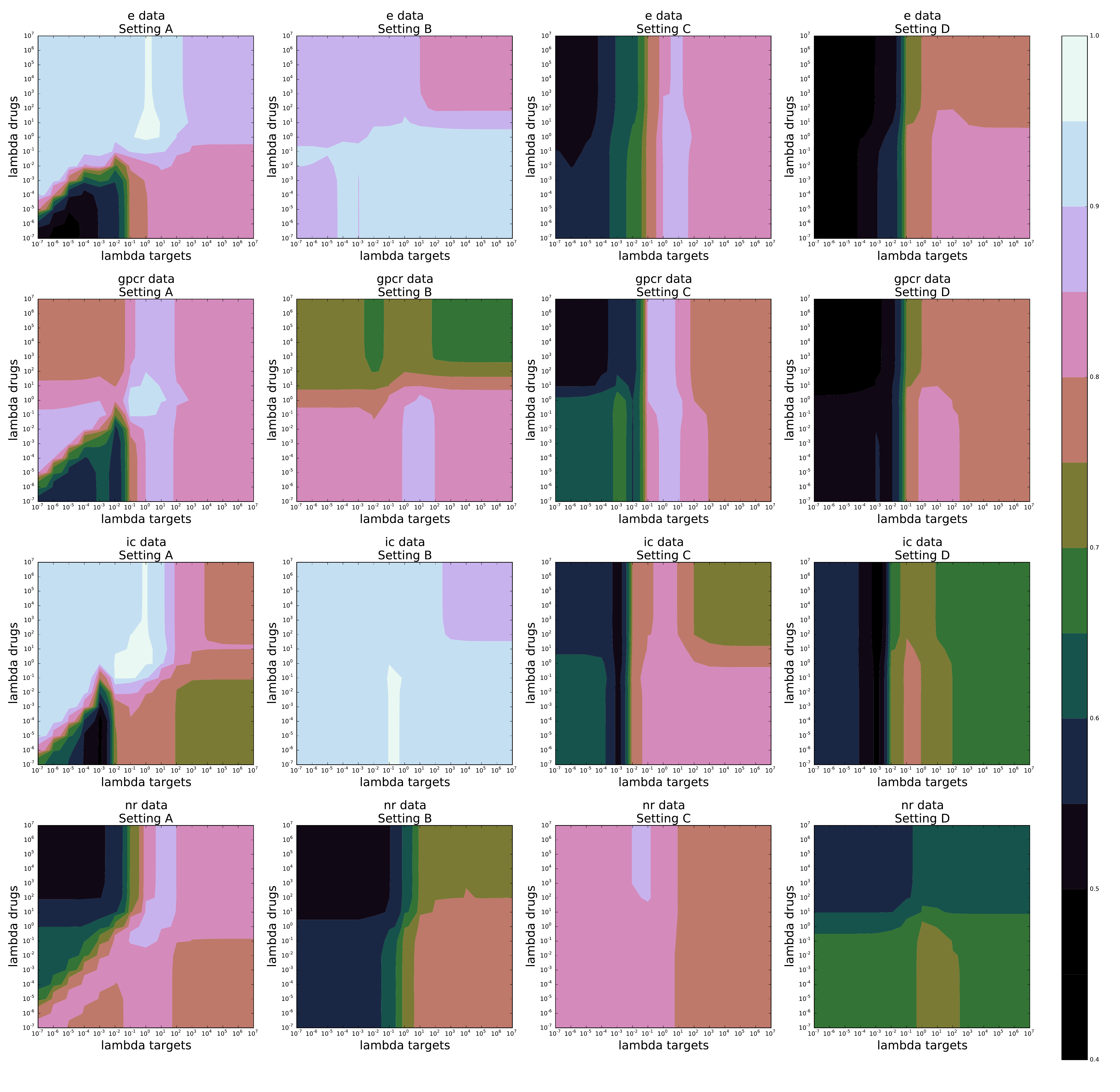} % requires the graphicx package
   \caption{Performance for different evaluation schemes for the four drug-target datasets for different values of the regularization parameters using two-step KRR. The optimal regularization is heavily dependent of the hyperparameters and validation setting. See online version for color.}
   \label{TSreg}
   \end{center}
\end{figure}

We trained Kronecker and two-step KRR with the regularization parameters $\lambda$, $\lambda_d$ and $\lambda_t$ each taken from the grid with values $10^{-7}, 10^{-6},\ldots, 10^5, 10^6$. For each (combination of) regularizer(s), the performance was calculated. For Setting A for both methods and the other settings for two-step KRR, we used the analytic shortcuts to calculate the holdout values. For the other settings using Kronecker KRR, we calculated the eigenvalue decomposition of a submatrix of the Gram matrix $\dkernelm$ ($\tkernelm$) for each row (column) for Setting B (Setting C). For Setting D, we computed the eigenvalue decomposition of a submatrix of $\dkernelm$ for each row of $\mathbf{Y}$, while calculating the eigenvalue decomposition of a submatrix of $\tkernelm$ for each element in this row. This corresponds to performing the smallest number of computations, while only holding one eigenvalue decomposition of each Gram matrix in memory.

Table~\ref{TSKKCVcomparision} shows the best performances for both methods, as well as the running times for performing the complete cross-validation and hyperparameter grid search. For the different settings and datasets, we observe that both methods have a similar performance, with two-step KRR often slightly outperforming Kronecker KRR. For both methods, Setting D is the hardest and Setting A the easiest, as expected. 

It is immediately clear that the optimal regularization depends both on the type of dataset, as well as on the type of predictions one is aiming at. For example, comparing the setting of new targets with new drugs for the datasets e, gpcr and ic, we see that for generalizing to new targets a low regularization for the drugs is needed, while the latter cases require the opposite. For these cases, it seems the models can predict better for new targets compared to drugs, indicating that the feature description for the drugs is more adequate. Hence, finding a suitable model for predicting for new targets in this case is harder, indicated by smaller regions of high performance. We see that for the nr data, the regions are quite irregular and dependent on the setting, likely because this is the smallest dataset. Depending on the dataset, Setting B or Setting C is the harder one. Likely this is determined by both the number of proteins and ligands in the training set, as well as the quality of the two kernel matrices. 

When comparing the running times for model selection, we observe the computational advantage of two-step KRR. For Setting A, both methods have a holdout shortcut, hence both are fast. Kronecker KRR has to iterate over a grid of 15 regularization values, while two-step KRR has to search a grid of 15 $\times$ 15 regularization parameters, making it slower. Both methods are very fast in practice though. For Settings B and C, there is only an efficient algorithm for two-step KRR. For datasets larger than dataset nr, two-step KRR is much faster than Kronecker KRR. Note that because for these settings we calculate the AUC for each row and column in each step, calculating the performance takes more computing time than calculating the leave-one-out values! If we would use the same performance metric for all settings (which would not make sense for the AUC), two-step KRR would have approximately the same running time for all settings. Finally, for Setting D two-step KRR is much faster compared to Kronecker KRR, where it was not even possible to do this cross-validation for the e dataset within three days.

In Figures~\ref{KKreg} and \ref{TSreg} it is visualized how the performance depends on the regularization for the different settings. The performance is quite sensitive to the value of the regularization parameter(s), a fact well known in machine learning. The optimal regularization is also strongly determined by the cross-validation setting, especially for the two-step method. The contour plots of Figure~\ref{TSreg} illustrate that for different settings a specific model has to be selected. Penalizing the instances and the tasks separately is natural for these types of learning problems, and the effect on the performance can be a valuable diagnostic tool to aid the model building. Two-step KRR allows for efficiently exploring this space for any setting, in contrast to Kronecker KRR.
%The performance of two-step KRR can be tuned for the most relevant prediction scheme. This can be done using the efficient shortcuts for cross-validation given earlier. The Kronecker KRR does not come equipped with this flexibility, nor are there tools for efficient tuning other than for Setting A. 

\subsection{Comparison of different transfer learning settings}\label{comptransferlearning}

In this series of experiments, we compare different types of transfer learning settings in protein-ligand prediction. We simulate the zero-shot and almost zero-shot learning problem as follows. In each experiment, one ligand is considered to be the target task in question, where the task is to predict the interactions of proteins with respect to the target. Further, other tasks formed in the same way are provided as auxiliary information, leading to a zero-shot learning or almost zero-shot learning setting. The experiments are performed 100 times with different training/test set splits.

The experiments were performed using two different datasets. In the first experiment, the enzyme dataset of the previous section was reused. The goal is to learn to predict for the given ligand the binding with a set of proteins that were not encountered in the training phase. The performances are always computed over a test set of 164 protein targets for a given task, i.e.~we assess whether for a given target we can discriminate between proteins with more or less affinity for this ligand. The performance was measured by calculating the AUC for the test set of proteins for each ligand or task separately (i.e.~macro AUC). 

We also used a different drug-target interaction prediction dataset\footnote{\url{http://users.utu.fi/aatapa/data/DrugTarget}} \citep{davis2011comprehensive,Pahikkala2015} consisting of 68 drug compounds and 442 protein targets. In contrast to the earlier protein-ligand datasets, this is a regression task with real-valued labels. The kernel between the drugs is based on the 3D Tanimoto similarity coefficient, and the sequence similarity between the protein targets was computed using the normalized version of the Smith-Waterman score. Further, for each drug-protein pair we have a real-valued label, the negative logarithm of the kinase disassociation constant $K_d$, which characterizes the interaction affinity between the drug and target in question. In each experiment, the task of interest corresponds to one of the drugs in the data set. The goal is to learn to predict for the given drug the $K_d$ values for proteins unseen during the training phase. The performances are always computed over a test set of 192 protein targets for a given task, i.e.~we assess whether for a given target we can discriminate between proteins with more or less affinity for this drug. The performances are averages over all repetitions and over all target tasks, and are measured using the concordance index \citep{gonen2005concordance} (C-index), also known as the pairwise ranking accuracy:
\eq{\frac{1}{|\{(i,j)\mid y_i > y_j \}|}\sum_{y_i > y_j}H(\predfun_i - \predfun_j)\,,
}
where $y_i$ denotes the true and $\predfun_i$ the predicted label, and $H$ is the Heaviside step function. The C-index can be seen as a generalization of the area under the ROC curve (AUC). Regularization parameter selection is performed using leave-one-out cross-validation on the training data using the shortcuts of Section~\ref{HOOshortcuts}.
The algorithms used in the experiments are implemented in the RLScore open source machine learning library\footnote{Available at \url{https://github.com/aatapa/RLScore}}.

For each task, we vary the number of available training proteins. For the enzyme dataset the number of training proteins is increased from 25 to 500 in steps of 25 and for the drug-target affinity the number of proteins is varied from 10 to 250 in steps of 5. In addition, we have available the training data for all training proteins for the auxiliary tasks. As summarized in Figure~\ref{ref:relationMatrices}, we evaluate a number of different approaches:
\begin{figure}[t]
{\begin{center}
\includegraphics[width=0.99\linewidth]{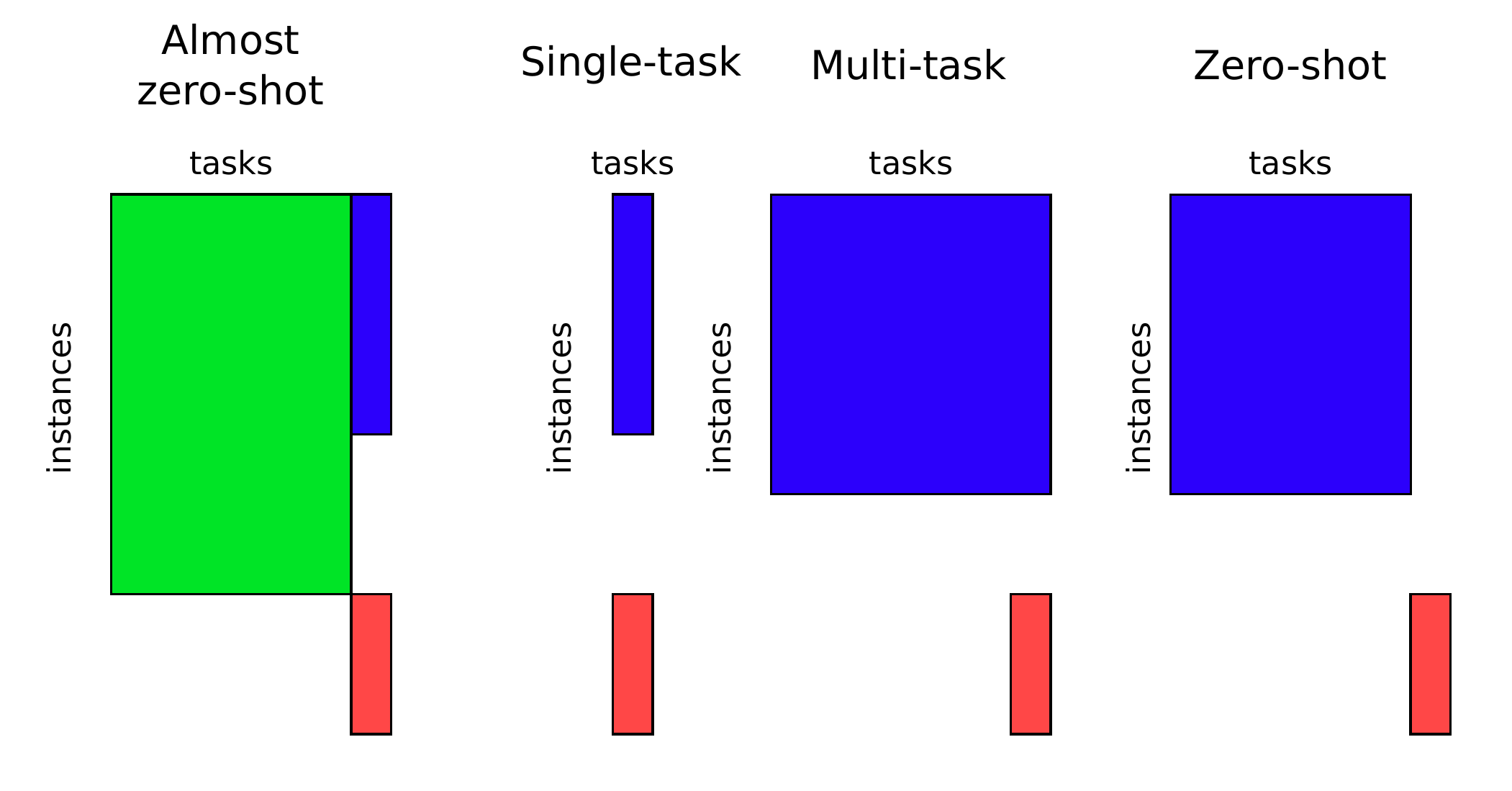}
\end{center}
}
\caption{Overview of the approaches investigated in Section~\ref{comptransferlearning}. Training data of which the size is constant in the experiments is colored in green. Training data of which the number of instances varies over different experiments is shown in blue and test data is indicated by red. See online version for color figure.}
\label{ref:relationMatrices}
\end{figure}
\begin{itemize}
\item Independent-task: KRR trained with data from the target task only. The regularization parameter is selected based on leave-one-out cross-validation for Setting A. 
\item Multi-task: both the target and auxiliary tasks have the same training data available (multi-output learning leveraging task correlations, tackled with Kronecker and two-step KRR). The regularization parameter is selected based on leave-one-out cross-validation for Setting A (Kronecker and two-step KRR) and Setting B (two-step KRR). 
\item Zero-shot learning: Kronecker KRR with no data for the target task using Kronecker and two-step KRR. The regularization parameter is selected based on leave-one-out cross-validation for Setting A (Kronecker and two-step KRR) and Setting D (two-step KRR).  
\item Almost zero-shot learning: using a varying amount of data from the target task, and all the available data from auxiliary tasks (tackled with two-step KRR). Here the missing labels for the target task are imputed in a first step and in a second step the `completed' data is used to predict for the new proteins. For both steps the regularization parameter is selected based on leave-one-out cross-validation for Setting~A.
\end{itemize}
We do not consider Kronecker KRR in the almost full cold start experiment due to computational considerations. Unlike for two-step KRR, no closed-form solution exists for the method in this setting, 
and the iterative conjugate gradient based method has rather poor scalability.
%\begin{comment}
%\begin{figure}[t]
%        \centering
%   \includegraphics[width=0.5\textwidth]{figure_1}\\
%   \includegraphics[width=0.5\textwidth]{figure_2}\\
%   \includegraphics[width=0.5\textwidth]{figure_3}\\
%       
%\caption{Learning curves for the drug-target data. Top: target data increased, Middle: target and auxiliary data increased, Bottom: auxiliary data increased.}
%\label{fig:drugtarget}
%\end{figure}
%\end{comment}
In Figure~\ref{fig:drugtarget}, we present the results for the experiments for the two datasets. The top two plots show the experiments where all the auxiliary tasks have the data for all training proteins, and the amount of data available for the target task is varied. It can be seen that learning is always possible even in the full zero-shot setting, where both two-step KRR and Kronecker KRR perform much better than random. For both datasets, the independent-task approach begins to outperform the full zero-shot setting after some point when one has access to enough training proteins. Combining these two sources of information leads to the best performance for the enzyme dataset and for the drug-target affinity dataset up until 150 training proteins. In both cases, using auxiliary tasks can greatly improve performance when there are only a few labeled instances of the target task. However, once there is enough data available for the target task, there is no longer any positive transfer from the auxiliary tasks, though there also seems to be no harm from negative transfer.

In the second row of Figure~\ref{fig:drugtarget} we consider a setting in which there is the same amount of data available for both the auxiliary tasks and the target tasks. This setting corresponds closely to the traditional multi-output regression problem, the exception being that only the label for the target task is of interest during testing. For the enzyme dataset, Kronecker KRR slightly outperforms two-step KKR and both perform better than independent-task KRR. For the two-step method, it does not seem to matter whether the hyperparameters were selected for Setting A or Setting B. For the second dataset, we can see that the multi-task method that uses the task correlation information fails to outperform the simple independent-task method, suggesting that on this type of data one requires significantly more data in the auxiliary tasks compared to the target tasks in order for it to be helpful for learning.

In the bottom row of Figure~\ref{fig:drugtarget} we consider the full zero-shot learning setting, while increasing the amount of data available for the auxiliary tasks. For the first dataset we observe that the two-step method slightly outperforms Kronecker KRR when the hyperparameters are optimized for Setting D. For the second dataset, there is no noteworthy difference between both methods. Both approaches generalize to the unknown target task, though the results are still much worse than when having a significant amount of data for the target task.

\begin{figure}[t]
  \centering
  \subfloat[Enzyme dataset.]{\includegraphics[width=0.5\textwidth]{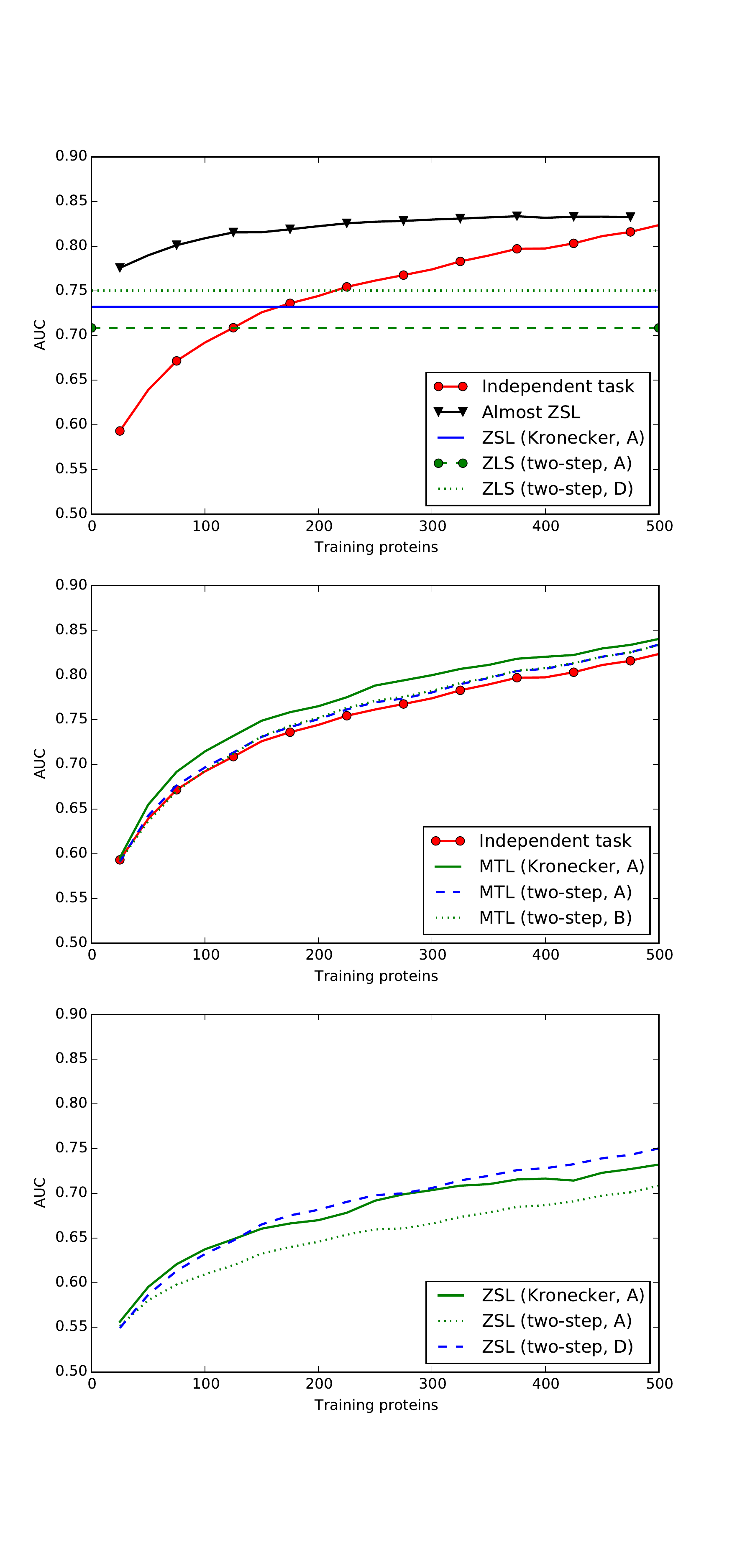}\label{fig:f1}}
  \hfill
  \subfloat[Drug-target affinity dataset.]{\includegraphics[width=0.5\textwidth]{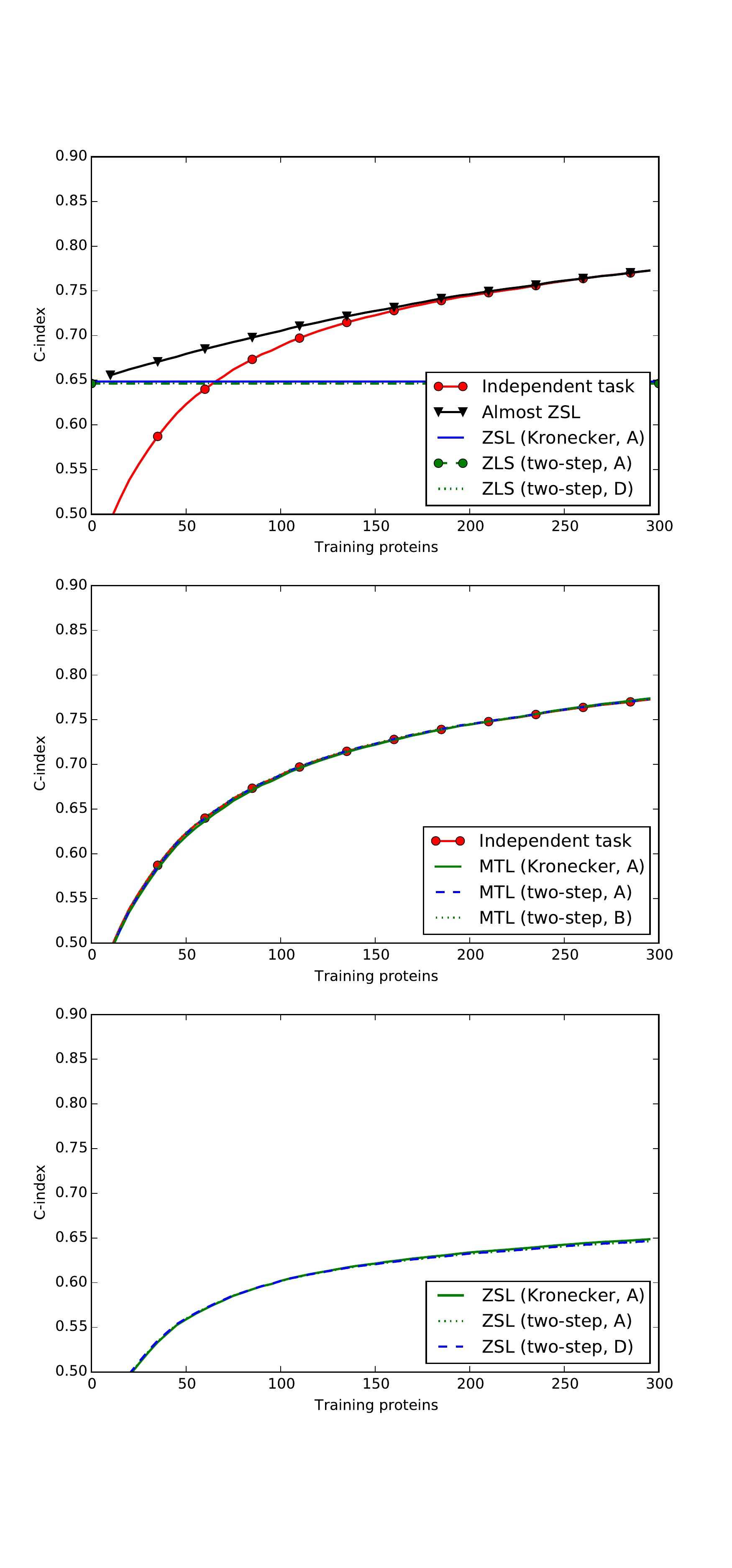}\label{fig:f2}}
  \caption{Learning curves for the (a) the enzyme and (b) the drug-target affinity data. Top: target data increased, Middle: target and auxiliary data increased, Bottom: auxiliary data increased.}
\label{fig:drugtarget}
\end{figure}

Here, two-step KRR shows itself to be competitive compared to Kronecker KRR. Previously, \citet{Schrynemackers2013} have, in their overview article on dyadic prediction in the biological domain, made the observation that in terms of predictive accuracy there does not seem to be a clear winner between the independent-task and multi-task type of learning approaches. Based on these experiments, a deciding factor on whether one may expect positive transfer from related tasks seems to be the amount of data available for the target task. The two-step method performs well in the almost full zero-shot settings with availability of a significant amount of auxiliary data and only very little data for the target task. But when there is enough data available for the target task, auxiliary data is no longer helpful.

\subsection{Online-learning of hierarchical text classification}\label{OLHTC}

In a final experiment we study the online learning of a large-scale hierarchical text classification problem. We will demonstrate learning with mini-batches, showing that both independent-task and two-step KRR improve when iteratively adding more training data. We used the Wikipedia benchmark dataset~\citep{Partalas2015} of the Large Scale Hierarchical Text Classification Challenge\footnote{http://lshtc.iit.demokritos.gr/}. We used the dataset provided for the third track, which is a subset of a larger, better curated set of another challenge. This dataset contains over 380,000 Wikipedia articles, for which the goal is to assign one of 12,633 labels, denoting the category of the article. The articles are described by a sparse bag-of-words vector with a length of 833,482. Each article can belong to only one class and the classes are part of a directed acyclic graph, representing the hierarchy of the categories (e.g.~`restricted Boltzmann machine' is a subcategory of `artificial neural networks', which is a subcategory of `machine learning'). For this subchallenge, the label space has been extended with 3000 novel labels. There are 6000 new articles that are labeled according to this new scheme. Here we will study the performance for both a test set of 10,000 articles with tasks seen in the training phase (Setting B or multi-task learning) and the performance on this dataset with novel labels (Setting D or zero-shot learning) as a function of the number of training articles. 

The bag-of-words representation was compressed by using canonical correspondence analysis to obtain 1000 orthogonal components that are maximally correlated with the training labels. We also added a dummy feature so that the model could fit an intercept. For describing the tasks, we considered two approaches:
\begin{itemize}
\item Considering all the tasks independently, i.e.~using independent-task KRR or, equivalently, $\tkernelm=\idmatrix$ for two-step KRR.
\item Using features describing the hierarchy between the classes. We used the Dijkstra algorithm to generate all pairwise distances $d_{ij}$ between the nodes of the label graph. A kernel was constructed by using a radial basis kernel with the distance in the exponent: $k_{ij} = e^{-\frac{d_{ij}}{10}}$.
\end{itemize} 
The model is initially trained using 5000 instances. Subsequently, the model is iteratively updated with batches of 1000 instances using Algorithm~\ref{twostepminibatches}. The regularization parameters $\lambda_d$ and $\lambda_d$ are chosen by minimizing the mean squared error\footnote{We did model selection on mean squared error rather than accuracy under the curve to speed up this procedure.} during leave-one-out cross-validation for Setting B (see Section~\ref{HOOshortcuts}).

\begin{figure}[t]
   \centering
   \includegraphics[width=\textwidth]{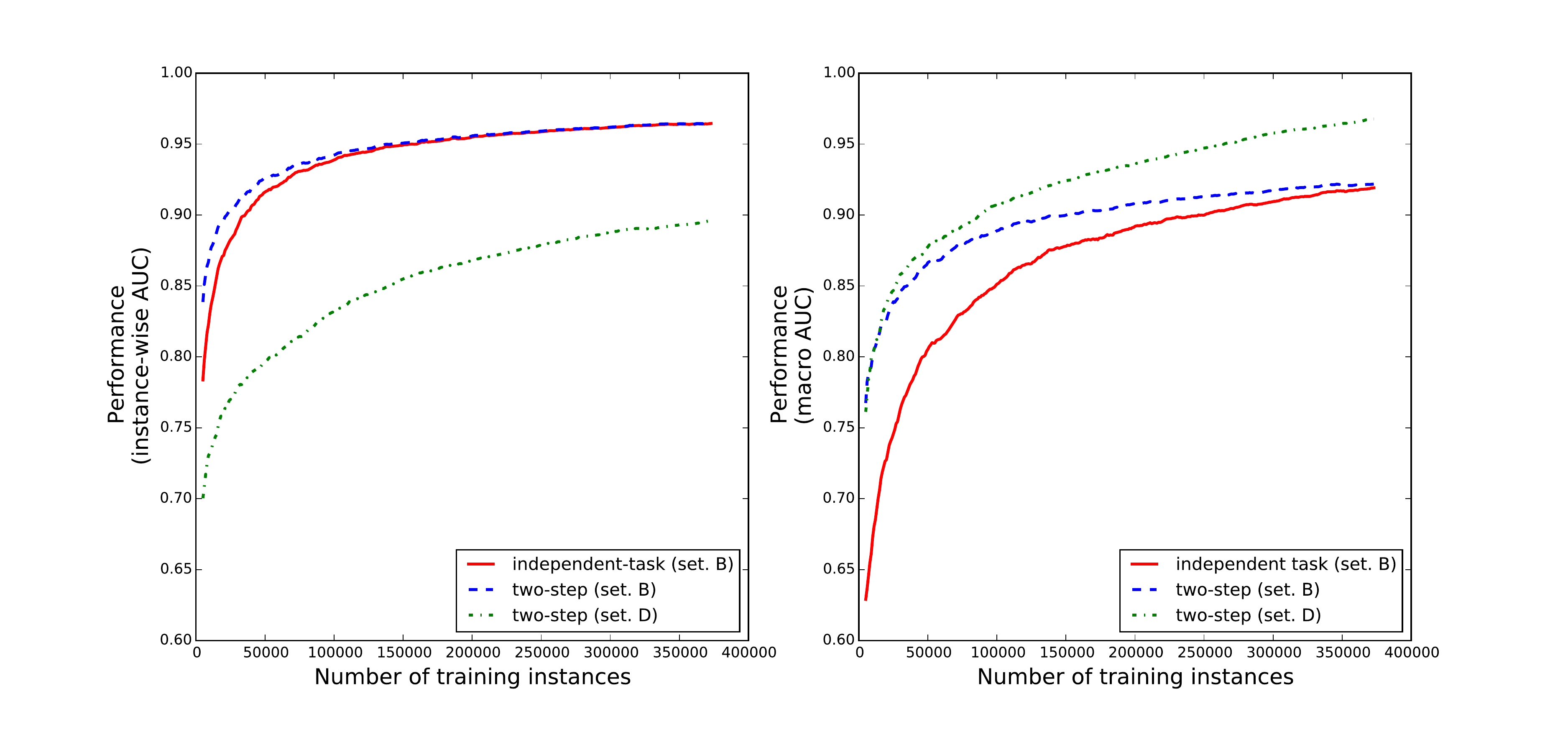} % requires the graphicx package
   \caption{Learning curves for the different models for the Wikipedia dataset.}
   \label{wikipediaLearningCurve}
\end{figure}

The accuracy of the predictions was measured either as AUC over the instances (i.e.~the capacity of the model to discriminate between a relevant and irrelevant label for a given article) and the macro AUC (i.e.~the capacity of the model to give a higher score to articles of a class compared to articles that are not of that class). These learning curves are presented in Figure~\ref{wikipediaLearningCurve}. For Setting B, for both evaluation schemes, using task features greatly improves the performance when the number of training instances is low. With more training data, both methods converge to a similar performance. Note that for the macro AUC evaluation, the learning rate is much slower compared to instance-wise AUC, implying that the former task is harder. Note that using two-step kernel ridge regression, the performance is never worse than for ridge regression. It is thus advisable to start with this method when few data points are available and update using Algorithm~\ref{twostepminibatchesinstances} when more labeled instances become accessible.

For Setting D (Figure~\ref{wikipediaLearningCurve}), we notice a different pattern. First note that the AUC values cannot be directly compared with those from Setting B, since the test set contains less and different labels. Here, for both evaluations, the test performance has not converged yet, implying that more training data would be beneficial for the model. This makes sense, as assigning novel labels is much harder than known labels. Nevertheless, using two-step kernel ridge regression one can both discriminate between relevant and irrelevant labels for a given article and between relevant and irrelevant articles for a given new label. For the macro AUC evaluation, the slope of the learning curve is again quite steep, indicating that for this problem more training data is required.

\section{Conclusions and perspectives}

In this work we have studied a new two-step kernel ridge regression method, in comparison to independent-task kernel ridge regression and Kronecker kernel ridge regression. We have shown that these methods are very related: Kronecker KRR is a special case of ordinary kernel ridge regression, two-step KRR is a special case of (Kronecker) KRR, while independent-task KRR is again a special case of two-step KRR. This unifying framework has allowed us to study the spectral interpretation for all these methods. Two-step KRR, which was found both theoretically and experimentally to work very well, has additional computational advantages. Because the model building can conceptually be decomposed into two independent steps, efficient novel hold-out tricks and algorithms for online learning can be obtained. Using the shortcuts we were experimentally able to study the learning rate of our methods. All experiments illustrate that the use of task features can significantly improve performance, but careful tuning is required.  An intriguing question for further research is whether it would be useful to combine other models than ridge regression or to even mix different types of spectral filters for the two steps.

Given the recent surge into fields such as zero-shot learning and extreme classification, two-step KRR has potential to become a standard tool for many problems. We believe that two-step KRR, as a special case of Kronecker KRR, is particularly useful in two specific situations. Firstly, when dealing with rather small-scale interaction datasets (hundreds or thousands rows and columns) with a lot of domain knowledge. Such situations are often encountered in biological applications, e.g.\ molecular interaction prediction or the study of species-species interactions. In these domains, kernel-based methods are already well established to encode prior knowledge. For such datasets, two-step KRR allows for fast and flexible model selection and validation, so that the researcher can easily explore what is possible with the data at hand. A second application would be in large-scale data applications. When dealing with huge output spaces, two-step KRR is a simple method to enforce prior knowledge on the outputs, while the suggested learning in mini-batches is an attractive alternative for gradient-based optimization.

\section*{Acknowledgements}
Part of this work was carried out using the Stevin Supercomputer Infrastructure at Ghent University, funded by Ghent University, the Hercules Foundation and the Flemish Government - department EWI.
\vskip 0.2in
\bibliographystyle{abbrv}
\bibliography{library,myBibliography}

\end{document}